\documentclass{article}
\usepackage{arxiv_preprint}

\usepackage[utf8]{inputenc}
\usepackage[T1]{fontenc}
\usepackage{amsmath,amssymb,amsthm,mathtools}
\usepackage{booktabs}
\usepackage{multirow}
\usepackage{graphicx}
\usepackage{xcolor}
\usepackage{microtype}
\usepackage[colorlinks=true,linkcolor=blue!60!black,citecolor=blue!60!black,urlcolor=blue!60!black]{hyperref}

\newcommand{\Sm}[1]{\mathcal{S}^{#1}}
\newcommand{\clo}{\mathcal{C}}
\DeclareMathOperator{\clr}{clr}
\DeclareMathOperator{\ilr}{ilr}
\newcommand{\dA}{d_{A}}
\newcommand{\KL}{\mathrm{KL}}
\newcommand{\Hb}{H_{b}}
\newcommand{\scafS}{\mathcal{S}}
\newcommand{\contC}{\mathcal{C}}
\newcommand{\pS}{p^{\scafS}}
\newcommand{\pC}{p^{\contC}}
\newcommand{\HS}{H^{\scafS}}
\newcommand{\HC}{H^{\contC}}
\newcommand{\lek}{\Lambda}
\newcommand{\JS}{\mathrm{JS}}
\newcommand{\dvec}{\boldsymbol{\delta}}
\usepackage{nicefrac}
\usepackage{enumitem}
\setlist[enumerate]{itemsep=1pt,topsep=2pt,parsep=0pt,leftmargin=2.2em}
\setlist[itemize]{itemsep=1pt,topsep=2pt,parsep=0pt,leftmargin=1.4em}
\AtBeginDocument{\setlength{\abovedisplayskip}{5pt}\setlength{\belowdisplayskip}{5pt}%
  \setlength{\abovedisplayshortskip}{2pt}\setlength{\belowdisplayshortskip}{3pt}}

\newtheorem{theorem}{Theorem}
\newtheorem{proposition}{Proposition}
\newtheorem{lemma}{Lemma}
\newtheorem{corollary}{Corollary}
\theoremstyle{definition}
\newtheorem{definition}{Definition}
\theoremstyle{remark}
\newtheorem{remark}{Remark}

\title{What Does Chain-of-Thought Entropy Measure?\\ A Channel Audit of Scaffolding, Routing, and Content}

\author{
Marios Papamichalis\thanks{\raggedright Human Nature Lab, Yale University, New Haven, CT 06511, USA. \href{mailto:marios.papamichalis@yale.edu}{\nolinkurl{marios.papamichalis@yale.edu}}}
\and Regina Ruane\thanks{\raggedright Department of Statistics and Data Science, The Wharton School, University of Pennsylvania, Philadelphia, PA, USA. \href{mailto:ruanej@wharton.upenn.edu}{\nolinkurl{ruanej@wharton.upenn.edu}}}}

\begin{document}
\maketitle

\begin{abstract}
Entropy over chain-of-thought tokens decides which tokens receive the policy gradient, which get pruned, and whether a run has collapsed, yet each such statistic reads a next-token distribution mixing three choices: whether to emit connective scaffolding, which connective, and what the substantive continuation should be. Designating a scaffold vocabulary subset separates the three, exactly, for entropy, Kullback--Leibler divergence, and the first-order entropy velocity of a softmax policy. We prove the raw and content conventions disagree about which position is the larger fork on an explicit open region, and bound answer diversity by the content channel plus a leakage term a measured witness certifies. Across twenty-three configurations the scaffold side carries up to $41\%$ of the raw high-entropy set; on a matched-tokenizer ladder, coupling changes only at the math-corpus step while the scaffold's entropy share keeps growing through distillation; a closed-form forecast from one channel correlation tracks selection retention over a $54$-point range to five points, unfitted. On compression, the content convention beats raw surprisal in every cell; an answer-leakage audit then corrects our own headline control: re-fed chains earn a quarter to a half of their accuracy from restated answers, and once stripped, no token scorer beats a random contiguous block.
\end{abstract}

\section{Introduction}

A reasoning model that answers ``\emph{Wait, let me reconsider: the total is 12}'' has made three different kinds of choice. It chose to emit a connective instead of content (``Wait''), it chose \emph{which} connective, and it chose the substantive continuation (``12'' instead of ``7''). The next-token distribution at every position of a chain of thought (CoT) mixes these three choices into one point on the probability simplex, and the statistics that currently drive the analysis and training of reasoning models are computed on that mixture.

The mixture is consequential because the diagnostics built on it feed decisions. \citet{cui2025entropy} tie reinforcement-learning performance to policy entropy through a covariance mechanism. \citet{wang2025beyond} find that roughly $20\%$ of CoT tokens carry high entropy, that these ``forking tokens'' are overwhelmingly connectives such as ``wait'' and ``thus'', and that restricting policy gradients to them matches full-gradient training. Selection also prunes: TokenSkip ranks CoT tokens for compression \citep{xia2025tokenskip}, R-KV evicts KV-cache entries \citep{cai2025rkv}, and step attribution ranks sentences by attention received \citep{bogdan2025anchors}. Each time, a scalar computed on the mixed distribution decides which tokens get gradient, which get pruned, and which get called important.

\begin{figure}[t]
\centering
\includegraphics[width=0.76\textwidth]{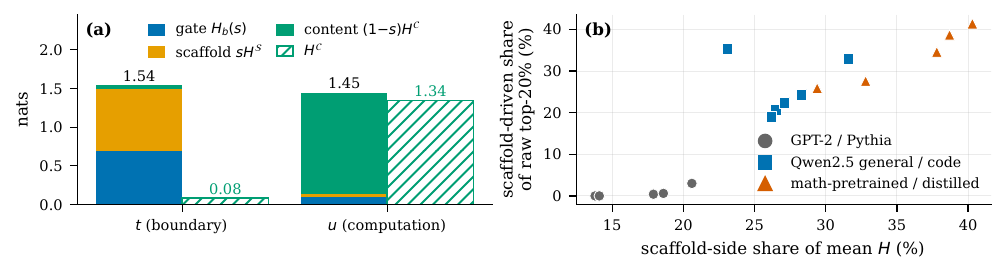}
\caption{\textbf{The conflation, exactly and at model scale.} (a) A two-position witness (closed form, Theorem~\ref{thm:reversal}): at a thought boundary $t$ the gate is a coin flip and the connective choice is open while the continuation is decided ($\HC=0.08$ nats); mid-computation at $u$ the scaffolding is quiet and the content forks ($\HC=1.34$). Raw entropy ranks $t$ above $u$, selecting the position whose content uncertainty is $16\times$ smaller. (b) Seventeen measured cells, \textsc{fixed} definition (Table~\ref{tab:frozen} lists fourteen): as the scaffold side takes more of mean $H$, the raw top-$20\%$ set becomes correspondingly scaffold-driven.}
\label{fig:punchline}
\end{figure}

Recent observations point at the mixture itself: entropy selection stops beating random pruning once formatting and numeric confounds are purified \citep{candussio2026demystifying}; formatting tokens destabilize importance rankings \citep{skipkv2025}; a positional heuristic matches a learned scorer in KV-cache eviction \citep{steele2026limits}; and realized trace tokens split into structural scaffolding and organic content well enough to drive compression \citep{zhao2026shorthand}, in apparent tension with connectives carrying \emph{high} entropy \citep{wang2025beyond}. The channels reconcile the tension: a boundary slot can have an uncertain \emph{gate} while its within-scaffold continuation is formulaic and its content is decided. These works classify tokens after the fact; none factorizes the predictive distribution itself, and none touches its training dynamics.

\textbf{Three estimands.} Designate a scaffold set $\scafS\subset V$ (connectives, discourse markers, formatting; \S\ref{sec:setup}) and write any next-token distribution as $p=(s\cdot\pS,\ (1-s)\cdot\pC)$, with $s=p(\scafS)$ the scaffold mass and $\pS,\pC$ the renormalized within-block distributions. Every question one asks of $p$ is then a question about the \emph{gate} (scaffold now, or content?), the \emph{scaffold channel} (which connective?), or the \emph{content channel} (which substantive continuation?). Raw entropy, top-$k$ sets, margins, and divergences answer an unstated mixture of the three. Our claim is that the three questions separate exactly, that the separation is measurable, and that it changes decisions on deployed pipelines.

\textbf{Contributions.}
\begin{enumerate}
\item \emph{Exact identities and an explicit disagreement region.} Entropy, Kullback--Leibler divergence, the Aitchison metric, and every weighted aggregate of CoT entropy split into gate, scaffold, and content parts (Proposition~\ref{thm:chain}), with sharp signed bounds on the surplus $G=H-\HC$. Raw and content entropy disagree about which of two positions is the larger fork exactly on the open wedge $\delta(\delta+\gamma)<0$, top-$\rho$ selections can be disjoint, and the retention and Jaccard overlap of the two selections are explicit functionals of the joint $(\HC,G)$ law (Theorem~\ref{thm:reversal}).
\item \emph{Leakage: what the scaffold can do for exploration.} In content-censored coordinates, trajectory entropy is exactly $\sum_t\mathbb{E}[\Hb(s_t)+(1-s_t)\HC_t]+\lek$ with $0\le\lek\le\sum_t\mathbb{E}[s_t\HS_t]$, and answer diversity obeys a matching bound with both sides attainable (Theorem~\ref{prop:exploration}). This converts the strongest objection to any scaffold/content split, that forks \emph{are} connectives, into a measurable quantity. We supply a one-sided witness $\kappa_t$ and measure it at $0.81$ nats on a distilled reasoning model, which bounds the inert-scaffold reading away.
\item \emph{Channel-resolved dynamics.} The covariance mechanism of \citet{cui2025entropy} splits exactly into a between-block term driven by the advantage gap between scaffold and content blocks plus mass-weighted within-block covariances. Under the exact categorical natural-gradient flow an advantage pattern constant on each block moves only the between-block term; under vanilla REINFORCE the same pattern additionally sharpens the scaffold channel and spreads the content channel (Proposition~\ref{prop:dynamics}, Corollary~\ref{cor:reinforce}). These describe the local logit velocity of a directly parameterized categorical policy; App.~\ref{app:proofs} separates that scope from shared-parameter network updates.
\item \emph{Measurement, a baseline-compared forecast, and a control that beats every scorer.} Across twenty-three configurations spanning thirteen checkpoints, two datasets, two evaluation modes, three scaffold definitions, and three seeds, the scaffold side carries $26$ to $41\%$ of the raw top-$20\%$ set on math-pretrained and distilled checkpoints against $0$ to $3\%$ on GPT-2 and Pythia (\textsc{fixed} definition; wider under \textsc{pos}, \S\ref{sec:stage1}). Four single-factor checkpoint contrasts at one tokenizer separate two axes occupancy statistics cannot: channel \emph{coupling} changes only at the math-corpus release, while the scaffold's entropy \emph{share} keeps rising through distillation (\S\ref{sec:ladder}). A zero-parameter forecast from the channel correlation halves a constant predictor's error and rank-orders the cells at Spearman $0.91$, with its one flip constant frozen and timestamped before three cells were measured (\S\ref{sec:forecast}). On CoT compression, scoring the content channel gains $2$ to $33$ points in five of five cells (clustered intervals), a factorial shows the gain is an interaction of score and eligibility, and an answer-leakage audit then corrects the positional control that beat every scorer: a quarter to a half of re-fed accuracy is restated-answer copying, and once stripped, no token scorer beats a random contiguous block (\S\ref{sec:decisions}). Two preregistered training interventions return nulls, reported as such (\S\ref{sec:training}).
\end{enumerate}

\section{Related work}
\label{sec:related}

\textbf{Entropy in RLVR.} \citet{cui2025entropy} give the collapse law and covariance mechanism; \citet{deng2025decomposing} decompose the entropy--performance exchange by stage, instance, and position; \citet{wang2025beyond} partition tokens by an entropy \emph{threshold}; \citet{tigerlab2025hierarchical} warn that most high-entropy tokens are not planning tokens; \citet{wu2025leash} show token entropy and answer diversity can diverge under RLVR; DASD routes a self-distillation loss by token entropy on top of GRPO \citep{zhang2026dasd}. None writes a mass/content identity; thresholding the mixture is what Theorem~\ref{thm:reversal} audits. Semantic entropy marginalizes surface variation at the answer level before measuring uncertainty \citep{kuhn2023semantic,farquhar2024detecting}; ours is the per-position exact counterpart, with the scaffold block as the surface class. \textbf{Structural and organic token splits.} \citet{zhao2026shorthand} classify realized tokens into structural scaffolding and organic content; the distinction here is the estimand, since they label emitted tokens while we factorize the next-token \emph{distribution}, its geometry, and its training-time velocity, then quantify through $\lek$ and $\kappa_t$ when the scaffold carries strategy. Concurrent lines: an explicit reasoning scaffold distills \citep{reasoningscaffold2025}; controlling logical connectives selects reasoning paths \citep{wherereasoningbreaks2026}, interventional evidence for the routing $\kappa_t$ measures; format constraints alone induce diversity collapse \citep{priceformat2025}, which localizes to post-training \citep{karouzos2026diversity}. \textbf{Scaffold tokens and intra-thought sinks.} Reflection tokens are information peaks \citep{qian2025thinking}; suppressing ``wait'' trades length for accuracy \citep{wang2025nowait}; the think delimiter is a reasoning attention sink \citep{li2026syncthink}, the BOS sink survives distillation \citep{zhang2025reasoning}, and the lineage runs back to pause and filler tokens \citep{goyal2024pause,pfau2024dot}. These document the phenomenon token by token; we treat it as a composition. \textbf{Compression, eviction, attribution.} TokenSkip \citep{xia2025tokenskip} prunes with a learned LLMLingua-2 importance classifier \citep{pan2024llmlingua2}; \citet{li2026stepentropy} prune whole steps by step entropy at $7$ to $14$B; R-KV evicts by redundancy-aware attention \citep{cai2025rkv}; SkipKV shows formatting destabilizes the rankings \citep{skipkv2025}; \citet{candussio2026demystifying} show entropy selectors collapse to random once confounds are purified, \citet{singh2026functional} ask whether attention encodes functional importance at all, and none of these evaluations strips restated answers from the re-fed chain, the inflation \S\ref{sec:decisions} measures. Thought Anchors attribute at sentence level and name the positional pressure on those scores while correcting for it only minimally \citep{bogdan2025anchors}. Positional baselines exist in neighbouring settings with opposite verdicts: recency windows are standard in KV-cache eviction \citep{xiao2024streamingllm,zhang2023h2o,li2024snapkv}, where \citet{steele2026limits} finds a positional heuristic matching a learned scorer, while EPiC finds head keeps above random above tail in training-data condensation \citep{jia2025epic}; \citet{garcia2026lastword} shows terminal answer statements confound suffix sensitivity. \S\ref{sec:decisions} runs the positional control in inference-time compression, where these pipelines do not. \textbf{Compositional data analysis.} Log-ratio geometry originates with \citet{aitchison1982}, partition balances with \citet{egozcue2005balances}, subcompositional coherence with \citet{aitchison1992,greenacre2011incoherence}; concurrent work finds log-ratio structure in attention \citep{lee2026invariants,zhu2026aitchisonattention,yamada2026polyilr}. The CoT chain rule, dynamics decomposition, exploration bound, and re-audit protocol are new here. \textbf{RLVR context.} GRPO \citep{shao2024deepseekmath}, R1-style distillation \citep{deepseek2025r1}, open small-scale recipes \citep{zeng2025simplerl,tinyzero2025}, and the pass@$k$ debate \citep{yue2025limit} fix the setting; GSM8K and MATH \citep{cobbe2021gsm8k,hendrycks2021math} the tasks.

\section{Setup: three channels of a chain-of-thought distribution}
\label{sec:setup}

Let $V$ be the vocabulary, $|V|=D$, and let $p_t=\pi_\theta(\cdot\mid x,y_{<t})$ be the next-token distribution at CoT position $t$. Fix a nonempty proper \emph{scaffold set} $\scafS\subset V$, write $\contC=V\setminus\scafS$, $m=|\scafS|$, $k=|\contC|$. Every $p$ with $0<p(\scafS)<1$ factors uniquely as
\begin{equation}
p \;=\; \big(s\cdot \pS,\ (1-s)\cdot \pC\big),\qquad s=p(\scafS),\quad \pS=\clo(p|_{\scafS}),\quad \pC=\clo(p|_{\contC}),
\label{eq:factor}
\end{equation}
where $\clo$ renormalizes. We call $s$ the \emph{gate mass}, $\pS$ the \emph{scaffold channel}, and $\pC$ the \emph{content channel}; \emph{content} names the complement block $\contC$ and carries no semantic claim that every non-scaffold token is substantive (Remark~\ref{rem:partition}). For softmax policies $p=\clo(e^{z})$ this factorizes the logits, with gate log-odds $\log\frac{s}{1-s}=\mathrm{lse}(z|_{\scafS})-\mathrm{lse}(z|_{\contC})$. Entropy is in nats with $0\log 0=0$; $\Hb(s)=-s\log s-(1-s)\log(1-s)$; $\HS=H(\pS)$, $\HC=H(\pC)$, and the \emph{scaffold surplus} is $G=H-\HC$.

\begin{remark}[Any finite partition]
\label{rem:partition}
Nothing below is specific to two blocks: any finite partition $V=\bigsqcup_b B$ gives $H(p)=H(\text{block})+\sum_b m_b H(p^{B})$, with divergence, geometry, and dynamics analogues. The scaffold list is tiered; App.~\ref{app:scaffold} reports the three-block refinement \emph{formatting / strategic-control / content}, which answers the objection that ``Wait'' can be strategy while a newline is not.
\end{remark}

\textbf{The scaffold set is a modelling choice, and the mathematics does not depend on it.} Every identity below holds for \emph{any} $\scafS$; what depends on $\scafS$ is which question the channels answer. We consider three constructions (App.~\ref{app:scaffold}): \textsc{fixed}, a curated list of $71$ reasoning connectives plus formatting tokens, matched across tokenizer variants, mathematical operators excluded; \textsc{pos}, adding a closed-class function-word list; \textsc{data}, the most frequent non-numeric token types of the model's own CoTs. Representative cells run under all three and principal numbers carry their sensitivity (App.~\ref{app:additional}).

\section{Theory}
\label{sec:theory}

Proofs, formal statements with equality cases, and the remaining instrument decompositions are in Apps.~\ref{app:proofs} and~\ref{app:instruments}. Statements (i) and (ii) below are the classical grouping and chain identities \citep{cover2006elements}, which is why they are a proposition; the load-bearing results are the reversal geometry and the leakage bound.

\begin{proposition}[Channel identities and sharp surplus bounds]
\label{thm:chain}
Let $p,q$ have gate masses $s,t\in(0,1)$. Then
\begin{enumerate}
\item[\rm(i)] $H(p)=\Hb(s)+s\,\HS(p)+(1-s)\,\HC(p)$;
\item[\rm(ii)] $\KL(p\|q)=\KL_{\mathrm b}(s\|t)+s\,\KL(\pS\|q^{\scafS})+(1-s)\,\KL(\pC\|q^{\contC})$ as an identity in $[0,\infty]$;
\item[\rm(iii)] $G(p)=\Hb(s)+s(\HS(p)-\HC(p))$, and over all $p$ with $s$ fixed, $\min G=\Hb(s)-s\log k$, attained exactly when $\pS$ is a point mass and $\pC$ is uniform, and $\max G=\Hb(s)+s\log m$, attained exactly when $\pS$ is uniform and $\pC$ is a point mass;
\item[\rm(iv)] every weighted sum $\sum_j\alpha_jH(p_j)$, and every stopped weighted sum $\mathbb{E}\sum_{j\le N}W_jH(P_j)$ with $\mathbb{E}\sum_{j\le N}|W_j|<\infty$, splits channelwise by (i), with no independence or stopping-time assumption.
\end{enumerate}
\end{proposition}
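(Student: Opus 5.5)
The proof proceeds item by item, with (i) and (ii) doing all the work and (iii), (iv) following from them.

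For (i), substitute the factorization \eqref{eq:factor} into $H(p)=-\sum_v p(v)\log p(v)$ and split the sum over $\scafS$ and $\contC$. On $\scafS$ we have $p(v)=s\,\pS(v)$, so $\log p(v)=\log s+\log\pS(v)$. Summing against $s\,\pS(v)$ gives $-s\log s+s\,\HS$. The content block gives $-(1-s)\log(1-s)+(1-s)\HC$ in the same way, and the two $\log$ terms together are $\Hb(s)$. The convention $0\log0=0$ covers zero coordinates, since those terms vanish on both sides.

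For (ii), repeat the computation with $\log\frac{p(v)}{q(v)}=\log\frac{s}{t}+\log\frac{\pS(v)}{q^{\scafS}(v)}$ on $\scafS$, and analogously on $\contC$ with $1-s,1-t$. The only care needed is the extended-value claim. Because $s,t\in(0,1)$, the binary term $\KL_{\mathrm b}(s\|t)$ is finite, so the left side is $+\infty$ exactly when some $v$ has $p(v)>0=q(v)$. That happens exactly when $\pS\not\ll q^{\scafS}$ or $\pC\not\ll q^{\contC}$. All remaining summands are nonnegative or finite, so no $\infty-\infty$ arises. I will state this explicitly, since it is the one place where an identity in $[0,\infty]$ needs justification.

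For (iii), subtract $\HC$ from (i) to get $G=\Hb(s)+s(\HS-\HC)$. With $s$ fixed, $\pS$ and $\pC$ range independently over the simplices $\Delta_m$ and $\Delta_k$. So $G$ is extremized by extremizing $\HS\in[0,\log m]$ and $-\HC\in[-\log k,0]$ separately. The equality cases follow from the classical facts that $H=0$ iff the distribution is a point mass and $H=\log n$ iff it is uniform. Since $s>0$ the coefficient is strictly positive, so "exactly when" is sharp. I will also note that every $\pS,\pC$ pair is realized by some $p$ with gate mass $s$, which makes the bounds attained.

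For (iv), apply (i) termwise; this requires $s_j\in(0,1)$ at every position. If boundary gates are allowed, I will adopt the convention that the undefined channel is weighted by zero, so (i) still holds. For the stopped version, write $\sum_{j\le N}W_jH(P_j)=\sum_j\mathbf 1\{j\le N\}W_j[\Hb(s_j)+s_j\HS_j+(1-s_j)\HC_j]$ pathwise. Each channel term lies in $[0,\log D]$, so its absolute sum is dominated by $\log D\sum_{j\le N}|W_j|$, which is integrable by hypothesis. Linearity of expectation then splits $\mathbb E$ across the three channel sums by Fubini/dominated convergence, with no independence or optional-stopping argument needed. This integrability bookkeeping is the only step I expect to need care; everything else is routine algebra.
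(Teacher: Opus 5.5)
Your proposal is correct and follows the paper's proof essentially step for step: the blockwise substitution $p_v=s\,\pS_v$ for (i) and (ii), with the same case split on whether $p\ll q$; independent extremization of $\HS\in[0,\log m]$ and $\HC\in[0,\log k]$ at fixed $s$, with attainment by explicit construction, for (iii); and for (iv) a pathwise identity dominated by $\log D\sum_{j\le N}|W_j|$ followed by linearity of expectation. The one quibble is that Fubini or dominated convergence is not needed in (iv): since $N<\infty$ almost surely, each path gives a finite-sum identity, and you never interchange $\mathbb E$ with the sum over $j$.
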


Part (iv) makes the decomposition usable on training curves: any collapse curve or weighted aggregate splits without further assumptions, and the bounds in (iii) put the surplus at a few nats, the scale forking-token thresholds live on. The same partition splits the Aitchison geometry into a balance coordinate and two within-block coordinates (Lemma~\ref{lem:pyth}), and at a fixed prefix $\pC$ is the maximal invariant of scaffold-supported logit perturbations, so a per-position statistic ignores those perturbations if and only if it factors through $\pC$ (Proposition~\ref{prop:invariance}). Trajectory-level independence is a separate question, and it is exactly the leakage of Theorem~\ref{prop:exploration}.

\begin{definition}[Exact collapse attribution, and its scope]
\label{def:attr}
Between two snapshots $(s_0,\HS_0,\HC_0)$ and $(s_1,\HS_1,\HC_1)$ of the \emph{same context}, set $\Delta H_{\mathrm{cont}}:=(1-\tfrac{s_0+s_1}{2})(\HC_1-\HC_0)$ and $\Delta H_{\mathrm{scaf}}:=\Delta H-\Delta H_{\mathrm{cont}}$: the average of the two update orders of the exact telescoping of Proposition~\ref{thm:chain}(i), exact per matched context and on average over any fixed context set, \emph{not} exact on separately averaged $(\bar s,\overline{\HS},\overline{\HC})$; contexts are unmatched across on-policy checkpoints. The protocol therefore attributes on a fixed teacher-forced context set and reports, on-policy, only the linear aggregates $\Delta\mathbb{E}[\Hb]$, $\Delta\mathbb{E}[s\HS]$, $\Delta\mathbb{E}[(1-s)\HC]$, which sum to $\Delta\mathbb{E}[H]$ identically.
\end{definition}

\subsection{Ranking reversals and the selection wedge}

Forking-token selection, entropy-based pruning, and step attribution all rank positions. Ranking by raw $H$ and ranking by $\HC$ are different orders, and the disagreement region is explicit.

\begin{theorem}[Fork-verdict reversal]
\label{thm:reversal}
Let $t,u$ be positions with $|\contC|\ge2$, and set $\delta:=\HC_t-\HC_u$ and $\gamma:=G_t-G_u$, so that $H_t-H_u=\delta+\gamma$. Away from ties,
\begin{enumerate}
\item[\rm(i)] the raw and content conventions disagree about which position is the larger fork exactly on the wedge $\delta(\delta+\gamma)<0$, an open set of positive Lebesgue measure in the joint parameters;
\item[\rm(ii)] there are position populations for which the top-$\rho$ sets under the two conventions are disjoint for every $\rho\le1/2$;
\item[\rm(iii)] for any population of positions, the pairwise flip probability and the expected retention and Jaccard overlap of the two top-$\rho$ sets are explicit functionals of the joint law of $(\HC,G)$.
\end{enumerate}
\end{theorem}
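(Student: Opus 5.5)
Part (i) is algebraic once the surplus $G$ is in place. By definition $H=\HC+G$ at each position, so $H_t-H_u=\delta+\gamma$. Away from ties ($\delta\neq0$ and $\delta+\gamma\neq0$), the content convention calls $t$ the larger fork iff $\delta>0$, and the raw convention does so iff $\delta+\gamma>0$. The verdicts therefore disagree exactly when the two signs differ, which is $\delta(\delta+\gamma)<0$.

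For openness and positive measure, I will parameterize each position by $(s,\pS,\pC)$ through the factorization \eqref{eq:factor}. In these coordinates $\HC$ and $G=\Hb(s)+s(\HS-\HC)$ (Proposition~\ref{thm:chain}(iii)) are continuous, so the wedge is the preimage of an open set, hence open. Nonemptiness then gives positive Lebesgue measure. To show it is nonempty I will exhibit one interior point, modelled on Figure~\ref{fig:punchline}(a):
\begin{itemize}
\item at $t$, take $s=1/2$, $\pS$ near uniform, and $\pC$ near a point mass, so $\HC_t$ is small and $G_t$ is large;
\item at $u$, take $s$ small and $\pC$ spread over two content tokens (this is where $|\contC|\ge2$ is needed), so $\HC_u\approx\log2$ while $G_u$ is small.
\end{itemize}
Then $\delta<0$ and $\delta+\gamma>0$, strictly, and a small neighbourhood of this point stays in the wedge. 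The sharp bounds in Proposition~\ref{thm:chain}(iii) guarantee that $G$ can exceed any $\HC$ gap of order $\log 2$ once $m\ge2$ (or via $\Hb(1/2)=\log 2$ alone, supplemented by $\HS$). I will check the constants so the example works even when $m=1$; there I will use $\Hb(s)$ near $\log2$ against $\HC_u$ slightly below $\log 2$ by taking a nonuniform $\pC$ at $u$.

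For (ii), I will build a two-type population. Type A has large $G_A$ and small $\HC_A$, and type B has small $G_B$ and larger $\HC_B$, with $\HC_A+G_A>\HC_B+G_B$ and $\HC_A<\HC_B$, which is a wedge configuration from (i). Each type makes up half the population, with small continuous jitter so there are no ties. For $\rho\le1/2$ the raw top-$\rho$ set then lies entirely in A and the content top-$\rho$ set lies entirely in B, so the two sets are disjoint. The only care needed is keeping the jitter smaller than the gaps.

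For (iii), let $F$ be the joint law of $X=(\HC,G)$. The pairwise flip probability is $\Pr[\delta(\delta+\gamma)<0]$ for independent copies $X,X'$, which is an integral of $F\otimes F$ over the wedge. Let $q^{\mathrm{C}}_\rho$ and $q^{\mathrm{H}}_\rho$ be the upper $\rho$-quantiles of $\HC$ and $\HC+G$. The expected retention is
\[ r(\rho)=\rho^{-1}\Pr\bigl[\HC>q^{\mathrm C}_\rho,\ \HC+G>q^{\mathrm H}_\rho\bigr], \]
and the Jaccard overlap is $r\rho/(2\rho-r\rho)=r/(2-r)$. For a finite population I will state the expectation over i.i.d.\ draws, or use the population quantiles; I expect the main obstacle to be phrasing ``explicit'' cleanly at finite sample size, versus ties and atoms in $F$. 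I will handle it by stating the results at the population level under continuity of the marginals, with ties broken by an independent uniform.
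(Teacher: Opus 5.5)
Your proposal is correct and follows the paper's route. The steps are the same: the identity $H=\HC+G$ with a sign comparison, openness by continuity plus one explicit interior witness, a half-and-half two-type population for disjoint top-$\rho$ sets, and the population formulas $\omega_\rho/\rho$ for retention and $\omega_\rho/(2\rho-\omega_\rho)=r/(2-r)$ for Jaccard. The only real difference is the witness: the paper pairs $p_\varepsilon$ (scaffold mass $1-\varepsilon$, uniform content) against $q_\varepsilon$ (gate mass $\tfrac12$, near-point-mass blocks), whose raw gap tends to $-\log 2$ and content gap to $\log k$ for every $m\ge1$, so it needs no separate $m=1$ case.
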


Part (iii) is what makes the theorem predictive as well as diagnostic. If $(H,\HC)$ is jointly Gaussian across positions with correlation $r$, then the pairwise flip probability is $\tfrac12-\tfrac{1}{\pi}\arcsin r$ and top-$\rho$ retention is a bivariate orthant probability, so both are determined by $r$ alone. We register that forecast in \S\ref{sec:forecast}, test it against twenty-three measured configurations, and score its one constant on three of them that were held out.

\subsection{What the scaffold can and cannot do for exploration}

The strongest objection to any scaffold/content split is that forks \emph{are} connectives, so the scaffold channel is where the strategy lives. The objection is correct in principle, and the next result says how much diversity the scaffold can carry, through what mechanism, and how to bound it. Let $Z_t$ be the content-censored coordinate at step $t$ (the token if it is content, a single symbol recording ``scaffold'' otherwise), and $\alpha$ the answer.

\begin{theorem}[Exploration bound with exact leakage]
\label{prop:exploration}
With $\lek:=\sum_t I(Z_t;Y_{<t}\mid Z_{<t})$,
\begin{equation}
H(Z_{1:T})=\sum_t\mathbb{E}\big[\Hb(s_t)+(1-s_t)\HC_t\big]+\lek,
\qquad
0\le\lek\le\sum_t\mathbb{E}\big[s_t\HS_t\big],
\label{eq:leak}
\end{equation}
and $H(\alpha)\le\sum_t\mathbb{E}[\Hb(s_t)+(1-s_t)\HC_t]+\lek+H(\alpha\mid Z_{1:T})$. The residual $H(\alpha\mid Z_{1:T})$ vanishes exactly when the answer is a function of the content-censored trajectory, every equality case is characterized, and both bounds are simultaneously attainable.
\end{theorem}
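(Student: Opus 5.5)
The plan is to apply the chain rule to the content-censored process $Z_{1:T}$ and compare each per-step term with the full-history conditional. By the chain rule, $H(Z_{1:T})=\sum_t H(Z_t\mid Z_{<t})$. We then write
\[
H(Z_t\mid Z_{<t}) \;=\; H(Z_t\mid Y_{<t},Z_{<t}) + I(Z_t;Y_{<t}\mid Z_{<t}).
\]
Since $Z_{<t}$ is a function of $Y_{<t}$, the first term equals $H(Z_t\mid Y_{<t})=\mathbb{E}[H(Z_t\mid Y_{<t}=y_{<t})]$. Conditionally on the full prefix, $Z_t$ is distributed as $p_t$ with the scaffold block collapsed to one atom. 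Applying Proposition~\ref{thm:chain}(i) to that collapsed distribution (a one-atom scaffold block has zero within-block entropy) gives $\Hb(s_t)+(1-s_t)\HC_t$. Summing over $t$ yields the identity in \eqref{eq:leak}, with $\lek$ exactly the summed conditional mutual informations.

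For the leakage bounds, $\lek\ge0$ holds because mutual information is nonnegative. For the upper bound, the plan is a data-processing argument on the full trajectory. We have $H(Y_{1:T})=\sum_t\mathbb{E}[H(p_t)]$, and $H(Y_{1:T})\ge H(Z_{1:T})$ because $Z_{1:T}$ is a function of $Y_{1:T}$. Expanding $H(p_t)$ by Proposition~\ref{thm:chain}(i) and subtracting the identity gives
\[
\lek \;\le\; \sum_t\mathbb{E}[s_t\HS_t].
\]
Equality holds iff $Y_{1:T}$ is recoverable from $Z_{1:T}$, i.e.\ iff the scaffold token identities are a.s.\ determined by the censored trajectory. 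Equality in the lower bound holds iff each $Z_t$ is conditionally independent of $Y_{<t}$ given $Z_{<t}$, meaning the scaffold choices never route the content.

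For the answer bound, we use $H(\alpha)\le H(\alpha,Z_{1:T})=H(Z_{1:T})+H(\alpha\mid Z_{1:T})$ and substitute the identity. Equality holds iff $H(Z_{1:T}\mid\alpha)=0$, i.e.\ the censored trajectory is a function of the answer. The residual vanishes iff $\alpha$ is a function of $Z_{1:T}$. Together these give equality throughout iff $\alpha$ and $Z_{1:T}$ determine each other.

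The main obstacle is the final claim that both bounds are simultaneously attainable. I would handle it with explicit small constructions.
\begin{itemize}
\item \emph{Upper leakage bound and answer bound:} take $T=2$ with $|\scafS|=2$. At step 1 the gate is $s_1=1$ and the scaffold choice is uniform. At step 2 the gate is $s_2=0$ and the content token is a deterministic, distinct function of the step-1 scaffold token. Let $\alpha=Z_2$. Then $\lek=\log2=\mathbb{E}[s_1\HS_1]$, $Z_{1:T}$ recovers $Y_{1:T}$, and $\alpha\leftrightarrow Z_{1:T}$ is a bijection, so all equalities hold.
\item \emph{Lower leakage bound:} take content chosen independently of the scaffold identities, again with $\alpha=Z_{1:T}$.
\end{itemize}
Two technical points need care. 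Gates of $0$ or $1$ fall outside Proposition~\ref{thm:chain}'s standing assumption $s\in(0,1)$, so the factorization must be extended with the $0\log0$ convention. Alternatively, one can perturb the construction and take limits, although that yields only approximate attainment, so I would prefer the convention route. I would also check that the equality characterization of the data-processing step is stated almost surely.
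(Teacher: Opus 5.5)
Your proposal is correct and follows the paper's proof essentially step for step. It uses the same per-step split $H(Z_t\mid Z_{<t})=H(Z_t\mid Y_{<t})+I(Z_t;Y_{<t}\mid Z_{<t})$ with grouping at each prefix, the same comparison $H(Y_{1:T})=H(Z_{1:T})+H(Y_{1:T}\mid Z_{1:T})$ giving $\lek=\sum_t\mathbb{E}[s_t\HS_t]-H(Y_{1:T}\mid Z_{1:T})$, the same two-way chain rule on $(\alpha,Z_{1:T})$, and the same two-step coin-in-the-scaffold witness; the paper also handles gates of $0$ or $1$ by the weighted-zero convention you prefer, and adds a full-support perturbation showing the equalities can be approached with strictly positive kernels.
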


The scaffold-internal channel buys answer diversity only through history leakage, and a coin flipped \emph{inside} the scaffold can carry a full bit of it. Estimating $\lek$ is open; a one-sided witness is measurable now. Let $\kappa_t$ be the mutual information between the connective emitted at $t$ and the first content token that follows: $\kappa_t>0$ at any prefix implies $\lek>0$, and the converse fails, since routing acting only beyond the first content token leaves $\kappa_t=0$ with $\lek>0$ (App.~\ref{app:proofs}). So $\kappa_t$ certifies leakage without estimating it, the direction the inertness question needs.

\subsection{Channel-resolved entropy dynamics}

\begin{proposition}[Channel dynamics and two optimizer regimes]
\label{prop:dynamics}
For a softmax policy the first-order entropy velocity splits exactly into a \emph{between-block} term proportional to the advantage gap between the scaffold and content blocks plus mass-weighted \emph{within-block} covariance terms, with the gate, block, and block-entropy velocities in closed form.
\end{proposition}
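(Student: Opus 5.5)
The plan is to compute directly in logit coordinates. Let $p=\clo(e^{z})$, and let the logits move with velocity $\dot z$. In the policy-gradient regime we set $\dot z_i = \eta\,p_i A_i$ (vanilla) or $\dot z_i=\eta A_i$ (natural gradient), but the identity itself should be stated for an arbitrary $\dot z$.

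\textbf{Step 1: raw entropy velocity.} The starting point is the classical formula
\[
\dot H=-\mathrm{Cov}_{p}\big(\log p,\ \dot z\big),
\]
which follows from $\dot p_i=p_i(\dot z_i-\mathbb{E}_p\dot z)$ and $\sum_i\dot p_i=0$. This is the mechanism of \citet{cui2025entropy}.

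\textbf{Step 2: split the covariance by the law of total covariance over the block label $B\in\{\scafS,\contC\}$.} Under $p$ the label has law $(s,1-s)$. On block $b$ we have $\log p_i=\log m_b+\log p^{b}_i$, where $m_{\scafS}=s$ and $m_{\contC}=1-s$. Write $\bar z_b:=\mathbb{E}_{p^b}\dot z$. The between-block term is then
\[
s(1-s)\,\big(\log\tfrac{s}{1-s}+\mathbb{E}_{\pS}\log\pS-\mathbb{E}_{\pC}\log\pC\big)\,(\bar z_{\scafS}-\bar z_{\contC}).
\]
Under the policy-gradient parameterization, $\bar z_{\scafS}-\bar z_{\contC}$ is $\eta$ times the (mass-weighted, for vanilla) advantage gap. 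The within-block terms are $s\,\mathrm{Cov}_{\pS}(\log\pS,\dot z)+(1-s)\,\mathrm{Cov}_{\pC}(\log\pC,\dot z)$, with the sign carried by $\dot H=-\mathrm{Cov}$.

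\textbf{Step 3: closed forms for the pieces.} I will record the following directly:
\begin{itemize}
\item gate velocity: $\dot s=s(1-s)(\bar z_{\scafS}-\bar z_{\contC})$;
\item block velocities: $\dot p^{b}_i=p^b_i(\dot z_i-\bar z_b)$, because the renormalization $\clo$ on a block only sees within-block logit differences;
\item block-entropy velocities: $\dot H^{b}=-\mathrm{Cov}_{p^b}(\log p^b,\dot z)$.
\end{itemize}
As a consistency check, differentiating Proposition~\ref{thm:chain}(i),
\[
\dot H=\Hb'(s)\dot s+\dot s(\HS-\HC)+s\dot H^{\scafS}+(1-s)\dot H^{\contC},
\]
should reproduce Step 2. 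It does, since $\Hb'(s)=\log\frac{1-s}{s}$ and $\HS-\HC$ matches the bracketed term up to sign. This also shows the between-block term is exactly $(\Hb'(s)+\HS-\HC)\dot s$.

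\textbf{Step 4: the two optimizer regimes.} Take $A$ constant on each block. Under the natural-gradient flow $\dot z=\eta A$, the vector $\dot z$ is constant within each block, so both within-block covariances vanish and only the between-block term moves. Under vanilla REINFORCE $\dot z_i=\eta p_iA_i$, the within-block velocity is proportional to $A_b\,p_i=A_b\,m_b\,p^b_i$. Then $\mathrm{Cov}_{p^b}(\log p^b,p^b)\ge0$ holds, since $\log x$ and $x$ are comonotone. So $\dot H^b$ has the sign of $-A_b$: the block with positive advantage sharpens and the block with negative advantage spreads. This gives the Corollary's regime.

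\textbf{Main obstacle.} The main obstacle is bookkeeping rather than depth. One part is getting the sign and the exact "advantage gap" normalization consistent between the two parameterizations (mass-weighted for vanilla). The other is handling the boundary cases $s\in\{0,1\}$, where the factorization is undefined and I would restrict to the interior.
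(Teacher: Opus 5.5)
\textbf{Steps 1--3.} These are correct and prove the identity; your route differs only mildly from the paper's. The paper computes $\dot p_a=p_a(u_a-\bar u)$, $\dot m_B$, $\dot s$, $\dot p^B_a$ and $\dot H^B$ one at a time. It then obtains the split by differentiating the grouping identity $H=\Hb(s)+s\HS+(1-s)\HC$, which is exactly your ``consistency check''. The gate term therefore appears as $K\dot s=\frac{d}{dt}\Hb(s)+(\HS-\HC)\dot s$ with $K=\log\frac{1-s}{s}+\HS-\HC$, tying the dynamics to the static chain rule.

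Your primary route applies the law of total covariance over the block label to $-\operatorname{Cov}_p(\log p,u)$. It produces all three terms in one step without the grouping identity, and it makes the between/within language literal: the gate term is the across-block covariance of the conditional means $\log m_b-H^b$ and $\bar u_b$. Either route is complete. Your boundary worry is vacuous, since finite logits force $p_a>0$ and $0<s<1$.

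\textbf{Step 4 needs two repairs.}

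\emph{(a) The vanilla update must be centered.} The expected REINFORCE logit velocity is $\eta\,p_a(A_a-\bar A)$ with $\bar A=\mathbb{E}_{p}A$, not $\eta\,p_aA_a$. The two agree only for centered $A$. In your form, the rule ``$\dot H^b$ has the sign of $-A_b$'' depends on the baseline: with $A_{\scafS}=2$, $A_{\contC}=1$ it would sharpen both blocks, whereas the true expected update spreads the content block.

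Centering a block-constant advantage gives the values $(1-s)\Delta A$ on $\scafS$ and $-s\Delta A$ on $\contC$. Hence
$\Gamma_{\scafS}=-\eta s^2(1-s)\Delta A\,V_{\scafS}$ and $\Gamma_{\contC}=+\eta s(1-s)^2\Delta A\,V_{\contC}$, with $V_B=\operatorname{Cov}_{p^B}(\log p^B,p^B)\ge 0$. So for $\Delta A>0$ the scaffold sharpens and the content spreads, whatever the baseline.

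For the same reason, your ``mass-weighted gap'' in Step 2 is not right. In the vanilla block-constant case the correct expression is $\bar u_{\scafS}-\bar u_{\contC}=\eta s(1-s)\Delta A\,(\|\pS\|_2^2+\|\pC\|_2^2)$, which is collision-weighted.

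\emph{(b) The natural-gradient velocity needs a derivation.} You take the flow to be $\dot z=\eta A$; the paper derives it. One has $g=FA$ with $F=\operatorname{diag}(p)-pp^\top$. Since $x^\top Fx=\operatorname{Var}_p(x)$ vanishes only on constants, $\ker F=\operatorname{span}\{\mathbf 1\}$. Therefore $F^\dagger F=I-\frac{1}{|V|}\mathbf 1\mathbf 1^\top$ and $u^{\mathrm{NG}}=\eta(A-\langle A\rangle_{\mathrm{unif}}\mathbf 1)$. The constant shift cancels in the softmax, so your conclusion that only the between-block term moves follows once this step is supplied.
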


\begin{corollary}[Which channels an advantage pattern moves]
\label{cor:reinforce}
Under the exact categorical natural-gradient flow, an advantage that is constant on each block moves only the between-block term, and the sign of that term is characterized by the advantage gap. Under vanilla REINFORCE the same advantage pattern additionally sharpens the scaffold channel and applies a first-order \emph{spreading} pressure to the content channel, by exact comonotone-covariance formulas. A reward that depends only on the block sequence need not induce a block-constant conditional advantage, so these are statements about the advantage pattern at a position; the scope remark in App.~\ref{app:proofs} separates both regimes from shared-parameter network updates.
\end{corollary}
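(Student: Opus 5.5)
The plan is to work directly on the logit vector $z$ of a single categorical $p=\clo(e^{z})$ at a fixed prefix, with an advantage vector $A$ on $V$. I will write $\bar A=\mathbb{E}_p[A]$ and, for the block-constant case, $A|_{\scafS}\equiv a_S$, $A|_{\contC}\equiv a_C$, $\Delta:=a_S-a_C$. Then $\bar A=s a_S+(1-s)a_C$, so $a_S-\bar A=(1-s)\Delta$ and $a_C-\bar A=-s\Delta$. The two optimizer regimes differ only in the logit velocity $\dot z$:
\begin{itemize}
\item the exact categorical natural-gradient flow gives $\dot z_j=A_j-\bar A$ (up to an irrelevant common shift), i.e.\ the replicator dynamics $\dot p_j=p_j(A_j-\bar A)$;
\item expected vanilla REINFORCE gives $\dot z_j=p_j(A_j-\bar A)$.
\end{itemize}
In both regimes I would feed $\dot z$ into the closed-form velocities of Proposition~\ref{prop:dynamics}, namely the gate velocity $\dot s$ and the block velocities $\dot z|_{\scafS},\dot z|_{\contC}$. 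The within-block entropy velocity has the standard softmax form $\dot\HS=-s^{-1}\cdot s\,\mathrm{Cov}_{\pS}(\log\pS,\dot z|_{\scafS})$, i.e.\ $\dot\HS=-\mathrm{Cov}_{\pS}(\log\pS,\dot z|_{\scafS})$, because $\pS=\clo(e^{z|_{\scafS}})$ depends only on the scaffold logits. The analogous formula holds for $\HC$.

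\emph{Natural gradient.} Here $\dot z$ is constant on each block, so $\pS$ and $\pC$ are invariant: renormalization kills a block-constant logit shift. Hence $\dot\HS=\dot\HC=0$, and both within-block covariance terms vanish. The only motion is $\dot s=s(1-s)\Delta$. By Proposition~\ref{thm:chain}(i),
\[
\dot H=\dot s\Bigl[\log\tfrac{1-s}{s}+\HS-\HC\Bigr].
\]
So $\operatorname{sign}\dot H=\operatorname{sign}(\Delta)\cdot\operatorname{sign}\bigl(\log\tfrac{1-s}{s}+\HS-\HC\bigr)$, which is the claimed characterization by the advantage gap.

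\emph{REINFORCE.} Now $\dot z_j=(1-s)\Delta\,p_j=(1-s)s\Delta\,\pS_j$ on $\scafS$, and $\dot z_j=-s\Delta\,p_j=-s(1-s)\Delta\,\pC_j$ on $\contC$. This gives
\[
\dot\HS=-s(1-s)\Delta\,\mathrm{Cov}_{\pS}(\log\pS,\pS),\qquad
\dot\HC=+s(1-s)\Delta\,\mathrm{Cov}_{\pC}(\log\pC,\pC).
\]
Since $\log x$ and $x$ are comonotone, each covariance is $\ge0$ by Chebyshev's sum inequality, with equality iff the block channel is uniform on its support. So for $\Delta>0$ (the scaffold block is favoured, the case the corollary's wording refers to) the scaffold channel sharpens and the content channel spreads. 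For $\Delta<0$ the roles swap, and I would state this explicitly.

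The gate also moves, now with $\dot s=s(1-s)\Delta\bigl[(1-s)\sum_{\scafS}\pS_j^2\cdot s+\dots\bigr]$. I would simply quote it from Proposition~\ref{prop:dynamics} rather than simplify it.

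\emph{Scope.} For the final sentence I would build a two-step counterexample. Take a block-sequence reward $R=\mathbf 1\{Y_2\in\scafS\}$, and let the step-2 gate depend on which scaffold token was emitted at step 1: token $a\in\scafS$ leads to $s_2=1$, token $b\in\scafS$ leads to $s_2=0$. The step-1 conditional advantages of $a$ and $b$ then differ although both lie in $\scafS$, so the advantage is not block-constant.

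I expect the main obstacle to be bookkeeping rather than ideas. The point is to get the REINFORCE within-block velocities exactly, with the correct $s$ and $1-s$ prefactors from the gate coupling. The step I would check first is that $\pS$ depends only on $z|_{\scafS}$, which is what removes any cross-term.
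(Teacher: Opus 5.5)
Your proposal is correct and is essentially the paper's own argument: the expected REINFORCE velocity $\eta\,p_a(A_a-\bar A)$ is proportional to $p^{B}$ within each block, so the within-block terms reduce to the comonotone covariances $V_B=\operatorname{Cov}_{p^{B}}(\log p^{B},p^{B})\ge0$, while the natural-gradient velocity is block-constant and leaves only $\Gamma_{\mathrm{gate}}=K\dot s$. The differences are cosmetic: you report $\dot H^{B}$ where the paper reports the mass-weighted $\Gamma_B=m_B\dot H^{B}$, your explicit two-step example makes concrete what the paper's scope remark states only in words, and the REINFORCE gate velocity you left unfinished is $\dot s=\eta\,s^{2}(1-s)^{2}\Delta\bigl(\|\pS\|_2^2+\|\pC\|_2^2\bigr)$, which the claim does not need.
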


Corollary~\ref{cor:reinforce} has a practical reading. A format reward moves measured entropy through channels exploration does not live in, so a collapse curve computed on the mixture can fall while content exploration is untouched; it also predicts arm-indifference for channel-selected gradients under sparse reward, which \S\ref{sec:training} measures. A second corollary (App.~\ref{app:instruments}) carries the accounting into the RLVR KL penalty: a fixed coefficient charges content drift at weight $1-s$, so hardening gates linearly weaken the content trust region.

\section{Experiments}
\label{sec:experiments}
\label{sec:protocol}

Every number below was computed with fixed seeds, and the identities are machine-checked ($215$ checks, App.~\ref{app:expdetails}). Numerics matter: in float32 a gate mass that rounds to $1$ makes the naive content estimator return $\infty-\infty$, so the estimator sums the content block directly, accumulates in float64, and identifies positions whose content channel is undefined; those are excluded from rankings and their fraction reported (at most $0.2\%$ teacher-forced, $4.9\%$ generated).

\subsection{Stage 1: where CoT entropy lives}
\label{sec:stage1}

\begin{table}[t]
\centering
\scriptsize\setlength{\tabcolsep}{3.4pt}\renewcommand{\arraystretch}{0.84}
\caption{\textbf{Channel decomposition of CoT entropy on frozen models} (mean$\pm$sem over $3$ seeds, \textsc{fixed} scaffold; \textsc{pos}/\textsc{data} sensitivity in App.~\ref{app:additional}). The middle block holds the tokenizer and the context set fixed, so each indented row differs from Qwen2.5-1.5B by the single intervention named and the last two share the Math-1.5B parent. Shares are of mean $H$; flip is the fork-verdict flip rate on random position pairs; ret@$20$ is top-$20\%$ retention between conventions; scaf-drv is the scaffold-driven share of the raw top-$20\%$ set; real-$\scafS$ is its realized-scaffold share, the \citet{wang2025beyond} connective observation quantified per channel.}
\label{tab:frozen}
\begin{tabular}{lccccccccc}
\toprule
model (data, mode) & $\bar s$ & $\bar H$ & gate\% & scaf\% & cont\% & flip\% & ret@20\% & scaf-drv\% & real-$\scafS$\% \\
\midrule
\multicolumn{10}{l}{\emph{base models, teacher-forced on GSM8K reference solutions} (Pythia-70M, -160M, and -1B in App.~\ref{app:additional})}\\
GPT-2 & $0.16$ & $2.89$ & $7.8{\scriptstyle\pm 0.0}$ & $6.3{\scriptstyle\pm 0.0}$ & $86.0{\scriptstyle\pm 0.0}$ & $13.1{\scriptstyle\pm 0.2}$ & $69.1{\scriptstyle\pm 0.2}$ & $0.0{\scriptstyle\pm 0.0}$ & $13.2{\scriptstyle\pm 0.1}$ \\
Pythia-410M & $0.18$ & $2.25$ & $9.8{\scriptstyle\pm 0.0}$ & $8.1{\scriptstyle\pm 0.0}$ & $82.0{\scriptstyle\pm 0.0}$ & $12.5{\scriptstyle\pm 0.1}$ & $70.1{\scriptstyle\pm 0.1}$ & $0.4{\scriptstyle\pm 0.0}$ & $13.4{\scriptstyle\pm 0.2}$ \\
Pythia-1.4B & $0.19$ & $1.86$ & $10.9{\scriptstyle\pm 0.0}$ & $9.7{\scriptstyle\pm 0.0}$ & $79.4{\scriptstyle\pm 0.0}$ & $12.9{\scriptstyle\pm 0.1}$ & $65.6{\scriptstyle\pm 0.2}$ & $3.0{\scriptstyle\pm 0.0}$ & $15.9{\scriptstyle\pm 0.2}$ \\
\midrule
\multicolumn{10}{l}{\emph{one tokenizer, one teacher-forced context set: corpus, scale, tuning, and distillation vary}}\\
Qwen2.5-1.5B & $0.19$ & $0.49$ & $18.8{\scriptstyle\pm 0.1}$ & $7.7{\scriptstyle\pm 0.0}$ & $73.5{\scriptstyle\pm 0.1}$ & $16.1{\scriptstyle\pm 0.2}$ & $52.1{\scriptstyle\pm 0.2}$ & $20.7{\scriptstyle\pm 0.1}$ & $18.6{\scriptstyle\pm 0.2}$ \\
\quad + code corpus (Coder-1.5B) & $0.19$ & $0.54$ & $18.3{\scriptstyle\pm 0.1}$ & $7.9{\scriptstyle\pm 0.0}$ & $73.8{\scriptstyle\pm 0.1}$ & $15.9{\scriptstyle\pm 0.2}$ & $52.0{\scriptstyle\pm 0.3}$ & $19.0{\scriptstyle\pm 0.2}$ & $19.0{\scriptstyle\pm 0.3}$ \\
\quad + $2\times$ parameters (3B) & $0.19$ & $0.44$ & $19.4{\scriptstyle\pm 0.1}$ & $7.7{\scriptstyle\pm 0.0}$ & $72.9{\scriptstyle\pm 0.2}$ & $16.2{\scriptstyle\pm 0.1}$ & $53.9{\scriptstyle\pm 0.3}$ & $22.4{\scriptstyle\pm 0.3}$ & $19.2{\scriptstyle\pm 0.2}$ \\
\quad + instruction tuning & $0.19$ & $0.45$ & $19.4{\scriptstyle\pm 0.1}$ & $8.9{\scriptstyle\pm 0.0}$ & $71.7{\scriptstyle\pm 0.1}$ & $15.4{\scriptstyle\pm 0.1}$ & $54.2{\scriptstyle\pm 0.3}$ & $24.3{\scriptstyle\pm 0.1}$ & $21.1{\scriptstyle\pm 0.1}$ \\
\quad + math corpus (Math-1.5B) & $0.20$ & $0.71$ & $18.1{\scriptstyle\pm 0.1}$ & $14.7{\scriptstyle\pm 0.0}$ & $67.2{\scriptstyle\pm 0.1}$ & $13.9{\scriptstyle\pm 0.2}$ & $58.6{\scriptstyle\pm 0.5}$ & $27.6{\scriptstyle\pm 0.3}$ & $30.5{\scriptstyle\pm 0.1}$ \\
\quad\quad + instruction tuning & $0.19$ & $0.77$ & $15.8{\scriptstyle\pm 0.1}$ & $13.6{\scriptstyle\pm 0.1}$ & $70.6{\scriptstyle\pm 0.2}$ & $13.1{\scriptstyle\pm 0.1}$ & $58.7{\scriptstyle\pm 0.5}$ & $25.9{\scriptstyle\pm 0.3}$ & $32.9{\scriptstyle\pm 0.2}$ \\
\quad\quad + R1 distillation & $0.23$ & $0.79$ & $19.4{\scriptstyle\pm 0.1}$ & $20.9{\scriptstyle\pm 0.0}$ & $59.7{\scriptstyle\pm 0.1}$ & $13.7{\scriptstyle\pm 0.0}$ & $60.0{\scriptstyle\pm 0.1}$ & $41.4{\scriptstyle\pm 0.2}$ & $28.5{\scriptstyle\pm 0.1}$ \\
\midrule
\multicolumn{10}{l}{\emph{models generating their own chains of thought}}\\
Qwen2.5-1.5B (GSM8K) & $0.33$ & $0.09$ & $10.9{\scriptstyle\pm 0.3}$ & $12.2{\scriptstyle\pm 0.2}$ & $76.9{\scriptstyle\pm 0.5}$ & $22.7{\scriptstyle\pm 0.6}$ & $16.1{\scriptstyle\pm 0.6}$ & $35.3{\scriptstyle\pm 1.2}$ & $32.9{\scriptstyle\pm 0.7}$ \\
Qwen2.5-1.5B-It (GSM8K) & $0.26$ & $0.26$ & $20.0{\scriptstyle\pm 0.1}$ & $11.6{\scriptstyle\pm 0.1}$ & $68.5{\scriptstyle\pm 0.1}$ & $17.9{\scriptstyle\pm 0.2}$ & $47.6{\scriptstyle\pm 0.3}$ & $32.9{\scriptstyle\pm 0.2}$ & $22.8{\scriptstyle\pm 0.1}$ \\
R1-Distill-1.5B (GSM8K) & $0.28$ & $0.56$ & $20.3{\scriptstyle\pm 0.1}$ & $17.5{\scriptstyle\pm 0.1}$ & $62.2{\scriptstyle\pm 0.1}$ & $15.5{\scriptstyle\pm 0.1}$ & $57.1{\scriptstyle\pm 0.3}$ & $34.6{\scriptstyle\pm 0.2}$ & $31.8{\scriptstyle\pm 0.1}$ \\
R1-Distill-1.5B (MATH-500) & $0.28$ & $0.45$ & $19.4{\scriptstyle\pm 0.1}$ & $19.3{\scriptstyle\pm 0.1}$ & $61.2{\scriptstyle\pm 0.2}$ & $15.0{\scriptstyle\pm 0.0}$ & $58.8{\scriptstyle\pm 0.1}$ & $38.7{\scriptstyle\pm 0.2}$ & $33.4{\scriptstyle\pm 0.1}$ \\
\bottomrule
\end{tabular}
\end{table}

We measure Pythia $70$M to $1.4$B and GPT-2 teacher-forced on GSM8K reference solutions, and six Qwen2.5 checkpoints with R1-Distill-Qwen-1.5B, teacher-forced and generating, on GSM8K and MATH-500 (Table~\ref{tab:frozen}); predictions were fixed before measurement (App.~\ref{app:expdetails}).

Two of the three pre-stated predictions hold, with a scope correction the data force. The scaffold-driven share of the raw top-$20\%$ set comes in three tiers: $0$ to $3\%$ on teacher-forced GPT-2 and Pythia, $19$ to $24\%$ on the Qwen2.5 general, code, and instruct checkpoints, $26$ to $41\%$ on the math-pretrained and distilled ones. The cross-family contrast confounds family and era with tuning, so the controlled comparison is the within-family ladder of \S\ref{sec:ladder}, a $20.7\to41.4\%$ climb from base to distilled at a gate share near $20\%$ of mean $H$ against $8$ to $11\%$ for GPT-2 and Pythia. The third prediction, that $\HC$ would be flatter across positions than $H$, is false in the informative direction on the reasoning models generating their own chains: there $\operatorname{Var}(G)$ and $\operatorname{Var}(\HC)$ each exceed $\operatorname{Var}(H)$ by $2.0$ to $4.1\times$ with $\operatorname{corr}(G,\HC)$ in $[-0.87,-0.77]$, while on teacher-forced base models only $\operatorname{Var}(\HC)$ exceeds $\operatorname{Var}(H)$ (App.~\ref{app:additional}). Since $G\equiv H-\HC$ forces the three terms to sum to $\operatorname{Var}(H)$, near-cancellation is automatic once they are large; the measurement is their magnitude, and it says the raw statistic is the smaller residual of two larger opposed channels.

\subsection{Coupling and allocation are two axes, and four checkpoint contrasts separate them}
\label{sec:ladder}

\begin{figure}[t]
\centering
\includegraphics[width=0.70\textwidth]{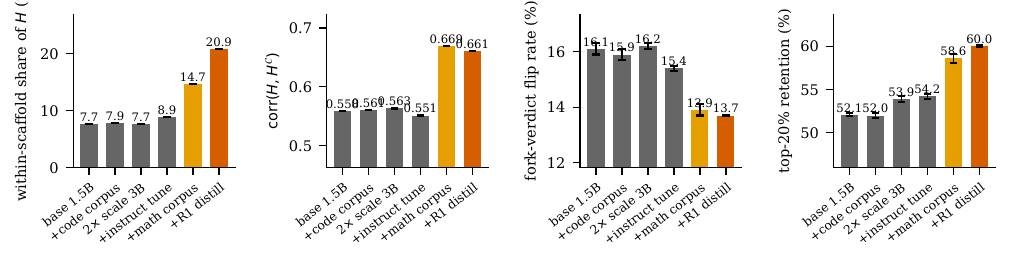}
\caption{\textbf{Four checkpoint contrasts on one tokenizer, and only two move the channels.} Teacher-forced GSM8K, three seeds, identical context set and token budget, mean$\pm$sem; each bar is the released checkpoint differing from Qwen2.5-1.5B by the factor named, uncontrolled in whatever else that release changed. Code data, $2\times$ parameters, and instruction tuning leave \emph{coupling} (panels 2 and 3) inside noise and move retention (panel 4) by at most $2.1$ points against $6.5$ for the math corpus; distillation on the math corpus moves only \emph{allocation} (panel 1).}
\label{fig:ladder}
\end{figure}

Comparing a distilled reasoning model to a base model varies many things at once. Holding the tokenizer and the context set fixed and changing one named factor per released checkpoint narrows it (Figure~\ref{fig:ladder}): four contrasts against Qwen2.5-1.5B, adding code data (Coder-1.5B), doubling parameters (Qwen2.5-3B), instruction tuning, and adding math data (Math-1.5B), then R1 distillation on the math corpus. These are released checkpoints, not randomized interventions, so each contrast carries whatever else its release changed.

Three of the four leave the channel geometry where it was. The channel correlation $\operatorname{corr}(H,\HC)$ moves by $+0.003$ under the code corpus, $+0.005$ under $2\times$ scale, and $-0.007$ under instruction tuning, against $+0.111$ under the math corpus; the flip rate moves by at most $0.7$ points against $2.2$; retention moves by $-0.1$, $+1.8$, and $+2.1$ points against $+6.5$. The one contrast that changes what the wedge geometry of Theorem~\ref{thm:reversal} predicts is the math-corpus release, and neither the scale nor the tuning contrast substitutes for it.

Distillation then moves the other axis alone. From Math-1.5B to its R1-distilled student the correlation goes $0.669\to0.661$, flip $13.9\to13.7\%$, retention $58.6\to60.0\%$ (Welch $p=0.71$, $p=0.08$), while the scaffold's share of mean $H$ climbs $14.7\to20.9\%$ ($p<10^{-4}$); instruction tuning on the same parent moves that share the other way, $14.7\to13.6\%$, so the rise appears at the distillation step and not at post-training in general. A parent and student indistinguishable in flip and retention differ by $6.2$ points of scaffold share, the sharpest evidence that the decomposition is not redundant with occupancy statistics. One quantity cuts the other way and we report it: the scaffold-driven share of the raw top-$20\%$ set runs $20.7\to27.6\to41.4\%$, weighting distillation more heavily than the corpus.

\subsection{A forecast with no fitted parameters}
\label{sec:forecast}

Before running the selection audit we registered the forecast of Theorem~\ref{thm:reversal}(iii): feed each configuration's Stage-1 channel correlation $r$ through the Gaussian orthant formulas and predict its flip rate and top-$20\%$ retention, using no quantity from the audit it predicts.

Retention lands within $5.1$ points on average (Spearman $0.91$) across cells spanning two orders of magnitude in model size, two datasets, teacher-forced and generated modes, and three scaffold definitions (App.~\ref{app:additional}, Figure~\ref{fig:forecast}a). The flip rate is rank-calibrated at Spearman $0.82$ and over-predicted in level in every cell, by a stable ratio of mean $0.59$ and standard deviation $0.10$. The uniform sign and stable ratio point at the Gaussian copula: real position populations, with heavier tails and a mass of near-ties, swap order less often than a Gaussian of the same correlation.

\textbf{Trivial baselines, and a frozen constant.} A forecast is only as good as the baseline it beats, so we score two constants on the same cells. Predicting every cell at the development mean gives retention error $12.1$ points over the twenty development cells against the forecast's $5.1$; the median-flip constant gives $4.0$ against the recalibrated $2.8$; and a constant ranks nothing, while the forecast rank-orders the cells at Spearman $0.91$ and $0.82$. The one fitted quantity, the flip constant $c=0.5988$, was frozen and timestamped with the development cells on 2026-08-24; three sibling cells measured afterwards (Qwen2.5-Coder-1.5B, -3B, -Math-1.5B-Instruct) score $2.9$ and $5.5$ points with nothing refitted (Figure~\ref{fig:forecast}, diamonds). Because those cells sit near the development centre, the constants are accidentally closer on them ($1.2$ and $2.4$); three same-family in-range cells are a calibration check, and the claim is level-and-rank calibration over all twenty-three cells against baselines, not external validation.

\subsection{Decisions: compression, attribution, eviction}
\label{sec:decisions}

\begin{table}[t]
\centering
\scriptsize\setlength{\tabcolsep}{4pt}\renewcommand{\arraystretch}{0.80}
\caption{\textbf{Post-hoc CoT compression: answer accuracy (\%) at matched token budgets}, mean$\pm$sem over three seeds ($512$ problems per seed on GSM8K, $256$ on MATH-500); every arm keeps the same number of tokens in every cell. \textsc{k} is raw surprisal, the convention beneath LLMLingua-style importance scores \citep{pan2024llmlingua2} (TokenSkip itself deploys a learned LLMLingua-2 classifier \citep{xia2025tokenskip}); \textsc{k}$^{\contC}$ and \textsc{k}$^{g}$ are the factorial arms (raw score with content-only eligibility; gate-adjusted score with all tokens eligible); \textsc{a} is the receiver-attention scorer \citep{bogdan2025anchors}, \textsc{d} the content-channel scorer, \textsc{last} the final tokens of the chain. The lower panel reruns every arm after deleting each restatement of the gold answer and the chain's final sentence; raw and content entropy track their surprisal twins within a point and are omitted there.}
\label{tab:compress}
\begin{tabular}{llcccccc}
\toprule
& & \multicolumn{3}{c}{R1-Distill-1.5B, GSM8K} & R1-Distill, MATH & Qwen2.5-1.5B-It \\
\cmidrule(lr){3-5}\cmidrule(lr){6-6}\cmidrule(lr){7-7}
& arm & keep $.25$ & keep $.50$ & keep $.75$ & keep $.50$ & keep $.50$ \\
\midrule
\multirow{5}{*}{\rotatebox{90}{\emph{scorers}}}
& \textsc{k} raw surprisal & $12.2{\scriptstyle\pm0.3}$ & $20.2{\scriptstyle\pm0.4}$ & $46.2{\scriptstyle\pm0.6}$ & $15.9{\scriptstyle\pm0.7}$ & $25.8{\scriptstyle\pm1.4}$ \\
& \textsc{e} raw entropy & $12.1{\scriptstyle\pm0.3}$ & $19.9{\scriptstyle\pm0.8}$ & $46.1{\scriptstyle\pm0.9}$ & $15.4{\scriptstyle\pm0.3}$ & $25.7{\scriptstyle\pm1.2}$ \\
& \textsc{a} attention received & $14.5{\scriptstyle\pm1.0}$ & $25.3{\scriptstyle\pm0.3}$ & $44.5{\scriptstyle\pm1.3}$ & $19.1{\scriptstyle\pm1.0}$ & $11.5{\scriptstyle\pm0.5}$ \\
& \textsc{k}$^{\contC}$ raw score, content-only & $12.9{\scriptstyle\pm0.7}$ & $25.5{\scriptstyle\pm0.7}$ & $74.4{\scriptstyle\pm0.7}$ & $21.9{\scriptstyle\pm1.4}$ & $16.0{\scriptstyle\pm0.2}$ \\
& \textsc{k}$^{g}$ gate-adj score, all & $12.4{\scriptstyle\pm0.7}$ & $25.1{\scriptstyle\pm1.5}$ & $70.1{\scriptstyle\pm1.2}$ & $22.0{\scriptstyle\pm1.3}$ & $15.7{\scriptstyle\pm0.6}$ \\
& \textsc{d} content surprisal & $14.2{\scriptstyle\pm0.3}$ & $52.0{\scriptstyle\pm0.8}$ & $73.0{\scriptstyle\pm0.7}$ & $25.3{\scriptstyle\pm1.1}$ & $39.7{\scriptstyle\pm0.8}$ \\
& \textsc{e}$^{\contC}$ content entropy & $13.4{\scriptstyle\pm0.4}$ & $52.2{\scriptstyle\pm0.1}$ & $73.0{\scriptstyle\pm0.7}$ & $26.2{\scriptstyle\pm0.8}$ & $40.1{\scriptstyle\pm0.9}$ \\
\midrule
\multirow{4}{*}{\rotatebox{90}{\emph{controls}}}
& \textsc{rand} random scatter & $17.2{\scriptstyle\pm0.7}$ & $38.7{\scriptstyle\pm0.5}$ & $58.9{\scriptstyle\pm0.4}$ & $21.1{\scriptstyle\pm0.7}$ & $17.8{\scriptstyle\pm0.3}$ \\
& \textsc{first} head block & $14.3{\scriptstyle\pm0.4}$ & $40.8{\scriptstyle\pm0.4}$ & $61.9{\scriptstyle\pm0.4}$ & $18.5{\scriptstyle\pm0.5}$ & $6.4{\scriptstyle\pm0.6}$ \\
& \textsc{blk} random contiguous & $49.2{\scriptstyle\pm1.2}$ & $61.6{\scriptstyle\pm0.9}$ & $69.9{\scriptstyle\pm1.0}$ & $28.0{\scriptstyle\pm1.7}$ & $33.8{\scriptstyle\pm1.5}$ \\
& \textsc{last} recency & $\mathbf{70.6}{\scriptstyle\pm0.3}$ & $\mathbf{75.1}{\scriptstyle\pm0.3}$ & $\mathbf{75.1}{\scriptstyle\pm0.9}$ & $\mathbf{34.4}{\scriptstyle\pm0.5}$ & $\mathbf{67.7}{\scriptstyle\pm0.3}$ \\
\midrule
& uncompressed chain & $75.5{\scriptstyle\pm0.8}$ & $75.5{\scriptstyle\pm0.8}$ & $75.5{\scriptstyle\pm0.8}$ & $34.2{\scriptstyle\pm0.8}$ & $68.8{\scriptstyle\pm0.3}$ \\
\midrule
\multicolumn{7}{l}{\emph{after stripping every gold-answer restatement and the final sentence}}\\
& \textsc{k} raw surprisal & $10.7{\scriptstyle\pm0.5}$ & $8.3{\scriptstyle\pm0.5}$ & $10.6{\scriptstyle\pm1.1}$ & $9.5{\scriptstyle\pm0.6}$ & $6.9{\scriptstyle\pm0.1}$ \\
& \textsc{k}$^{\contC}$ raw score, content-only & $9.8{\scriptstyle\pm0.7}$ & $12.6{\scriptstyle\pm1.6}$ & $35.3{\scriptstyle\pm0.5}$ & $11.2{\scriptstyle\pm0.3}$ & $14.2{\scriptstyle\pm0.9}$ \\
& \textsc{k}$^{g}$ gate-adj score, all & $10.7{\scriptstyle\pm0.8}$ & $11.4{\scriptstyle\pm0.5}$ & $31.8{\scriptstyle\pm0.5}$ & $10.0{\scriptstyle\pm0.9}$ & $14.1{\scriptstyle\pm0.9}$ \\
& \textsc{a} attention received & $10.6{\scriptstyle\pm0.7}$ & $16.5{\scriptstyle\pm1.1}$ & $26.4{\scriptstyle\pm0.9}$ & $12.5{\scriptstyle\pm1.3}$ & $11.1{\scriptstyle\pm1.1}$ \\
& \textsc{d} content surprisal & $10.4{\scriptstyle\pm0.1}$ & $12.0{\scriptstyle\pm0.8}$ & $35.3{\scriptstyle\pm0.5}$ & $10.5{\scriptstyle\pm0.5}$ & $14.8{\scriptstyle\pm0.7}$ \\
& \textsc{rand} random scatter & $7.5{\scriptstyle\pm0.3}$ & $12.4{\scriptstyle\pm1.1}$ & $21.5{\scriptstyle\pm0.8}$ & $8.2{\scriptstyle\pm0.6}$ & $12.2{\scriptstyle\pm1.3}$ \\
& \textsc{first} head block & $15.4{\scriptstyle\pm0.4}$ & $34.4{\scriptstyle\pm0.7}$ & $\mathbf{43.2}{\scriptstyle\pm1.6}$ & $13.9{\scriptstyle\pm1.8}$ & $6.9{\scriptstyle\pm0.6}$ \\
& \textsc{blk} random contiguous & $30.0{\scriptstyle\pm0.6}$ & $\mathbf{40.5}{\scriptstyle\pm0.2}$ & $42.2{\scriptstyle\pm1.8}$ & $15.5{\scriptstyle\pm0.9}$ & $24.2{\scriptstyle\pm0.4}$ \\
& \textsc{last} recency & $\mathbf{33.3}{\scriptstyle\pm0.7}$ & $39.1{\scriptstyle\pm1.0}$ & $39.7{\scriptstyle\pm1.6}$ & $\mathbf{19.9}{\scriptstyle\pm0.6}$ & $\mathbf{48.5}{\scriptstyle\pm1.2}$ \\
& uncompressed chain, stripped & $41.2{\scriptstyle\pm0.9}$ & $41.2{\scriptstyle\pm0.9}$ & $41.2{\scriptstyle\pm0.9}$ & $19.9{\scriptstyle\pm0.5}$ & $49.5{\scriptstyle\pm1.4}$ \\
\bottomrule
\end{tabular}
\end{table}

\textbf{Protocol.} Post-hoc compression in the TokenSkip pipeline shape \citep{xia2025tokenskip}: the model generates its own chain, the \verb|\boxed{}| span is deleted so no gold answer reaches a scorer, an arm selects which tokens survive a keep budget, and the model is re-prompted from the surviving text to answer. Every arm keeps the same number of tokens, so arms differ only in which. Three audits accompany the arms: a $2\times2$ factorial separating the content scorer's two ingredients, an answer-survival audit measuring how often each arm's kept text still contains the gold answer, and a stripped rerun of every cell with each restatement of the gold answer and the final sentence deleted before compression. Seeds resample problems, so alongside pooled exact McNemar tests every margin below carries a problem-clustered bootstrap interval ($3\times512$ records collapse to $1{,}020$ problem clusters on GSM8K, $3\times256$ to $441$ on MATH-500); square brackets are clustered $95\%$ intervals in points.

\textbf{The channel gain is real in every cell, and it is an interaction.} Content-channel scoring beats raw surprisal in all five cells: $+33.1$ $[30.2,36.1]$, $+27.6$ $[25.0,30.3]$, $+14.6$ $[12.0,17.1]$, $+10.1$ $[6.9,13.4]$, and $+2.3$ $[0.5,4.2]$ at the tightest budget, which survives clustering. The factorial says the gain is attributable to neither ingredient alone: at the distilled model's half budget, content-only eligibility with the raw score gains $+5.7$ $[3.2,8.3]$ and the gate-adjusted score on all tokens gains $+5.3$ $[2.7,7.8]$ while the combination gains $+33.1$; at the loosest budget either alone recovers the full effect; on the instruct model each alone \emph{loses} about $10$ points while the combination gains $+14.6$. Raw surprisal spends $25$ to $34\%$ of its budget on scaffold tokens, the content scorer none, and the keep sets agree on $47$ to $59\%$ of tokens; content entropy tracks content surprisal within a point everywhere, so the moving axis is the convention, not the statistic.

\textbf{Position dominates the scorer family, before the audit.} Keeping the last half of the chain reaches $75.1\pm0.3\%$ against $75.5\pm0.8$ uncompressed on GSM8K and $34.4$ against $34.2$ on MATH-500; a quarter of the chain still reaches $70.6\%$, and every scorer sits $23$ to $56$ points below. Two controls split the advantage: a random \emph{contiguous} block beats a random \emph{scattered} keep in all five cells ($+33.0$ $[30.0,36.0]$ down to $+7.5$ $[4.5,10.4]$), so a compressed chain has to read as continuous text, and recency then beats the contiguous block ($+21.3$ $[18.5,24.1]$ down to $+5.0$ $[3.3,6.7]$, $+35.4$ $[32.5,38.3]$ on the instruct model), while the head block runs far below it.

\textbf{The answer-survival audit finds a leak} (Figure~\ref{fig:decisions}). Only the \verb|\boxed{}| span was deleted, and chains restate the final answer in prose before boxing it, the confound \citet{garcia2026lastword} names in corruption studies. Measuring it directly: at a quarter budget on the distilled model, the recency arm's kept text still contains the gold answer for $79\%$ of chains, against $53\%$ for the contiguous block, $37\%$ for the scatter, $15\%$ for the head, and $84\%$ for the full chain; at half budget recency's survival matches the full chain's ($83$ vs $84\%$). The tail almost always carries the restated answer, so the recency arm was answering, in part, by copying.

\textbf{Stripping the restatements halves the ceiling and rewrites the ordering.} With every gold-answer restatement and the final sentence deleted, the uncompressed chain itself falls from $75.5$ to $41.2\%$ on distilled GSM8K, $68.8$ to $49.5$ on the instruct model, and $34.2$ to $19.9$ on MATH-500: a quarter to a half of re-fed accuracy rides on answer text the evaluation hands back, a bound loose in the strict direction since stripping also deletes genuine final-step content. Against the stripped ceiling, contiguity survives everywhere ($+12.1$ to $+28.5$ over scatter), and the tail premium becomes model-dependent: on the distilled model recency collapses onto the contiguous block ($-2.3$ $[-4.9,+0.3]$ at half budget, $-3.2$ $[-5.7,-0.8]$ at three quarters, $+3.5$ $[0.6,6.4]$ at one quarter, the head block drawing level at three quarters), while a real tail premium remains on the instruct model ($+24.1$ $[21.1,26.9]$) and a modest one on MATH-500 ($+4.6$ $[2.1,7.1]$). The channel gain survives stripping where the budget allows a choice, $+24.7$ $[22.1,27.5]$ at three-quarters budget on the distilled model and $+8.5$ $[6.5,10.6]$ on the instruct model, and shrinks to $+3.2$ $[1.3,5.0]$ or nothing at tight budgets. One ordering is uniform: in every stripped cell, and in nine of ten cells overall, no token scorer beats the random contiguous block.

\textbf{Consequence: the audit corrected its own headline.} Before the audit the finding read ``keep the tail''; after it, three corrected claims stand. First, re-feeding evaluations substantially measure answer copying unless restatements are stripped: the ceiling drops of $34$, $19$, and $14$ points quantify, for the TokenSkip-shaped evaluation family, the inflation \citet{garcia2026lastword} diagnosed in corruption studies, and no compression paper we cite strips restatements. Second, the robust positional prior is contiguity: no token scorer beats a random contiguous block in any stripped cell, sharpening \citet{candussio2026demystifying} from ``entropy does not beat random scatter'' to ``no token-level score beats random contiguous''. Third, what looked like one tail effect was two: copyable answer text on the distilled model, and genuine late-chain state on the instruct model. EPiC's opposite ordering in training-data condensation \citep{jia2025epic} and KV-cache eviction in between \citep{xiao2024streamingllm,zhang2023h2o,li2024snapkv,steele2026limits} then read as this paper's thesis applied to position: a token-importance claim is uninterpretable until the estimand, learn-from, answer-from, or attend-from, is stated. Theorem~\ref{thm:reversal} concerns conventions at a position and says nothing about where the position sits or what the evaluation hands back; the audit protocol, run against our own headline, caught both.

\textbf{Attribution and eviction.} Receiver-style sentence attribution \citep{bogdan2025anchors} under raw versus content-renormalized attention agrees at Kendall $\tau=0.75\pm0.01$ yet flips its top-1 anchor on $40.4\pm3.3\%$ of problems; KV-eviction keep sets \citep{cai2025rkv} overlap at Jaccard $0.64$/$0.70$ at $30\%$/$50\%$ budgets. The ranking is stable; the single item read off it is not.

\subsection{Training dynamics: two nulls and one decomposed decline}
\label{sec:training}

\textbf{Two nulls, reported as such} (arm table in App.~\ref{app:additional}). Six GRPO runs on Qwen2.5-0.5B and its instruct variant (three seeds each, $300$ to $400$ steps) give a pooled raw-entropy change of $-0.05\pm0.03$ nats while the policies demonstrably move; the attribution of Definition~\ref{def:attr} stays below $0.008$ nats in every run, and at pass@$1$ of $3$ to $9\%$ most GRPO groups carry zero advantage. Three matched-update arms (no bonus, content bonus, raw bonus) raise final on-policy $H$ to $0.580\pm0.025$, $0.655\pm0.026$, $0.610\pm0.029$ and leave pass@$8$ at $39.6\pm5.1$, $38.0\pm4.5$, $43.2\pm8.1\%$, no pair separating ($p=0.29$ to $0.66$); Corollary~\ref{cor:reinforce} predicts arm-indifference here, since a zero-advantage step zeroes every channel mask. One confirmed sign runs against the naive expectation: the \emph{content} bonus raised total $H$ more than the \emph{raw} bonus, as the accounting predicts, since a raw bonus can be paid in cheap gate and scaffold entropy while a content bonus must move the expensive channel. A dense-reward pair on Qwen2.5-Math-1.5B declines $0.13\pm0.08$ nats, under the pre-stated $0.15$-nat threshold; $90\pm3\%$ of each decline lies in the content channel and pass@$8$ \emph{rises} on the declining seed.

\textbf{The routing witness is large; leakage is certified, not estimated.} On R1-Distill-1.5B the first-content routing information is $\hat\kappa=0.81$ nats ($1.16$ bits), bootstrap interval $[0.76,0.85]$: which connective the model emits carries more than a bit about the first content token that follows. By Theorem~\ref{prop:exploration} this certifies $\lek>0$ and bounds the one-step inert-scaffold reading away; it does not estimate $\lek$, for which no numerical relation to $\hat\kappa$ exists (App.~\ref{app:proofs}). Lifting the outcome to (waiting time, first content token) gives $1.01\pm0.02$ nats, dominating the first-token value as data processing requires; the eventual-content entropy at scaffold positions measures $H(q_t)=0.600$ nats $[0.496,0.725]$ (App.~\ref{app:instruments}).

\section{Discussion and limitations}
\label{sec:discussion}

\textbf{What is claimed.} The results concern \emph{measurement} and first-order \emph{mechanism}: raw CoT statistics answer convention-dependent mixtures of three questions that separate exactly, the optimizer determines which channels an advantage pattern moves, and the scaffold's contribution to exploration is exactly the leakage it routes. We claim neither that models ``use'' channels nor that the complement block is validated ``reasoning''; the claim is that the question should be stated, that stating it moves compression accuracy by up to $33$ points, and that the same audit discipline, applied to our own positional control, exposed an answer-copying inflation the evaluation family shares.

\textbf{Limits.} Everything is measured at or below $3$B on GSM8K and MATH-500; the $7$B rows are marked to run. The dissociation compares released checkpoints, not randomized interventions; the design controls tokenizer, context set, token budget, and lineage, and the causal reading stops there. The forecast's held-out cells are three same-family in-range checkpoints, so that claim is baseline-beating calibration, not external validity. The stripping protocol is string-based: paraphrased restatements survive it and genuine final-step content does not, which brackets the copying estimate without pinning it. The positional findings concern post-hoc text re-feeding, not KV-cache eviction, where attention scoring beats recency. The training nulls are theory-consistent under sparse reward; the discriminating placement, pass@$1$ above $30\%$, five seeds per arm, remains unrun. The scaffold set is chosen, not canonical; the identities hold for any $\scafS$, we evaluate three constructions and the tiered refinement, and $\kappa_t$ measures whether connectives carry strategy.

\textbf{Directions.} Estimating $\lek$; longer horizons and scale; a randomized corpus intervention; the decomposition on answer distributions and MoE routers.

\label{endbody}
\label{endmain}

\bibliographystyle{plainnat}
\bibliography{refs}

\appendix

\section{Full formal statements}
\label{app:full}

The main text states each result in the form used there. This section
restates the same results in full, with every domain condition, equality
case, and attainer; the proofs in App.~\ref{app:proofs} refer to these
statements. Nothing here differs in content from the main-text version.

\subsection{Exact channel identities}

\begin{theorem}[Channel chain rules and sharp signed scaffold-surplus bounds]
\label{thm:chain-full}
Let \(V\) be a finite set with a partition $V=\scafS\mathbin{\dot\cup}\contC, \scafS\neq\varnothing, \contC\neq\varnothing$, and put $m:=|\scafS|, k:=|\contC|$. For every nonempty finite set \(A\), define
\[
\overline{\Delta}(A)
:=
\left\{
r\in[0,1]^A:
\sum_{a\in A}r_a=1
\right\}.
\]
For \(r\in\overline{\Delta}(A)\), define $H(r):=-\sum_{a\in A}r_a\log r_a, 0\log0:=0$. For \(r,u\in\overline{\Delta}(A)\), write \(r\ll u\) if $u_a=0\ \Longrightarrow\ r_a=0 ,\ (a\in A)$, and define
\[
\KL(r\|u)
:=
\begin{cases}
\displaystyle
\sum_{\substack{a\in A\\r_a>0}}
r_a\log\frac{r_a}{u_a},
& r\ll u,\\[2ex]
+\infty,
& r\not\ll u.
\end{cases}
\]
Define the gate-interior simplex
\[
\Delta_{\mathrm{gi}}(V)
:=
\left\{
r\in\overline{\Delta}(V):
0<r(\scafS)<1
\right\},
\qquad
r(\scafS):=\sum_{v\in\scafS}r_v.
\]
For every \(r\in\Delta_{\mathrm{gi}}(V)\), write $s_r:=r(\scafS)$, and define its conditional block distributions by $r_v^{\scafS}:=\frac{r_v}{s_r} \quad(v\in\scafS), r_v^{\contC}:=\frac{r_v}{1-s_r} \quad(v\in\contC)$. Set $\HS(r):=H(r^{\scafS}), \HC(r):=H(r^{\contC}), G(r):=H(r)-\HC(r)$. For \(u\in[0,1]\), define $\Hb(u):=-u\log u-(1-u)\log(1-u)$, using \(0\log0=0\). Let \(p,q\in\Delta_{\mathrm{gi}}(V)\), and write $s:=s_p=p(\scafS), t:=s_q=q(\scafS)$. Define $\KL_{\mathrm b}(s\|t) := s\log\frac{s}{t} + (1-s)\log\frac{1-s}{1-t}$. Then the following statements hold.
\begin{enumerate}
\item[\rm(i)]
For every \(r\in\Delta_{\mathrm{gi}}(V)\), $H(r) = \Hb(s_r) +s_r\,\HS(r) +(1-s_r)\,\HC(r)$. \item[\rm(ii)]
As an identity in \([0,+\infty]\), $\KL(p\|q) = \KL_{\mathrm b}(s\|t) +s\,\KL(p^{\scafS}\|q^{\scafS}) +(1-s)\,\KL(p^{\contC}\|q^{\contC})$. \item[\rm(iii)]
For every \(r\in\Delta_{\mathrm{gi}}(V)\), $G(r) = \Hb(s_r) +s_r\bigl(\HS(r)-\HC(r)\bigr)$. Fix \(\sigma\in(0,1)\) and
\(c\in\overline{\Delta}(\contC)\), and define
\[
\mathcal P_{\sigma,c}
:=
\left\{
r\in\Delta_{\mathrm{gi}}(V):
s_r=\sigma,\ 
r^{\contC}=c
\right\}.
\]
Then $\min_{r\in\mathcal P_{\sigma,c}}G(r) = \Hb(\sigma)-\sigma H(c)$, and $\max_{r\in\mathcal P_{\sigma,c}}G(r) = \Hb(\sigma) +\sigma\bigl(\log m-H(c)\bigr)$. A distribution \(r\in\mathcal P_{\sigma,c}\) attains the minimum if
and only if \(r^{\scafS}\) is a point mass. It attains the maximum if
and only if \(r^{\scafS}\) is uniform on \(\scafS\).
For fixed \(\sigma\in(0,1)\), define
\[
\mathcal P_{\sigma}
:=
\left\{
r\in\Delta_{\mathrm{gi}}(V):
s_r=\sigma
\right\}.
\]
Then $\min_{r\in\mathcal P_{\sigma}}G(r) = \Hb(\sigma)-\sigma\log k$, and $\max_{r\in\mathcal P_{\sigma}}G(r) = \Hb(\sigma)+\sigma\log m$. A distribution \(r\in\mathcal P_{\sigma}\) attains the minimum if
and only if \(r^{\scafS}\) is a point mass and \(r^{\contC}\) is
uniform on \(\contC\). It attains the maximum if and only if
\(r^{\scafS}\) is uniform on \(\scafS\) and \(r^{\contC}\) is a
point mass. These characterizations include the singleton-block
cases.
\item[\rm(iv)]
Let \(n\geq1\), let
\(\alpha_1,\ldots,\alpha_n\in\mathbb R\), and let
\(p_1,\ldots,p_n\in\Delta_{\mathrm{gi}}(V)\). Then
\[
\sum_{j=1}^{n}\alpha_jH(p_j)
=
\sum_{j=1}^{n}\alpha_j\Hb(s_{p_j})
+
\sum_{j=1}^{n}\alpha_js_{p_j}\HS(p_j)
+
\sum_{j=1}^{n}\alpha_j(1-s_{p_j})\HC(p_j).
\]
More generally, let \(N\) be a nonnegative integer-valued random
variable satisfying \(N<\infty\) almost surely. For every \(j\geq1\),
let \(P_j\) be a measurable
\(\Delta_{\mathrm{gi}}(V)\)-valued random vector and let \(W_j\) be a
real-valued measurable random variable. Put $S_j:=s_{P_j}=P_j(\scafS)$. Assume $\mathbb E\left[ \sum_{j=1}^{N}|W_j| \right] <\infty$, with an empty sum interpreted as zero. Then all four stopped weighted
sums below are absolutely integrable, and
\begin{align*}
\mathbb E\left[
\sum_{j=1}^{N}W_jH(P_j)
\right]
&=
\mathbb E\left[
\sum_{j=1}^{N}W_j\Hb(S_j)
\right]\\
&\quad+
\mathbb E\left[
\sum_{j=1}^{N}W_jS_j\HS(P_j)
\right]\\
&\quad+
\mathbb E\left[
\sum_{j=1}^{N}W_j(1-S_j)\HC(P_j)
\right].
\end{align*}
No independence or stopping-time assumption is required.
\end{enumerate}
\end{theorem}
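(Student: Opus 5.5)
The plan is to prove (i) by a direct pointwise log-splitting and then derive (ii)--(iv) from the same manipulation or from (i).

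\textbf{Part (i).} Fix $r\in\Delta_{\mathrm{gi}}(V)$ and write $r_v=s_r\,r^{\scafS}_v$ on $\scafS$ and $r_v=(1-s_r)\,r^{\contC}_v$ on $\contC$. Split $-\sum_v r_v\log r_v$ over the two blocks. On $\scafS$, use $\log r_v=\log s_r+\log r^{\scafS}_v$ for $r_v>0$; terms with $r_v=0$ vanish on both sides by $0\log 0=0$. Summing gives $-s_r\log s_r+s_r\HS(r)$. The content block gives the analogous expression, and adding the two yields (i).

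\textbf{Part (ii).} The same splitting applies under $p\ll q$. Since $s,t\in(0,1)$, every term of $\KL_{\mathrm b}$ is finite, and $\log\frac{p_v}{q_v}=\log\frac st+\log\frac{p^{\scafS}_v}{q^{\scafS}_v}$ on the support within $\scafS$, with the analogous identity on $\contC$. For the infinite case, observe that $p\ll q$ holds if and only if $p^{\scafS}\ll q^{\scafS}$ and $p^{\contC}\ll q^{\contC}$, because rescaling by the positive constants $s,t,1-s,1-t$ does not change supports. So if $\KL(p\|q)=\infty$, at least one block divergence is infinite, and since its weight ($s$ or $1-s$) is strictly positive, the right side is $+\infty$ as well. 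All terms lie in $[0,\infty]$ or are finite reals, so no $\infty-\infty$ issue arises.

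\textbf{Part (iii).} The identity for $G(r)$ follows from (i) by subtracting $\HC(r)$. On $\mathcal P_{\sigma,c}$, the quantities $\sigma$ and $H(c)$ are fixed, so $G$ is affine increasing in $\HS\in[0,\log m]$. Every $r^{\scafS}\in\overline{\Delta}(\scafS)$ is realizable: set $r=(\sigma r^{\scafS},(1-\sigma)c)$. The extremes of $\HS$ are attained exactly at point masses ($H=0$) and at the uniform distribution ($H=\log m$ by strict concavity / Gibbs). For $\mathcal P_\sigma$, $\HS$ and $\HC$ vary independently over their full ranges, since again any pair of block distributions is realizable. Then $G=\Hb(\sigma)+\sigma\HS-\sigma\HC$ is minimized exactly when $\HS=0$ and $\HC=\log k$, and maximized exactly when $\HS=\log m$ and $\HC=0$, using the equality cases of $0\le H\le\log(\text{size})$. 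The singleton cases ($m=1$ or $k=1$) are consistent: the unique distribution on a one-point block is both a point mass and uniform, and $\log 1=0$.

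\textbf{Part (iv).} The finite weighted sum follows by multiplying (i) by $\alpha_j$ and summing. The main obstacle is the stopped version, specifically integrability and the legitimacy of splitting the expectation. Each of $\Hb(S_j)\le\log 2$, $S_j\HS(P_j)\le\log m$, $(1-S_j)\HC(P_j)\le\log k$, and $H(P_j)\le\log|V|$ is bounded and measurable, because entropy is continuous on the simplex and block renormalization is continuous on $\Delta_{\mathrm{gi}}$. Hence each random sum $\sum_{j\le N}W_jX_j$ is dominated in absolute value by $C\sum_{j\le N}|W_j|$, which is integrable by hypothesis; the random sums are well defined because $N<\infty$ almost surely. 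Pointwise on $\Omega$, (i) gives $\sum_{j\le N}W_jH(P_j)$ equal to the sum of the three stopped sums. Taking expectations and using linearity for integrable variables gives the claim. The argument never interchanges $\mathbb E$ with $\sum_j$, so no independence, optional-stopping, or Wald-type assumption enters.
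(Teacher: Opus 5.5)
Your proposal is correct and follows the paper's proof essentially step for step: pointwise log-splitting for (i) and for (ii), with the same absolute-continuity case split; the affine dependence of $G$ on $\HS$ (and $\HC$), together with the equality cases of $0\le H\le\log(\text{size})$ and the explicit realizer $r=(\sigma a,(1-\sigma)c)$, for (iii); and a pathwise identity plus domination by a constant times $\sum_{j\le N}|W_j|$ for (iv). The only detail the paper states more explicitly is measurability of the random-length sum, which it obtains by writing $\sum_{j\le N}=\sum_{j\ge1}\mathbf 1\{N\ge j\}(\cdot)$.
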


The surplus bounds calibrate the stakes: with a modest scaffold set the surplus is at most a few nats, but a few nats is exactly the scale on which forking-token thresholds and collapse curves live, so the confound is quantitatively live and is not a tail effect. The same partition splits the Aitchison geometry the companion develops for attention rows:

\begin{lemma}[Three-way Pythagorean split]
\label{lem:pyth}
With the partition balance $b(p)=\sqrt{\tfrac{|\scafS||\contC|}{D}}\,\log\tfrac{g(p|_{\scafS})}{g(p|_{\contC})}$ ($g$ the geometric mean), the map $p\mapsto\big(b(p),\ilr(\pS),\ilr(\pC)\big)$ is an isometry, hence
$\dA(p,q)^2=\big(b(p)-b(q)\big)^2+\dA(\pS,q^{\scafS})^2+\dA(\pC,q^{\contC})^2$.
The entropy identity uses the arithmetic mass $s$, the geometric identity the balance $b$; they are different coordinates for the same three questions, and we use each where it is exact (Remark~\ref{rem:amalg}).
\end{lemma}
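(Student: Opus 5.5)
The plan is to make the three coordinates explicit and then reduce everything to the standard fact that $\ilr$ is an isometry from the Aitchison simplex onto Euclidean space. Work in centred log-ratio coordinates. For $p$ in the open simplex on $V$, set $\ell=\log p$ and $\clr(p)=\ell-\bar\ell\,\mathbf 1$. Then $\dA(p,q)=\|\clr(p)-\clr(q)\|_2$ by the standard definition, so it suffices to decompose the vector $x:=\clr(p)-\clr(q)=\log(p/q)-\overline{\log(p/q)}\,\mathbf 1$ orthogonally in $\mathbb R^D$. This $x$ lies in the sum-zero hyperplane $\mathbf 1^\perp$.

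Next I split $\mathbf 1^\perp$ into three mutually orthogonal subspaces:
\begin{itemize}
\item $U_{\scafS}$, the vectors supported on $\scafS$ with zero sum;
\item $U_{\contC}$, the same for $\contC$;
\item the one-dimensional span of the balancing vector $e_b$, with entries $\sqrt{k/(mD)}$ on $\scafS$ and $-\sqrt{m/(kD)}$ on $\contC$, where $m=|\scafS|$ and $k=|\contC|$.
\end{itemize}
A direct check shows $e_b\perp\mathbf 1$ and $\|e_b\|=1$. It is also orthogonal to both $U$'s, because it is constant on each block. The dimensions $(m-1)+(k-1)+1=D-1$ match, so the three pieces exhaust $\mathbf 1^\perp$. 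Pythagoras then gives
\[
\|x\|^2=\langle x,e_b\rangle^2+\|P_{\scafS}x\|^2+\|P_{\contC}x\|^2 .
\]

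The remaining work is to identify each term.

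\emph{Balance term.} Compute $\langle \log p,e_b\rangle=\sqrt{k/(mD)}\sum_{\scafS}\log p_v-\sqrt{m/(kD)}\sum_{\contC}\log p_v$. Using $\sum_{\scafS}\log p_v=m\log g(p|_{\scafS})$, this equals $\sqrt{mk/D}\,\log\bigl(g(p|_{\scafS})/g(p|_{\contC})\bigr)=b(p)$. The constant shift by $\bar\ell$ drops out because $e_b\perp\mathbf 1$. Hence $\langle x,e_b\rangle=b(p)-b(q)$.

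\emph{Within-block terms.} $P_{\scafS}x$ is the restriction of $\log(p/q)$ to $\scafS$, centred within $\scafS$. Closure $\clo$ only shifts log-coordinates by a constant, and centring removes constants, so $P_{\scafS}x=\clr(\pS)-\clr(q^{\scafS})$. Its norm is therefore $\dA(\pS,q^{\scafS})$, and likewise for $\contC$. This step is exactly scale invariance, and it is why the lemma uses the renormalized channels rather than the raw restrictions.

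Finally, the isometry claim. Choose orthonormal bases of $U_{\scafS}$ and $U_{\contC}$; these are the $\ilr$ bases of the subcompositions. Together with $e_b$ they form an orthonormal basis of $\mathbf 1^\perp$, so coordinates in it constitute a valid $\ilr$ of $p$, namely $(b(p),\ilr(\pS),\ilr(\pC))$. Since $\ilr$ is an isometry for any orthonormal basis, the map is an isometry. It is also a bijection onto $\mathbb R\times\mathbb R^{m-1}\times\mathbb R^{k-1}$, since any triple pulls back to a point of $\mathbf 1^\perp$. The distance identity is then the Pythagorean display above.

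The only delicate point is the normalization of $e_b$, which fixes the constant $\sqrt{mk/D}$ in $b$. The degenerate cases also need care. If $m=1$ or $k=1$, the corresponding $U$ is trivial and its $\dA$ term vanishes, consistent with a point-mass channel. The identity holds on the open simplex, and should be stated only there, since $\dA$ is undefined on the boundary.
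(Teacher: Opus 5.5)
Your proposal is correct and follows essentially the same route as the paper. Both use the same unit balancing vector with entries $\sqrt{k/(mD)}$ on $\scafS$ and $-\sqrt{m/(kD)}$ on $\contC$, completed by orthonormal sum-zero bases supported on each block, and both identify the within-block coordinates as ILR coordinates of $\pS$ and $\pC$ because closure only shifts $\log p|_{\scafS}$ by the constant $\log s$. The only cosmetic difference is that you project the difference vector $\clr(p)-\clr(q)$ onto the three subspaces, whereas the paper writes the coordinates of each $\clr(p)$ in the assembled basis and then invokes the isometry of ILR.
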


\subsection{The content channel is the maximal invariant}

\begin{proposition}[Content maximal invariance and temperature behavior]
\label{prop:invariance}
Let \(V\) be finite with \(|V|=D\ge2\), and let $V=\scafS\mathbin{\dot\cup}\contC$, $\scafS,\contC\neq\varnothing$.
Write $P(z):=\clo(e^{z})$ for the softmax map on $\mathbb R^V$, with
$P^{B}(z):=\clo\bigl(P(z)|_{B}\bigr)$ its block channels
($B\in\{\scafS,\contC\}$), so that $P^{\contC}(z)$ is the content
channel of \eqref{eq:factor} as a function of the logits. Let
\[
G_{\scafS}
:=
\left\{
g_{w,c}:
g_{w,c}(z)=z+w+c\mathbf1_V,\ 
w|_{\contC}=0,\ 
c\in\mathbb R
\right\}.
\]
For \(w|_{\contC}=0\), define the induced action on
\(\Delta_V^\circ\) by $\bigl(\Phi_w(p)\bigr)_v := \frac{e^{w_v}p_v} {\sum_{u\in V}e^{w_u}p_u}, v\in V$. \begin{enumerate}
\item[\rm(i)]
The content subcomposition is a maximal invariant. Specifically, $P^{\contC}(g_{w,c}z)=P^{\contC}(z)$ for every \(z\in\mathbb R^V\) and \(g_{w,c}\in G_{\scafS}\), and
\[
P^{\contC}(z)=P^{\contC}(z')
\quad\Longleftrightarrow\quad
z'=g_{w,c}(z)
\quad
\text{for some }g_{w,c}\in G_{\scafS}.
\]
Equivalently, for \(p,q\in\Delta_V^\circ\),
\[
p^{\contC}=q^{\contC}
\quad\Longleftrightarrow\quad
q=\Phi_w(p)
\quad
\text{for some }w\in\mathbb R^V
\text{ satisfying }w|_{\contC}=0.
\]
Consequently, for any set \(\mathcal Y\) and any map
\(F:\Delta_V^\circ\to\mathcal Y\), the following are equivalent:
\begin{enumerate}
\item[\rm(a)] $F(\Phi_w(p))=F(p)$ for every \(p\in\Delta_V^\circ\) and every \(w|_{\contC}=0\);
\item[\rm(b)]
there exists a unique map $\widetilde F:\Delta_{\contC}^\circ\to\mathcal Y$ such that $F(p)=\widetilde F(p^{\contC}),\ \text{for every }p\in\Delta_V^\circ$. \end{enumerate}
\item[\rm(ii)]
For \(x\in\Delta_A^\circ\), define $\operatorname{VarEnt}(x) := \sum_{a\in A} x_a\bigl(-\log x_a-H(x)\bigr)^2$. When \(|A|\ge2\), write
\(x_{(1)}\ge x_{(2)}\ge\cdots\) and define $\operatorname{mar}(x):=x_{(1)}-x_{(2)}$. For \(1\le k\le |A|\), let
\(\operatorname{Top}_k(x)\) denote the \(k\) selected indices under
a fixed deterministic tie-breaking rule.
Under the induced action \(\Phi\), the maps $p\longmapsto H(p), p\longmapsto\operatorname{mar}(p), p\longmapsto\operatorname{VarEnt}(p)$ are not invariant. For every
\(k\in\{1,\ldots,D-1\}\), the map $p\longmapsto\operatorname{Top}_k(p)$ is not invariant. Moreover, for every
\(h\in(0,\log D)\), the decision $E_h(p):=\mathbf1\{H(p)>h\}$ is not invariant. Separate witnesses may be used for the different
statistics.
In contrast, $z\longmapsto H(P^{\contC}(z)), z\longmapsto\operatorname{VarEnt}(P^{\contC}(z))$, and $z\longmapsto \mathbf1\{H(P^{\contC}(z))>h\}, h\in\mathbb R$, are \(G_{\scafS}\)-invariant. For
\(1\le\ell\le|\contC|\), $z\longmapsto\operatorname{Top}_\ell(P^{\contC}(z))$ is \(G_{\scafS}\)-invariant. If \(|\contC|\ge2\), then $z\longmapsto\operatorname{mar}(P^{\contC}(z))$ is also \(G_{\scafS}\)-invariant.
\item[\rm(iii)]
For every \(\tau>0\), every \(z\in\mathbb R^V\), and either block
\(B\in\{\scafS,\contC\}\), $P^B(z/\tau)=(1/\tau)\odot P^B(z)$. Consequently, if the same temperature \(\tau\) is applied to all
compared logit vectors and to both arguments of every distance, then
\[
d_{\mathrm A}\!\left(
P^{\contC}(z/\tau),P^{\contC}(z'/\tau)
\right)
=
\frac1\tau
d_{\mathrm A}\!\left(
P^{\contC}(z),P^{\contC}(z')
\right).
\]
Hence every ranking and every tie among pairwise content-channel
Aitchison distances is preserved. At each fixed position, the
coordinate ordering, including ties, within \(\contC\) is also
preserved.
Define $\mathsf H(z):=H(P(z)), s(z):=\sum_{v\in\scafS}P_v(z), G(z):=\mathsf H(z)-H(P^{\contC}(z))$. If \(D\ge3\), then for every $\Psi\in\{\mathsf H,s,G\}$ there exist finite logit vectors
\(z_\Psi,z'_\Psi\in\mathbb R^V\) and finite temperatures $0<\tau_{\Psi,-}<\tau_{\Psi,+}<\infty$ such that
\[
\begin{aligned}
&
\bigl[
\Psi(z_\Psi/\tau_{\Psi,-})
-
\Psi(z'_\Psi/\tau_{\Psi,-})
\bigr]
\\
&\hspace{25mm}\times
\bigl[
\Psi(z_\Psi/\tau_{\Psi,+})
-
\Psi(z'_\Psi/\tau_{\Psi,+})
\bigr]
<0.
\end{aligned}
\]
Thus raw entropy, gate mass, and scaffold surplus do not possess a
universal cross-position rank-preservation law under a common
temperature change.
\end{enumerate}
\end{proposition}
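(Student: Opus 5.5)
The plan is to prove the three parts in order. Each part reduces to the explicit form of the block channels, $P^{B}(z)=\clo(e^{z|_B})$, which holds because the normalizer of $P(z)$ cancels under $\clo$.

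\textbf{Part (i).} Invariance is immediate. Adding $w$ supported on $\scafS$ leaves $z|_{\contC}$ unchanged, and adding $c\mathbf 1_V$ shifts $z|_{\contC}$ by a constant, which $\clo$ removes.

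For the converse, suppose $P^{\contC}(z)=P^{\contC}(z')$. Taking logs, $z'_v-z_v$ equals a constant $c$ for every $v\in\contC$. Set $w:=z'-z-c\mathbf 1_V$; then $w|_{\contC}=0$ and $z'=g_{w,c}(z)$.

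The simplex version follows by writing $p=P(z)$ and noting $\Phi_w(P(z))=P(z+w)$. Every interior $p$ is $P(\log p)$, so the two formulations coincide.

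For the factorization (a)$\Leftrightarrow$(b), (b)$\Rightarrow$(a) is trivial. For (a)$\Rightarrow$(b), define $\widetilde F(c):=F(p)$ for any $p$ with $p^{\contC}=c$. This is well defined because two such $p$ lie in one $\Phi$-orbit by the orbit characterization, and $F$ is constant on orbits by (a). It is unique because $p\mapsto p^{\contC}$ maps $\Delta_V^\circ$ onto $\Delta_{\contC}^\circ$.

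\textbf{Part (ii).} The invariance claims for $H$, $\operatorname{VarEnt}$, $\operatorname{Top}_\ell$, $\operatorname{mar}$ and thresholds of the content channel follow at once from (i), since each is a function of $P^{\contC}(z)$. For non-invariance I will start from the uniform $p$ and tilt one scaffold token $v_0$ by $w_{v_0}=L$.

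\begin{itemize}
\item $\operatorname{VarEnt}$ is $0$ at the uniform $p$ and strictly positive at any non-uniform interior point.
\item As $L\to\infty$, $H(\Phi_w p)\to0<h<\log D=H(p)$. This witnesses $E_h$ and $H$, and by continuity also $\operatorname{mar}$ (which goes from $0$ to near $1$).
\item For $\operatorname{Top}_k$ with $k\le D-1$, I instead start from a $p$ in which $v_0$ has strictly smallest mass. Then $v_0$ is outside the top $k$, and a large tilt makes it the strict maximum, so it enters the top $k$.
\end{itemize}

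\textbf{Part (iii), scaling.} The scaling law is the identity $\clo(e^{z_B/\tau})=\clo((e^{z_B})^{1/\tau})$, i.e.\ Aitchison powering. Since $\clr$ is linear under powering, $d_{\mathrm A}$ scales by exactly $1/\tau$. Strict monotonicity of $x\mapsto x^{1/\tau}$ preserves the coordinate order and ties within $\contC$.

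\textbf{Part (iii), rank reversals.} These witnesses are the main obstacle. I will use two regimes: the $\tau\to0$ limits, governed by argmax multiplicities, and the $\tau\to\infty$ expansion $H(P(z/\tau))=\log D-\operatorname{Var}(z)/(2\tau^2)+O(\tau^{-3})$, with the variance taken uniformly over coordinates.

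\emph{For $\mathsf H$.} Compare $(0,0,-M)$ with $(-1,0,-M)$, padding any further coordinates with a very negative logit $-L$ and arguing by continuity in $L$. As $\tau\to0$, entropies tend to $\log2$ versus $0$. At large $\tau$ the variances are $2M^2/9$ versus $(2M^2-2M+2)/9$, so for $M>1$ the order flips.

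\emph{For $G$.} I reuse this pair with the varied coordinate placed in $\scafS$ and all content logits equal across $z$ and $z'$. Then $H(P^{\contC})$ agrees at every $\tau$, so $G$-differences equal $\mathsf H$-differences and the same reversal holds. Checking that the padding is compatible with an arbitrary partition, e.g.\ $m=1$, is the delicate bookkeeping.

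\emph{For $s$.} I use the gate log-odds $\mathrm{lse}(z_{\scafS}/\tau)-\mathrm{lse}(z_{\contC}/\tau)$. It tends to $\pm\infty$ according to where the argmax sits as $\tau\to0$. As $\tau\to\infty$ it behaves like $\log(m/k)+(\overline{z_{\scafS}}-\overline{z_{\contC}})/\tau$, where bars denote block means.
\begin{itemize}
\item When $k\ge2$: take scaffold logits $0$ with content $(1,-M)$, versus content $(-1,-1)$. As $\tau\to0$ the first has $s\to0$ and the second $s\to1$. For $M>3$ the first has the larger large-$\tau$ slope, so the order of $s$ reverses.
\item When $k=1$ (so $m\ge2$): swap the roles of the blocks symmetrically.
\end{itemize}
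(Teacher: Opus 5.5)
Your proposal is correct, and its overall structure matches the paper's. Part (i) goes through the constant log-difference on $\contC$ and the identity $\Phi_w(P(z))=P(z+w)$. Part (ii) uses tilts of a single scaffold coordinate. The reversals in part (iii) combine the $\tau\downarrow0$ argmax limits, the $\tau\to\infty$ uniform-variance (or block-mean) expansion, and padding with $-L$.

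The one real difference is your surplus witness. Your entropy pair $(0,0,-M)$ versus $(-1,0,-M)$ differs in a single coordinate. Placing that coordinate in $\scafS$ therefore gives $P^{\contC}(z/\tau)=P^{\contC}(z'/\tau)$ for every $\tau$, so the $G$-gap equals the $\mathsf H$-gap identically. The paper instead uses a separate pair, $u=(1,0,0)$ and $u'=(3,0,1)$. It needs a second-order expansion of $G$ involving the variance of the content subvector, plus a separate argument when $|\contC|=1$.

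Your reduction already covers every partition with $D\ge3$. You need only one element of $\scafS$ and two further elements of $V$, and every other coordinate, padded or not, is shared by $z$ and $z'$. So the bookkeeping you flag as delicate is in fact immediate.

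The one point of care applies here and also to the extra content coordinates of your gate witness when $k>2$. You must fix $\tau_\pm$ on the reduced witness before sending $L\to\infty$. Padding changes the large-$\tau$ expansion, whereas at a fixed temperature the padded quantities converge to the reduced ones.
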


Part (i) is a per-position measurement charter with an explicitly limited writ: it governs perturbations of the current logits at a fixed partition, and it does \emph{not} deliver trajectory-level convention independence, because emitting a different scaffold token changes the hidden state and hence future distributions. That trajectory-level question is precisely the inertness/leakage question, which \S\ref{sec:firstpass} makes measurable; and the partition itself remains a modelling choice audited in App.~\ref{app:scaffold}. Within that writ the charter has teeth: any conclusion advertised as being about the \emph{content} of the next decision must be computable from $\pC$, or it is partly a statement about scaffolding conventions.

\subsection{Channel-resolved entropy dynamics}
\label{sec:theory-dyn}

\citet{cui2025entropy} trace entropy collapse to a covariance: for a softmax policy, the first-order entropy change under a logit step is $-\operatorname{Cov}_{a\sim p}(\log p_a,\Delta z_a)$. That covariance splits exactly along the partition.

\begin{proposition}[Channel dynamics]
\label{prop:dynamics-full}
Let \(V=\scafS\sqcup\contC\) be a finite partition with
\(\scafS,\contC\neq\varnothing\). Let \(I\subset\mathbb R\) be an open
interval and let \(z:I\to\mathbb R^V\) be continuously differentiable.
For \(a\in V\), define $p_a(t) := \frac{\exp(z_a(t))} {\sum_{v\in V}\exp(z_v(t))}, u_a(t):=\dot z_a(t)$. For a probability vector \(r=(r_i)_{i\in E}\), define $H(r):=-\sum_{i\in E}r_i\log r_i, 0\log0:=0$, and
\[
\operatorname{Cov}_{i\sim r}(f_i,g_i)
:=
\sum_{i\in E}r_i f_i g_i
-
\left(\sum_{i\in E}r_i f_i\right)
\left(\sum_{i\in E}r_i g_i\right).
\]
Also define $\Hb(s):=-s\log s-(1-s)\log(1-s)$. For \(B\in\{\scafS,\contC\}\), set $s(t):=\sum_{a\in\scafS}p_a(t), m_{\scafS}(t):=s(t), m_{\contC}(t):=1-s(t)$, and $p_a^B(t):=\frac{p_a(t)}{m_B(t)} \quad(a\in B)$. Write $H^B(t):=H(p^B(t)), \bar u_B(t) := \mathbb E_{a\sim p^B(t)}[u_a(t)]$, and define $K(p(t)) := \log\frac{1-s(t)}{s(t)} +H^{\scafS}(t)-H^{\contC}(t)$. Then, for every \(t\in I\),
\[
\frac{d}{dt}H(p(t))
=
-\operatorname{Cov}_{a\sim p(t)}
\bigl(\log p_a(t),u_a(t)\bigr)
=
\Gamma_{\mathrm{gate}}(t)
+\Gamma_{\scafS}(t)
+\Gamma_{\contC}(t),
\]
where $\Gamma_{\mathrm{gate}}(t) = K(p(t))\,s(t)(1-s(t)) \bigl(\bar u_{\scafS}(t)-\bar u_{\contC}(t)\bigr)$ and
\[
\Gamma_B(t)
=
-m_B(t)
\operatorname{Cov}_{a\sim p^B(t)}
\bigl(\log p_a^B(t),u_a(t)\bigr),
\qquad
B\in\{\scafS,\contC\}.
\]
Moreover, $\dot s(t) = s(t)(1-s(t)) \bigl(\bar u_{\scafS}(t)-\bar u_{\contC}(t)\bigr)$, $\dot p_a^B(t) = p_a^B(t) \bigl(u_a(t)-\bar u_B(t)\bigr)$, and $\dot H^B(t) = -\operatorname{Cov}_{a\sim p^B(t)} \bigl(\log p_a^B(t),u_a(t)\bigr)$. Consequently,
\[
\Gamma_{\mathrm{gate}}(t)
=
K(p(t))\dot s(t)
=
\frac{d}{dt}\Hb(s(t))
+
\bigl(H^{\scafS}(t)-H^{\contC}(t)\bigr)\dot s(t).
\]
If, at some \(t\in I\), \(u_a(t)\) is constant over \(a\in B\), then $\dot p_a^B(t)=0\quad(a\in B), \Gamma_B(t)=0$. Thus, if \(u(t)\) is constant on both blocks, only
\(\Gamma_{\mathrm{gate}}(t)\) can be nonzero.
\medskip
\noindent
\textbf{Local categorical natural-gradient specialization.}
Suppose, in addition, that the categorical policy at the fixed prefix
is parameterized by its own unrestricted logit vector. Let
\(A:I\to\mathbb R^V\) be a finite conditional action-score vector.
In a policy-gradient application, \(A(t)\) may be the exact conditional
advantage; for the standard centered advantage, $\mathbb E_{a\sim p(t)}[A_a(t)]=0$. More generally, adding the same scalar baseline to every component of
\(A(t)\) leaves all conclusions below unchanged.
Define $\bar A(t) := \mathbb E_{a\sim p(t)}[A_a(t)], \bar A_B(t) := \mathbb E_{a\sim p^B(t)}[A_a(t)]$, and $\langle A(t)\rangle_{\mathrm{unif}} := \frac{1}{|V|}\sum_{a\in V}A_a(t)$. For every \(t\in I\), let $F(t) := \operatorname{diag}(p(t))-p(t)p(t)^\top$ and $g(t) := \mathbb E_{a\sim p(t)} \bigl[A_a(t)(e_a-p(t))\bigr]$, where \(e_a\) is the \(a\)-th standard basis vector.
Assume that the actual logit path satisfies the exact undamped local
natural-gradient ascent equation $u(t)=\dot z(t) = u^{\mathrm{NG}}(t) := \eta F(t)^\dagger g(t), t\in I$, for some \(\eta>0\), where \(F(t)^\dagger\) is the Moore--Penrose
pseudoinverse. Then $u^{\mathrm{NG}}(t) = \eta \left( A(t)-\langle A(t)\rangle_{\mathrm{unif}}\mathbf1 \right)$. Equivalently, $u^{\mathrm{NG}}(t)=\eta A(t)+c(t)\mathbf1, c(t):=-\eta\langle A(t)\rangle_{\mathrm{unif}}$. The induced probability velocities satisfy $\dot p_a(t) = \eta p_a(t)\bigl(A_a(t)-\bar A(t)\bigr)$, $\dot p_a^B(t) = \eta p_a^B(t)\bigl(A_a(t)-\bar A_B(t)\bigr)$, and $\dot s(t) = \eta s(t)(1-s(t)) \bigl(\bar A_{\scafS}(t)-\bar A_{\contC}(t)\bigr)$. The entropy contributions are $\Gamma_B(t) = -\eta m_B(t) \operatorname{Cov}_{a\sim p^B(t)} \bigl(\log p_a^B(t),A_a(t)\bigr)$ and $\Gamma_{\mathrm{gate}}(t) = \eta K(p(t))s(t)(1-s(t)) \bigl(\bar A_{\scafS}(t)-\bar A_{\contC}(t)\bigr)$. If, at some \(t\in I\), there exists a scalar \(\alpha_B(t)\) such
that $A_a(t)=\alpha_B(t) ,\ (a\in B)$, then $\dot p_a^B(t)=0\quad(a\in B), \Gamma_B(t)=0$. If, for every \(t\) in an interval \(J\subset I\), there exists
\(\alpha_B(t)\) such that $A_a(t)=\alpha_B(t) ,\ (a\in B)$, then \(p^B(t)\) is constant on \(J\).
For every \(t\in I\), $\dot s(t)\neq0 \quad\Longleftrightarrow\quad \bar A_{\scafS}(t)\neq\bar A_{\contC}(t)$, whereas
\[
\Gamma_{\mathrm{gate}}(t)\neq0
\quad\Longleftrightarrow\quad
K(p(t))
\bigl(\bar A_{\scafS}(t)-\bar A_{\contC}(t)\bigr)
\neq0.
\]
Moreover,
\[
\operatorname{sgn}\bigl(\Gamma_{\mathrm{gate}}(t)\bigr)
=
\operatorname{sgn}\!\left(
K(p(t))
\bigl(\bar A_{\scafS}(t)-\bar A_{\contC}(t)\bigr)
\right).
\]
\end{proposition}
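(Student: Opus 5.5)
The plan is to establish the softmax velocity identities first, then obtain the entropy split from them. Because $p_a\propto e^{z_a}$, we have $\dot p_a=p_a(u_a-\bar u)$ with $\bar u=\mathbb E_{p}[u]$. Summing over $\scafS$ gives $\dot s=s(\bar u_{\scafS}-\bar u)$. Since $\bar u=s\bar u_{\scafS}+(1-s)\bar u_{\contC}$, this is $\dot s=s(1-s)(\bar u_{\scafS}-\bar u_{\contC})$. The block channel $p^B$ is itself a softmax of $z|_B$, so $\dot p^B_a=p^B_a(u_a-\bar u_B)$ follows by the same computation. The entropy velocity of any softmax is $\dot H=-\sum_a\dot p_a\log p_a=-\operatorname{Cov}_p(\log p_a,u_a)$, because $\sum_a\dot p_a=0$. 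Applied blockwise, this yields $\dot H^B=-\operatorname{Cov}_{p^B}(\log p^B_a,u_a)$.

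For the decomposition I will differentiate the chain identity of Proposition~\ref{thm:chain}(i), not split the covariance by hand:
\[
\dot H=\Hb'(s)\dot s+\dot s\,(H^{\scafS}-H^{\contC})+s\dot H^{\scafS}+(1-s)\dot H^{\contC}.
\]
Here $\Hb'(s)=\log\frac{1-s}{s}$, so the first two terms combine to $K(p)\dot s$, which equals $\Gamma_{\mathrm{gate}}$. The remaining terms are $m_B\dot H^B=\Gamma_B$. This also gives the identity $\Gamma_{\mathrm{gate}}=\frac{d}{dt}\Hb(s)+(H^{\scafS}-H^{\contC})\dot s$. Equality with $-\operatorname{Cov}_p(\log p,u)$ is the global softmax formula. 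If $u$ is constant on $B$, then $u_a=\bar u_B$ there, so $\dot p^B=0$ and the covariance vanishes.

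For the natural-gradient specialization, the main step is computing $F^\dagger g$. First, $g=\sum_a p_aA_a(e_a-p)=F A$, since $F A=\mathrm{diag}(p)A-p(p^\top A)$. Next, $F$ is symmetric with kernel spanned by $\mathbf 1$, because $p$ is interior. Hence $F^\dagger F$ is the orthogonal projector onto $\mathbf 1^\perp$, and $F^\dagger FA=A-\langle A\rangle_{\mathrm{unif}}\mathbf 1$. This is the step I expect needs care: one must check that interiority ($p_a>0$) gives exactly a one-dimensional kernel. Take $F x=0$; this gives $p_a(x_a-p^\top x)=0$ for all $a$, so $x$ is constant.

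With $u=\eta A+c\mathbf 1$, substitution into the general formulas gives everything else. A constant shift cancels in every centered quantity, so $\dot p_a=\eta p_a(A_a-\bar A)$ and $\dot p^B_a=\eta p^B_a(A_a-\bar A_B)$. Likewise $\dot s=\eta s(1-s)(\bar A_{\scafS}-\bar A_{\contC})$, and $\Gamma_B,\Gamma_{\mathrm{gate}}$ take the stated forms. Baseline invariance holds for the same reason. If $A$ is block-constant at time $t$, then $\dot p^B=0$. If it is block-constant on all of $J$, then $\dot p^B\equiv0$ there, so $p^B$ is constant by the mean value theorem applied to each coordinate. Finally, $\eta s(1-s)>0$, which gives the equivalences for $\dot s\neq0$ and $\Gamma_{\mathrm{gate}}\neq0$ and the sign formula.
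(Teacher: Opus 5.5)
Your proposal is correct and follows essentially the same route as the paper: the softmax velocity $\dot p_a=p_a(u_a-\bar u)$, the block velocities, differentiation of the grouping identity to obtain $\Gamma_{\mathrm{gate}}+\Gamma_{\scafS}+\Gamma_{\contC}$, and the identity $g=FA$ with $F^\dagger F$ the orthogonal projector onto $\mathbf 1^\perp$. The differences are cosmetic: you get $\dot p^B$ by recognizing $p^B$ as the softmax of $z|_B$ (the paper uses the quotient rule), and you identify $\ker F$ by solving $Fx=0$ directly (the paper uses $x^\top F x=\operatorname{Var}_{p}(x)$).
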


The optimizer matters, and the vanilla case is \emph{not} gate-only:

\begin{corollary}[Two optimizer regimes]
\label{cor:reinforce-full}
The expected REINFORCE/GRPO logit update is $\mathbb{E}[\Delta z_a]=\eta\,p_a(\hat A_a-\bar{\hat A})$, which is proportional to $p^{B}$ within each block even when $\hat A$ is block-constant. For a block-constant advantage gap $\Delta A=\bar A_{\scafS}-\bar A_{\contC}$ the within-block terms are exactly
\[
\Gamma_{\scafS}=-\eta\,s^{2}(1-s)\,\Delta A\;V_{\scafS},\qquad
\Gamma_{\contC}=+\eta\,s(1-s)^{2}\,\Delta A\;V_{\contC},\qquad
V_{B}:=\operatorname{Cov}_{a\sim p^{B}}\!\big(\log p^{B}_a,\,p^{B}_a\big)\ \ge 0
\]
($V_B\ge0$ by comonotonicity, $=0$ iff $p^B$ is uniform). A format reward ($\Delta A>0$) therefore \emph{sharpens the scaffold channel} (the connective habit) and, to first order, \emph{spreads the content channel}, on top of the between-block movement; only under natural gradient, or exactly blockwise-constant velocities, is the effect confined to the gate. (Proof, with a numerical counterexample to the gate-only reading: App.~\ref{app:proofs}.)
\end{corollary}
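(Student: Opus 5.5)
The plan is to read the expected REINFORCE/GRPO update as a logit velocity $u$ and substitute it into the exact decomposition of Proposition~\ref{prop:dynamics-full}. That proposition already gives $\Gamma_B=-m_B\operatorname{Cov}_{a\sim p^B}(\log p^B_a,u_a)$ for an arbitrary $C^1$ logit path. So nothing new is needed about entropy; what remains is computing $u$, evaluating two covariances, and proving a sign.

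\textbf{Step 1 (the expected update).} For a directly parameterized categorical policy, $\nabla_z\log p_a=e_a-p$. Hence
\[
\mathbb E_{a\sim p}\bigl[\eta\hat A_a(e_a-p)\bigr]_b=\eta\bigl(p_b\hat A_b-p_b\bar{\hat A}\bigr)=\eta\,p_b(\hat A_b-\bar{\hat A}).
\]
This is the stated formula. Adding a scalar baseline to $\hat A$ cancels inside $\hat A_b-\bar{\hat A}$. I take $u=\mathbb E[\Delta z]$ as the velocity in Proposition~\ref{prop:dynamics-full}; this is the same scope caveat as in the main text.

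\textbf{Step 2 (block-constant advantage).} Put $A_a=\alpha_{\scafS}$ on $\scafS$ and $A_a=\alpha_{\contC}$ on $\contC$, so $\bar A=s\alpha_{\scafS}+(1-s)\alpha_{\contC}$ and $\Delta A=\alpha_{\scafS}-\alpha_{\contC}$.
\begin{itemize}
\item For $a\in\scafS$: $A_a-\bar A=(1-s)\Delta A$ and $p_a=s\,p^{\scafS}_a$, so $u_a=\eta\,s(1-s)\Delta A\,p^{\scafS}_a$.
\item For $a\in\contC$: $A_a-\bar A=-s\Delta A$ and $p_a=(1-s)p^{\contC}_a$, so $u_a=-\eta\,s(1-s)\Delta A\,p^{\contC}_a$.
\end{itemize}
Thus $u$ is proportional to $p^B$ within each block, not constant on it. This is why the ``only $\Gamma_{\mathrm{gate}}$'' clause of Proposition~\ref{prop:dynamics-full} does not apply here. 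Pulling the scalars out of the covariance gives
\[
\Gamma_{\scafS}=-s\cdot\eta s(1-s)\Delta A\,V_{\scafS},\qquad
\Gamma_{\contC}=-(1-s)\cdot\bigl(-\eta s(1-s)\Delta A\bigr)V_{\contC}.
\]
These are exactly $-\eta s^2(1-s)\Delta A\,V_{\scafS}$ and $+\eta s(1-s)^2\Delta A\,V_{\contC}$. The gate term is still given by the proposition, with $\bar u_{\scafS}-\bar u_{\contC}=\eta s(1-s)\Delta A\bigl(\sum_{\scafS}(p^{\scafS}_a)^2+\sum_{\contC}(p^{\contC}_a)^2\bigr)$, and I will note this only in passing.

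\textbf{Step 3 (sign of $V_B$).} Write $V_B=\operatorname{Cov}_{a\sim p^B}(f(p^B_a),p^B_a)$ with $f=\log$ strictly increasing. The variables $\log p^B_a$ and $p^B_a$ are comonotone functions of the same random index, so Chebyshev's sum inequality gives $V_B\ge0$. A direct proof uses the symmetrized form
\[
V_B=\tfrac12\sum_{a,b}p^B_ap^B_b\bigl(\log p^B_a-\log p^B_b\bigr)\bigl(p^B_a-p^B_b\bigr).
\]
Each summand is nonnegative, and a summand is zero iff $p^B_a=p^B_b$. Since softmax gives full support, every weight $p^B_ap^B_b$ is positive, so $V_B=0$ iff $p^B$ is uniform. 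The interpretive reading then follows. With $\Delta A>0$, $\dot H^{\scafS}\le0$ (sharpening) and $\dot H^{\contC}\ge0$ (spreading), since $\dot H^B=\Gamma_B/m_B$. Under natural gradient, Proposition~\ref{prop:dynamics-full} already shows that a block-constant $A$ moves only the gate term.

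\textbf{Counterexample and main obstacle.} For the numerical counterexample to the gate-only reading, I would take $|\scafS|=|\contC|=2$ with $s=1/2$, $p^{\scafS}=(0.9,0.1)$, $p^{\contC}=(0.7,0.3)$ and $\Delta A=1$. Then $V_{\scafS}>0$, so $\Gamma_{\scafS}\ne0$ explicitly.

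The only delicate point is conceptual rather than computational. The corollary is a statement about the expected first-order velocity, so I must make explicit that Proposition~\ref{prop:dynamics-full} is applied to the deterministic flow $\dot z=\mathbb E[\Delta z]/\text{step}$, not to stochastic per-sample updates. I also need $s\in(0,1)$ with full support so that $p^B$ and $\log p^B$ are defined. The rest is substitution.
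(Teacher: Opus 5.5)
Your proposal is correct and follows the paper's proof step for step. The paper also computes the expected velocity $\eta\,p_a(\hat A_a-\bar{\hat A})$, shows it equals $\pm\eta s(1-s)\Delta A\,p^B_a$ on each block, pulls that constant out of $\Gamma_B=-m_B\operatorname{Cov}_{p^B}(\log p^B,u)$, and signs $V_B$ by Chebyshev's sum inequality. Your symmetrized double-sum makes the equality case ($V_B=0$ iff $p^B$ is uniform, using full support) more explicit than the paper's one-line appeal to comonotonicity, and your counterexample differs from the paper's $p=(0.4,0.1\mid 0.25,0.25)$ only in its numbers.
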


This turns the decomposition from an accounting identity into a statement about mechanism with the optimizer in the loop: it predicts which training signals move which channel, and in which direction, prospectively. \S\ref{sec:insilico} confirms the predictions in silico under the sampled REINFORCE update itself; the GRPO protocol tests them at model scale and adds the interventions they motivate (\S\ref{sec:protocol}).

\subsection{Ranking reversals and the selection wedge}

\begin{theorem}[Fork-verdict reversal]
\label{thm:reversal-full}
Let $V=\scafS\sqcup\contC, m:=|\scafS|\geq1, k:=|\contC|\geq2, D:=m+k$. For \(p\in\Delta_V^\circ\), put
\[
p(\contC):=\sum_{v\in\contC}p_v,
\qquad
p^{\contC}
:=
\clo(p|_{\contC})
=
\left(
\frac{p_v}{p(\contC)}
\right)_{v\in\contC},
\]
and define $\HC(p):=H(p^{\contC}), G(p):=H(p)-\HC(p)$. Thus \(G\) is the raw-minus-content residual. For
\(p,q\in\Delta_V^\circ\), define $\delta(p,q):=\HC(p)-\HC(q), \gamma(p,q):=G(p)-G(q)$. \begin{enumerate}
\item[{\rm(i)}]
For every \(p,q\in\Delta_V^\circ\), $H(p)-H(q)=\delta(p,q)+\gamma(p,q)$. Consequently, \(H\) and \(\HC\) rank \(p\) and \(q\) in opposite
strict orders if and only if $\delta(p,q) \bigl(\delta(p,q)+\gamma(p,q)\bigr)<0$. If $\delta(p,q)=0 ,\ \text{or},\  \delta(p,q)+\gamma(p,q)=0$, at least one score ties, so \((p,q)\) is not a strict reversal.
Define
\[
\mathcal R
:=
\left\{
(p,q)\in(\Delta_V^\circ)^2:
\bigl(\HC(p)-\HC(q)\bigr)
\bigl(H(p)-H(q)\bigr)<0
\right\},
\]
and let
\[
\mathcal A_V
:=
\operatorname{aff}(\Delta_V)
=
\left\{
x\in\mathbb R^V:
\sum_{v\in V}x_v=1
\right\}.
\]
Then \(\mathcal R\) is nonempty and relatively open in
\(\mathcal A_V\times\mathcal A_V\). Consequently, it has positive
induced \(2(D-1)\)-dimensional Lebesgue measure, equivalently
positive \(2(D-1)\)-dimensional Hausdorff measure.
\item[{\rm(ii)}]
There exist \(p_\star,q_\star\in\Delta_V^\circ\) such that $H(p_\star)<H(q_\star) ,\ \text{and},\  \HC(p_\star)>\HC(q_\star)$. \item[{\rm(iii)}]
For every \(\rho\in(0,\tfrac12]\) and every even positive integer \(N\)
with \(\rho N\) a positive integer, there exist
\(p_1,\ldots,p_N\in\Delta_V^\circ\) such that
\[
\operatorname{Top}_{\rho N}
\bigl((H(p_i))_{i=1}^N\bigr)
\cap
\operatorname{Top}_{\rho N}
\bigl((\HC(p_i))_{i=1}^N\bigr)
=
\varnothing.
\]
Thus the two exact top-\(\rho\) sets, in particular the top-\(20\%\)
sets used throughout, can have Jaccard similarity zero
(the exact top-\(K\) rule, larger scores first with lexicographic
tie-breaking, is fixed in App.~\ref{app:proofs}).
More generally, let \(P\) be a Borel-measurable
\(\Delta_V^\circ\)-valued random vector and define $X:=\HC(P), W:=G(P)=H(P)-\HC(P), Y:=X+W=H(P)$. Let $\nu:=\mathcal L(X,W)$ denote the joint law of \((X,W)\). For a real-valued random variable
\(Z\), define
\[
F_Z(z):=\mathbb P(Z\leq z),
\qquad
F_Z^{-1}(u)
:=
\inf\left\{
z\in\mathbb R:
F_Z(z)\geq u
\right\}.
\]
Fix \(\rho\in(0,1)\), set $q_C:=F_X^{-1}(1-\rho), q_H:=F_Y^{-1}(1-\rho)$, and assume $\mathbb P(X=q_C)=0, \mathbb P(Y=q_H)=0$. Define $A_C:=\{X\geq q_C\}, A_H:=\{Y\geq q_H\}$. Then $\mathbb P(A_C)=\mathbb P(A_H)=\rho$. Their intersection mass is
\[
\begin{aligned}
\omega_\rho
&:=
\mathbb P(A_C\cap A_H) \\
&=
\nu\left(
\left\{
(x,w)\in\mathbb R^2:
x\geq q_C,\quad x+w\geq q_H
\right\}
\right).
\end{aligned}
\]
The retention probabilities in the two directions coincide and equal $\operatorname{Ret}_\rho := \mathbb P(A_C\mid A_H) = \mathbb P(A_H\mid A_C) = \frac{\omega_\rho}{\rho}$. The population Jaccard similarity is $J_\rho := \frac{\mathbb P(A_C\cap A_H)} {\mathbb P(A_C\cup A_H)} = \frac{\omega_\rho}{2\rho-\omega_\rho}$. Moreover, $\max\{0,2\rho-1\} \leq \omega_\rho \leq \rho$, so every displayed denominator is strictly positive.
\end{enumerate}
\end{theorem}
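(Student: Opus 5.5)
Part (i) is bookkeeping plus a topological argument. The identity $H(p)-H(q)=\delta+\gamma$ is immediate from $G:=H-\HC$. A strict reversal means $\HC(p)-\HC(q)$ and $H(p)-H(q)$ are nonzero with opposite signs, i.e.\ $\delta(\delta+\gamma)<0$; if either factor vanishes, one score ties.

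For openness, note that $H$ and $\HC$ are continuous on the open simplex $\Delta_V^\circ$, since $p\mapsto p^{\contC}$ is continuous where $p(\contC)>0$. Hence the product $(\HC(p)-\HC(q))(H(p)-H(q))$ is continuous on $\Delta_V^\circ\times\Delta_V^\circ$, and $\mathcal R$ is the preimage of $(-\infty,0)$ under it. Because $\Delta_V^\circ$ is relatively open in $\mathcal A_V$, $\mathcal R$ is relatively open in $\mathcal A_V\times\mathcal A_V$. Once we know $\mathcal R\neq\varnothing$, it contains a relatively open ball of dimension $2(D-1)$ and so has positive induced Lebesgue (equivalently Hausdorff) measure. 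Nonemptiness follows from the witness in (ii).

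For (ii) I would build the witness explicitly, along the lines of Figure~\ref{fig:punchline}(a).
\begin{itemize}
\item Let $q_\star$ have content channel uniform on $\contC$ (so $\HC=\log k>0$, using $k\ge2$) and tiny gate mass $s\to0$. Then $H(q_\star)\to\log k$.
\item Let $p_\star$ have content channel close to uniform on two content tokens, then perturbed so that $\HC(p_\star)$ is slightly below $\log 2$.
\end{itemize}
This does not work directly when $k=2$, so a cleaner route is to use Theorem~\ref{thm:chain-full}(iii). Take the content channel of $q_\star$ as $c_q$ with $\HC(q_\star)=\epsilon$ small but positive, and give $q_\star$ gate mass $\sigma=1/2$ with a uniform scaffold channel, so $H(q_\star)\ge\log 2$. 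Give $p_\star$ gate mass near $0$ and content channel with entropy $2\epsilon$, so $H(p_\star)\approx2\epsilon<\log 2$ while $\HC(p_\star)>\HC(q_\star)$. This works even when $m=1$, because $\Hb(1/2)=\log 2$ already supplies the gap. Interiority can be secured by perturbing any zero coordinates slightly and appealing to continuity.

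For the finite disjointness claim in (iii), take $N/2$ copies of a perturbed $p_\star$ and $N/2$ copies of a perturbed $q_\star$. Perturb them slightly and distinctly so that all scores are distinct, while keeping every $p$-type above every $q$-type in $\HC$ and below in $H$; this is possible by continuity and the strict gaps. Since $\rho N\le N/2$, the top-$\rho N$ set under $\HC$ lies among the $p$-types and the top set under $H$ lies among the $q$-types, so the two sets are disjoint.

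For the population statements in (iii):
\begin{itemize}
\item By the no-atom assumption, $F_X(q_C)=1-\rho$: right-continuity and the definition of the quantile give $F_X(q_C)\ge1-\rho$, and no atom at $q_C$ together with the infimum gives $F_X(q_C-)\le1-\rho$. Hence $\mathbb P(X\ge q_C)=\rho$, and likewise $\mathbb P(A_H)=\rho$.
\item The formula for $\omega_\rho$ is the pushforward of $\nu$ under $(x,w)\mapsto(x,x+w)$.
\item Retention in both directions equals $\omega_\rho/\rho$ because the two events have equal mass. The Jaccard formula follows from inclusion–exclusion.
\item The bounds on $\omega_\rho$ are the Fréchet bounds, which give $2\rho-\omega_\rho\ge\rho>0$.
\end{itemize}

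The main obstacle is making the witness in (ii) robust in the edge cases $m=1$ and $k=2$ while keeping every point strictly interior. I would handle it by fixing concrete small-$\epsilon$ formulas and checking the strict inequalities in the limit, then invoking continuity.
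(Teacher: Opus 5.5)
Your proposal is correct and matches the paper's proof in structure: the sign-of-the-product criterion and the preimage-of-$(-\infty,0)$ openness argument for (i), an explicit interior witness pair for (ii) that also gives nonemptiness, two groups of size $N/2$ for the disjoint top sets, and the no-atom quantile argument with inclusion--exclusion and Fr\'echet bounds for the population formulas. The only real difference is the witness. The paper puts a uniform content channel behind gate mass $1-\varepsilon$, so the content gap is about $\log k$ against a raw gap of about $\log 2$. Your pair instead puts gate $\tfrac12$ on $q_\star$ with content entropy $\epsilon$, and gate near $0$ with content entropy $2\epsilon$ on $p_\star$. That is equally valid, already interior, and covers $m=1$ and $k=2$, but it exhibits only an $\epsilon$-sized content gap. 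Two points are dispensable: the perturbation to make all scores distinct in (iii) is unnecessary under the fixed tie-breaking rule, and your first, abandoned witness idea can simply be deleted.
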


Figure~\ref{fig:punchline}'s witness realizes the strict-reversal
condition of (i) with $\delta=-1.260$, $\gamma=+1.353$; the
$\nicefrac{999}{2000}$/$\nicefrac{1}{18000}$ witness realizes (ii)
($H=0.535<1.248$ yet $\HC=2.303>0.010$); and exact $\HC$ ties are
common in practice (many positions admit one plausible continuation),
so flip rates are always reported on non-tied pairs with the tie
fraction alongside. The population objects $\omega_\rho$,
$\operatorname{Ret}_\rho$, $J_\rho$ of (iii) are exactly what the
protocol estimates on any measured population; \emph{prevalence}
on real models is an empirical question the protocol answers,
alongside the companion estimands $(1-s)\HC$ and $\Hb+(1-s)\HC$
(\S\ref{sec:protocol}).

The $(\delta,\gamma)$ wedge (Figure~\ref{fig:wedge}a) is the paper's regime map: flip rates are geometry, populations decide how much of the geometry is occupied, and \S\ref{sec:protocol} measures the occupancy on real reasoning models, on RLVR selection, and on compression pipelines.

\subsection{What the scaffold can and cannot do for exploration}

Entropy interventions in RLVR are justified as protecting exploration,
ultimately answer diversity. The channels resolve that
justification into three exact identities, stated
below in the content-censored coordinates the proposition introduces
(the block-flag-plus-content-token view $Z_t$).

\begin{proposition}[Exploration bound with exact leakage]
\label{prop:exploration-full}
Fix a prompt \(x\), and let \(\mathbb P\) denote the probability law
governing the generated trajectory and answer at that prompt. Let $V=\scafS\mathbin{\dot\cup}\contC, \scafS\neq\varnothing, \contC\neq\varnothing$, be a finite vocabulary partition, and fix \(T\geq1\).
If variable-length trajectories are represented by padding, assume
that a distinguished symbol \(\mathrm{EOS}\in\scafS\) has already
been included in \(V\). Assume that EOS is absorbing: $\mathbb P(Y_t=\mathrm{EOS}\mid Y_{<t}=h)=1$ whenever \(h\) contains \(\mathrm{EOS}\). If all trajectories have
length \(T\), no EOS assumption is required.
Let \(Y_{1:T}\in V^T\). For every \(t\) and \(h\in V^{t-1}\), choose a
version of the actual conditional sampling law $p_t(\,\cdot\mid h) := \mathbb P(Y_t\in\cdot\mid Y_{<t}=h)$. On positive-probability histories this is the law actually generating
\(Y_t\). On null histories it may be chosen arbitrarily, subject to
the absorbing-EOS convention when applicable.
Let \(\alpha\) be a discrete random variable jointly distributed with
\(Y_{1:T}\), and assume \(H(\alpha)<\infty\). For \(h\in V^{t-1}\), set $s_t(h) := \sum_{v\in\scafS}p_t(v\mid h)$. When \(s_t(h)>0\), define $p_t^{\scafS}(v\mid h) := \frac{p_t(v\mid h)}{s_t(h)}, v\in\scafS$, and, when \(s_t(h)<1\), define $p_t^{\contC}(v\mid h) := \frac{p_t(v\mid h)}{1-s_t(h)}, v\in\contC$. If \(s_t(h)=0\), choose an arbitrary probability distribution
\(p_t^{\scafS}(\cdot\mid h)\) on \(\scafS\). If \(s_t(h)=1\), choose
an arbitrary probability distribution \(p_t^{\contC}(\cdot\mid h)\)
on \(\contC\). Define
\[
H_t^{\scafS}(h)
:=
H\!\left(p_t^{\scafS}(\,\cdot\mid h)\right),
\qquad
H_t^{\contC}(h)
:=
H\!\left(p_t^{\contC}(\,\cdot\mid h)\right),
\]
with the weighted-zero conventions $s_t(h)H_t^{\scafS}(h):=0 \quad\text{when }s_t(h)=0$, and $(1-s_t(h))H_t^{\contC}(h):=0 \quad\text{when }s_t(h)=1$. The arbitrary zero-mass choices do not affect any quantity below.
Write $s_t:=s_t(Y_{<t}), H_t^{\scafS}:=H_t^{\scafS}(Y_{<t}), H_t^{\contC}:=H_t^{\contC}(Y_{<t})$. Let \(\dagger\notin V\) be an erasure symbol, put $\mathcal Z:=\contC\cup\{\dagger\}$, and define \(Z_t\in\mathcal Z\) by
\[
Z_t
:=
\begin{cases}
Y_t, & Y_t\in\contC,\\
\dagger, & Y_t\in\scafS.
\end{cases}
\]
Thus \(Z_t\) records whether the token is scaffold or content and
retains the content-token identity while erasing the scaffold-token
identity.
Define $B_{\mathrm{gc}} := \sum_{t=1}^{T} \mathbb E\!\left[ \Hb(s_t)+(1-s_t)H_t^{\contC} \right]$, $B_{\mathrm{sc}} := \sum_{t=1}^{T} \mathbb E\!\left[s_tH_t^{\scafS}\right]$, and $\lek := \sum_{t=1}^{T} I(Z_t;Y_{<t}\mid Z_{<t})$. The \(t=1\) summand is zero because \(Y_{<1}\) and \(Z_{<1}\) are
constant. Then
\begin{align}
H(Z_{1:T})
&=
B_{\mathrm{gc}}+\lek,
\label{eq:exploration-censored-entropy}\\
\lek
&=
B_{\mathrm{sc}}
-
H(Y_{1:T}\mid Z_{1:T}),
\label{eq:exploration-leakage-slack}\\
H(\alpha)
&=
B_{\mathrm{gc}}+\lek
+H(\alpha\mid Z_{1:T})
-H(Z_{1:T}\mid\alpha).
\label{eq:exploration-answer-identity}
\end{align}
Consequently, $0\leq\lek\leq B_{\mathrm{sc}}$ and $H(\alpha) \leq B_{\mathrm{gc}}+\lek+H(\alpha\mid Z_{1:T})$. In particular, under the answer-sufficiency condition $H(\alpha\mid Z_{1:T})=0$, one has $H(\alpha) \leq B_{\mathrm{gc}}+\lek \leq B_{\mathrm{gc}}+B_{\mathrm{sc}}$. The equality cases are
\[
\lek=B_{\mathrm{sc}}
\quad\Longleftrightarrow\quad
H(Y_{1:T}\mid Z_{1:T})=0
\quad\Longleftrightarrow\quad
Y_{1:T}
\text{ is almost surely a function of }Z_{1:T},
\]
\begin{align*}
H(\alpha)
&=
B_{\mathrm{gc}}+\lek+H(\alpha\mid Z_{1:T})\\
&\quad\Longleftrightarrow\quad
H(Z_{1:T}\mid\alpha)=0\\
&\quad\Longleftrightarrow\quad
Z_{1:T}
\text{ is almost surely a function of }\alpha,
\end{align*}
and $H(\alpha\mid Z_{1:T})=0 \quad\Longleftrightarrow\quad \alpha \text{ is almost surely a function of }Z_{1:T}$. Moreover, $\lek=0 \quad\Longleftrightarrow\quad Z_t\mathrel{\perp\!\!\!\perp}Y_{<t}\mid Z_{<t} \quad\text{for every }t$. Equivalently, for every \(t\), there exists a stochastic kernel
\(K_t\) from \(\mathcal Z^{t-1}\) to \(\mathcal Z\) such that, for every
\(z\in\mathcal Z\), $\mathbb P(Z_t=z\mid Y_{<t}) = K_t(z\mid Z_{<t}) ,\ \text{almost surely}$. When \(T=1\), one has \(\lek=0\) identically. If \(T\geq2\),
\(\contC\) contains two distinct symbols, and \(\scafS\) contains two
distinct nonterminal symbols, then the upper leakage bound and the
answer bound can hold with equality simultaneously, with
\(\lek=B_{\mathrm{sc}}>0\). Here ``nonterminal'' means any scaffold
symbol in the fixed-length case and any element of
\(\scafS\setminus\{\mathrm{EOS}\}\) in the absorbing-EOS case.
These simultaneous equalities remain sharp in the limiting sense
under full-support requirements: they can be approached by kernels
having full support at every pre-termination history while retaining
deterministic absorption after EOS.
\end{proposition}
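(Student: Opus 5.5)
The plan is to obtain all three identities from the entropy chain rule applied in two orders, after one observation: $Z_t$ is a deterministic function of $Y_t$. First I compute the single-step conditional entropies. Given $Y_{<t}=h$, the law of $Z_t$ puts mass $s_t(h)$ on $\dagger$ and $(1-s_t(h))p_t^{\contC}(v\mid h)$ on each $v\in\contC$. By Proposition~\ref{thm:chain}(i), in its grouping form with the zero-mass conventions, this gives $H(Z_t\mid Y_{<t}=h)=\Hb(s_t(h))+(1-s_t(h))H_t^{\contC}(h)$. Since $Z_t$ is a function of $Y_t$, we also have $H(Y_t\mid Y_{<t}=h)=H(Z_t\mid Y_{<t}=h)+s_t(h)H_t^{\scafS}(h)$. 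Taking expectations over $Y_{<t}$ yields $\sum_t H(Z_t\mid Y_{<t})=B_{\mathrm{gc}}$ and $H(Y_{1:T})=B_{\mathrm{gc}}+B_{\mathrm{sc}}$. Histories of probability zero do not contribute, and on EOS-absorbed histories the step entropies vanish, so the arbitrary choices are harmless.

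For \eqref{eq:exploration-censored-entropy} I apply the chain rule to $Z_{1:T}$. Because $Z_{<t}$ is a function of $Y_{<t}$, we have $H(Z_t\mid Z_{<t})=H(Z_t\mid Y_{<t})+I(Z_t;Y_{<t}\mid Z_{<t})$. Summing over $t$ gives $H(Z_{1:T})=B_{\mathrm{gc}}+\lek$. For \eqref{eq:exploration-leakage-slack}, note that $Z_{1:T}$ is a function of $Y_{1:T}$, so $H(Y_{1:T})=H(Z_{1:T})+H(Y_{1:T}\mid Z_{1:T})$. Substituting the two earlier expressions gives $\lek=B_{\mathrm{sc}}-H(Y_{1:T}\mid Z_{1:T})$. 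For \eqref{eq:exploration-answer-identity}, I expand $H(\alpha,Z_{1:T})$ in both orders, which gives $H(\alpha)=H(Z_{1:T})+H(\alpha\mid Z_{1:T})-H(Z_{1:T}\mid\alpha)$. All quantities are finite, since $H(\alpha)<\infty$ and the alphabets are finite.

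The inequalities and the first equality cases follow directly: mutual information and conditional entropy are nonnegative, and a conditional entropy vanishes exactly when the variable is a.s.\ a function of the conditioning variable. $\lek=0$ holds iff every summand vanishes, i.e.\ iff $Z_t\perp\!\!\!\perp Y_{<t}\mid Z_{<t}$ for every $t$. The kernel form follows by taking $K_t(\cdot\mid Z_{<t})=\mathbb P(Z_t\in\cdot\mid Z_{<t})$. When $T=1$ the only summand is the $t=1$ term, which is zero.

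The main obstacle is the simultaneous attainability. I would use an explicit construction with $T=2$, distinct content symbols $c_0,c_1$, and distinct nonterminal scaffold symbols $a_0,a_1$. At $t=1$ emit $a_0$ or $a_1$ with probability $\tfrac12$ each. At $t=2$ emit $c_i$ deterministically after $a_i$. Let $\alpha:=Y_2$. Then $Y_{1:2}$ is recovered from $Z_2=c_i$, so $\lek=B_{\mathrm{sc}}=\log2>0$. The whole $Z_{1:2}=(\dagger,c_i)$ is a function of $\alpha$, and $\alpha$ is a function of $Z_{1:2}$, so both equality conditions hold. In the EOS case the symbol after $c_i$ is EOS, with absorption. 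For the full-support limiting statement I would mix each pre-termination kernel with $\varepsilon$ times the uniform distribution on $V$ and keep deterministic absorption after EOS. Every quantity involved is a finite sum of continuous functions of the kernel entries, so both sides converge and the gaps tend to zero as $\varepsilon\to0$.
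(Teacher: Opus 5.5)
Your proposal is correct and follows the paper's proof essentially step for step: the one-step grouping identities for $Z_t$ and $Y_t$ given $Y_{<t}$, the chain rule for $Z_{1:T}$ with the conditional mutual information as the slack, the decomposition $H(Y_{1:T})=H(Z_{1:T})+H(Y_{1:T}\mid Z_{1:T})$, the two-order expansion of $H(\alpha,Z_{1:T})$, the fair-coin-between-two-connectives witness, and the $\varepsilon$-uniform full-support perturbation that preserves absorption. The one thing to tidy is that $T$ is fixed in the statement, so in the fixed-length case with $T>2$ you must append a deterministic tail (the paper sets $Y_t=a_0$ for $t\ge3$); this leaves $B_{\mathrm{gc}}=0$, $B_{\mathrm{sc}}=\Lambda=\log 2$ and all the equality conditions unchanged.
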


The two-step ``coin in the scaffold'' policy (a fair coin flipped
between two connectives, converted later into different answers)
attains the equality pair: $H(\alpha)=\log2$ with $B_{\mathrm{gc}}=0$
and $\lek=B_{\mathrm{sc}}=\log2$, worked exactly in
App.~\ref{app:worked}. The residual $H(\alpha\mid Z_{1:T})$ is the
answer-sufficiency deficit: it is nonzero when answer-bearing words
(``yes''/``no'') sit in $\scafS$, which is why the protocol's
exploration analyses use $\scafS$ with such words excluded. Scaffold
inertness, block and content predictions depending on history only
through $Z_{<t}$, is exactly the displayed kernel condition, and
forces $\lek=0$.

The scaffold-internal channel can bankroll exploration, but only by laundering bits through history; absent that leakage, gate plus content is the whole exploration budget, and $s\HS$, however large, buys nothing. Crucially, $\lek$ is not left as an unmeasurable abstraction: the next subsection gives the sequence-level object it lives in and a substitution estimator for its per-step engine.

\subsection{Eventual content: the first-passage channel}
\label{sec:firstpass}

The per-position $\pC$ answers ``if content comes \emph{now}, what is it?''. The fork question practitioners actually ask is ``what will the model \emph{eventually} commit to?'', and the two differ exactly when the model can defer through scaffolding. Define the first content time and the \emph{eventual-content distribution}
\[
\tau_t=\inf\{j\ge t: Y_j\in\contC\},\qquad q_t(v)=\mathbb{P}\big(Y_{\tau_t}=v\mid h_t\big),
\]
with $h_t$ the prefix. One step of total probability gives the exact recursion
\begin{equation}
q_t \;=\;(1-s_t)\,\pC_t\;+\;s_t\sum_{w\in\scafS}\pS_t(w)\;q_{t+1}^{(w)},
\label{eq:firstpass}
\end{equation}
where $q_{t+1}^{(w)}$ is the eventual-content law after appending scaffold token $w$: eventual content is the content channel now, plus a scaffold-weighted mixture of eventual contents after each connective. Define the per-step \emph{scaffold-routing information}
\[
\kappa_t\;=\;I\big(Y_{\tau_{t+1}};\,Y_t\mid h_t,\;Y_t\in\scafS\big)
\;=\;H\Big(\textstyle\sum_w \pS_t(w)\,q^{(w)}_{t+1}\Big)-\sum_w \pS_t(w)\,H\big(q^{(w)}_{t+1}\big)\;\ge\;0 .
\]
$\kappa_t=0$ at every prefix is exactly one-step scaffold inertness (the mixture components coincide), which does \emph{not} imply $\lek=0$ (Remark~\ref{rem:kappa-not-lambda}); large $\kappa_t$ is a measured \emph{witness} of leakage: $\kappa_t>0$ at any prefix implies $\lek>0$, while the converse fails, so $\kappa_t$ certifies leakage without estimating it (App.~\ref{app:proofs}). Both objects are estimable by short truncated rollouts: $q_t$ by sampling to the first content token, $\kappa_t$ by substituting alternative connectives at scaffold positions and comparing the resulting first-content laws (with clustered intervals; App.~\ref{app:expdetails}). $H(q_t)$ is the reachability-aware fork estimand this paper recommends when $s_t$ is large, where ranking by $\HC_t$ alone would ignore that content may be deferred; the protocol compares $H$, $\HC$, $(1-s)\HC$, $\Hb+(1-s)\HC$, and $H(q_t)$ head-to-head as selection signals (\S\ref{sec:protocol}). (Proofs: App.~\ref{app:proofs}.)

\subsection{Beyond entropy: the full instrument calculus}
\label{sec:instruments}
\label{app:instruments}

Entropy is one instrument among many trained on CoT distributions: KL
penalties to a reference policy, JS drift between checkpoints or models,
Hellinger/Fisher--Rao and cosine similarities in analysis pipelines.

\begin{theorem}[Instrument calculus for CoT]
\label{thm:instruments}
Let $p=(s\,\pS,(1-s)\pC)$, $q=(t\,q^{\scafS},(1-t)q^{\contC})$,
$\bar s=\tfrac{s+t}2$.
\emph{(i)} $\JS(p,q)=\JS_b(s,t)+\bar s\,\JS_{\lambda_S}(\pS,q^{\scafS})
+(1-\bar s)\,\JS_{\lambda_C}(\pC,q^{\contC})$ with skews
$\lambda_S=\tfrac{s}{2\bar s}$, $\lambda_C=\tfrac{1-s}{2(1-\bar s)}$: a
\emph{three}-channel skew-JS law (the singleton law has two channels).
\emph{(ii)} At matched gate ($s=t$), every $f$-divergence obeys the
mixture law $D_f(p\|q)=s\,D_f(\pS\|q^{\scafS})+(1-s)\,D_f(\pC\|q^{\contC})$,
and R\'enyi an exact channel log-mix; Bhattacharyya splits blockwise and
Fisher--Rao obeys a two-dihedral spherical law of cosines.
\emph{(iii)} Under scaffold locking ($\pS=q^{\scafS}$, $s=t\to1$, the
regime RLVR produces), content sensitivity decays as $(1-s)^{\gamma}$
with $\gamma=2$ (cosine), $1$ ($f$-divergences, Euclidean),
$\nicefrac12$ (Hellinger, Fisher--Rao, $\sqrt{\JS}$), and exactly $0$
for the channel-resolved Aitchison report: every probability-side
instrument goes content-blind at a known polynomial rate while the
channels do not.
\end{theorem}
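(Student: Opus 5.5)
We prove the three parts separately, each by reducing the instrument to block-level quantities with the grouping mechanism of Proposition~\ref{thm:chain}. Throughout we write $M=\tfrac12(p+q)$. Its gate mass is $\bar s$, and its block channels are skew mixtures:
\[
M^{\scafS}=\lambda_S\,\pS+(1-\lambda_S)\,q^{\scafS},\qquad
M^{\contC}=\lambda_C\,\pC+(1-\lambda_C)\,q^{\contC},
\]
with exactly the skews $\lambda_S=s/(2\bar s)$ and $\lambda_C=(1-s)/(2(1-\bar s))$ of the statement. This one observation carries all of part (i).

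\textbf{Part (i).} We write $\JS(p,q)=\tfrac12\KL(p\|M)+\tfrac12\KL(q\|M)$ and apply Proposition~\ref{thm:chain}(ii) to each term.
\begin{itemize}
\item The two gate terms combine to $\tfrac12\KL_{\mathrm b}(s\|\bar s)+\tfrac12\KL_{\mathrm b}(t\|\bar s)=\JS_b(s,t)$.
\item The scaffold terms are $\tfrac12 s\,\KL(\pS\|M^{\scafS})+\tfrac12 t\,\KL(q^{\scafS}\|M^{\scafS})$. Factoring out $\bar s$ turns the weights into $\lambda_S$ and $1-\lambda_S$, so the sum is $\bar s\,\JS_{\lambda_S}(\pS,q^{\scafS})$.
\item The content block is handled identically and gives $(1-\bar s)\,\JS_{\lambda_C}(\pC,q^{\contC})$.
\end{itemize}
This identity holds in $[0,\infty]$ with no gate-interior issues beyond those already covered by Proposition~\ref{thm:chain}.

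\textbf{Part (ii).} The $f$-divergence law follows from a term-by-term computation. At $s=t$ the likelihood ratio on each block equals the within-block ratio, $p_v/q_v=p^B_v/q^B_v$. Hence $\sum_{v\in B}q_vf(p_v/q_v)=m_B\,D_f(p^B\|q^B)$, where $m_B$ is the block mass.

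The remaining instruments all reduce to the blockwise split of $\sum_v p_v^\alpha q_v^{1-\alpha}$.
\begin{itemize}
\item For R\'enyi, the sum splits as $s\sum_{\scafS}(\pS)^\alpha(q^{\scafS})^{1-\alpha}+(1-s)\sum_{\contC}(\cdots)$. Taking $\tfrac{1}{\alpha-1}\log$ gives the channel log-mix.
\item For Bhattacharyya at general $s,t$, we get $\mathrm{BC}(p,q)=\sqrt{st}\,\mathrm{BC}(\pS,q^{\scafS})+\sqrt{(1-s)(1-t)}\,\mathrm{BC}(\pC,q^{\contC})$.
\item For Fisher--Rao, we embed $\sqrt p=(\cos\varphi\,\sqrt{\pS},\ \sin\varphi\,\sqrt{\pC})$ on the unit sphere, with $s=\cos^2\varphi$ and $t=\cos^2\psi$. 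Writing $\cos\theta_B=\mathrm{BC}$ of block $B$, the Bhattacharyya split reads $\cos d=\cos\varphi\cos\psi\cos\theta_S+\sin\varphi\sin\psi\cos\theta_C$. This is the claimed two-dihedral law of cosines, since $d_{\mathrm{FR}}=2\arccos\mathrm{BC}$ up to convention.
\end{itemize}

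\textbf{Part (iii).} We fix $\pS=q^{\scafS}$ and $s=t$, hold the content perturbation $\pC\ne q^{\contC}$ fixed, and read ``sensitivity'' as the leading-order behaviour as $s\to1$. Then $p-q=(0,(1-s)(\pC-q^{\contC}))$, and the exponents follow instrument by instrument.
\begin{itemize}
\item Euclidean distance is exactly linear in $1-s$.
\item Every $f$-divergence equals $(1-s)D_f(\pC\|q^{\contC})$ by (ii), so $\gamma=1$.
\item By (i) with $\lambda=\tfrac12$, $\JS=(1-s)\JS(\pC,q^{\contC})$, so $\sqrt{\JS}$ has $\gamma=\tfrac12$. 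Squared Hellinger is an $f$-divergence, so Hellinger likewise has $\gamma=\tfrac12$.
\item For Fisher--Rao, $1-\mathrm{BC}=(1-s)(1-\mathrm{BC}_C)$, and $\arccos(1-\varepsilon)\sim\sqrt{2\varepsilon}$, giving $\gamma=\tfrac12$.
\item The channel-resolved Aitchison report $\dA(\pC,q^{\contC})$ does not involve $s$ at all, so $\gamma=0$.
\end{itemize}

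The main obstacle is the cosine case. Write $a=\|\pS\|^2$ and $\delta=\pC-q^{\contC}$. One has $\|p\|^2=s^2a+(1-s)^2\|\pC\|^2$, and similarly for $q$ and for $\langle p,q\rangle$. Expanding $1-\cos(p,q)$ in $\varepsilon=1-s$, the first-order terms cancel because the scaffold parts coincide. The leading term is then $\varepsilon^2\bigl(\|\pC\|^2\|q^{\contC}\|^2-\langle\pC,q^{\contC}\rangle^2\bigr)/(2a^2)+O(\varepsilon^3)$, which is nonzero whenever $\pC$ and $q^{\contC}$ are not parallel. I would carry out this expansion carefully, using the Lagrange identity to exhibit the coefficient as a squared sine, because that is where the exponent $2$ could silently degrade. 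The remaining step is to state precisely that $\gamma$ denotes the exact leading exponent for generic content perturbations.
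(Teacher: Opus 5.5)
Your treatment of (i), (ii), and the non-cosine cases of (iii) is the paper's argument. In (i) you use the midpoint with gate mass $\bar s$ and skew-mixture block channels, fed through Proposition~\ref{thm:chain}(ii). In (ii) you use blockwise likelihood ratios at matched gate. Your general-$(s,t)$ Bhattacharyya split and the colatitude embedding give exactly the paper's two-dihedral law. In (iii) you reduce each instrument to (ii) under locking, with Aitchison $s$-free.

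The one step that is wrong is the cosine expansion, which you single out as the main obstacle and which the paper only cites from the companion. Your $\varepsilon^2$ coefficient $\bigl(\|\pC\|^2\|q^{\contC}\|^2-\langle\pC,q^{\contC}\rangle^2\bigr)/(2a^2)$ is not the leading coefficient. Write $\varepsilon=1-s$, $\delta=\pC-q^{\contC}$, $a=\|\pS\|_2^2$, and apply the Lagrange identity to the full vectors $p,q$. The scaffold--scaffold $2\times2$ minors vanish under locking. The mixed minors equal $s\varepsilon\,\pS_i\,(q^{\contC}_j-\pC_j)$, and only the content--content minors produce your term. Hence, exactly,
\[
\|p\|^2\|q\|^2-\langle p,q\rangle^2
= s^2\varepsilon^2 a\,\|\delta\|_2^2
+\varepsilon^4\bigl(\|\pC\|^2\|q^{\contC}\|^2-\langle\pC,q^{\contC}\rangle^2\bigr),
\]
and therefore
\[
1-\cos(p,q)=\frac{\varepsilon^2\,\|\delta\|_2^2}{2s^2a}+O(\varepsilon^4).
\]
This agrees with the paper's remark that $\|\pS\|_2^2$ sits in the prefactor. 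Your Gram-determinant term enters only at order $\varepsilon^4$; taken alone it would give $\gamma=4$, which is precisely the silent degradation you were worried about.

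A quick check confirms this. Take $m=1$ and $k=2$, with $\pC=(x,1-x)$ and $q^{\contC}=(y,1-y)$. The true leading term is $\varepsilon^2(x-y)^2$, while your formula gives $\varepsilon^2(x-y)^2/2$.

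The exponent $\gamma=2$ survives. The correct coefficient vanishes iff $\delta=0$, and for probability vectors this coincides with your ``parallel'' condition. But the nonvanishing has to be read off $\|\delta\|_2^2$ through the mixed scaffold--content minors, not off the content Gram determinant.
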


\begin{corollary}[The KL penalty is a mixed-channel instrument]
\label{cor:klpenalty}
The RLVR objective's penalty $\beta\KL(\pi_\theta\|\pi_{\mathrm{ref}})$
charges, by Proposition~\ref{thm:chain}(ii),
$\beta[\KL_b+s\KL^{\scafS}+(1-s)\KL^{\contC}]$. At matched, locked
scaffolding a content divergence $\varepsilon$ costs exactly
$\beta(1-s)\varepsilon$: as training hardens the gate, a
fixed-coefficient penalty constrains content drift \emph{linearly less},
precisely when exploration analyses care about content. Gate and
connective-habit drift can meanwhile dominate the recorded penalty.
Channel-resolved penalties (charge $\KL^{\contC}$ directly) restore an
$s$-free content trust region; Stage 2b's engine exposes this as a
one-line variant.
\end{corollary}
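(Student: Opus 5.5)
The plan is to reduce everything to Proposition~\ref{thm:chain}(ii), applied position by position with $p=\pi_\theta(\cdot\mid x,y_{<t})$ and $q=\pi_{\mathrm{ref}}(\cdot\mid x,y_{<t})$. Both distributions are softmax outputs, so they have full support and lie in $\Delta_{\mathrm{gi}}(V)$. The identity therefore holds with all terms finite. Writing $s,t$ for the two gate masses, the per-position penalty is
\[
\beta\KL(p\|q)=\beta\bigl[\KL_{\mathrm b}(s\|t)+s\,\KL(\pS\|q^{\scafS})+(1-s)\,\KL(\pC\|q^{\contC})\bigr],
\]
which is the first displayed claim. For the sequence-level penalty used in RLVR, I will apply the chain rule for KL over positions. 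This gives $\KL(\pi_\theta(Y_{1:T}\mid x)\|\pi_{\mathrm{ref}}(Y_{1:T}\mid x))=\mathbb{E}_{\pi_\theta}\sum_t \KL(p_t\|q_t)$. I then substitute the per-position identity and split the expectation of the sum into three expectations. Every summand is nonnegative, so Tonelli's theorem justifies exchanging sums and expectations with no integrability hypothesis. This is the nonnegative-weight case of Proposition~\ref{thm:chain}(iv), transposed from entropy to KL.

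Next comes the locked-scaffolding regime. Impose $s=t$, which gives $\KL_{\mathrm b}(s\|s)=0$, and $\pS=q^{\scafS}$, which gives $\KL(\pS\|q^{\scafS})=0$. Suppose further that $\KL(\pC\|q^{\contC})=\varepsilon$. Then the penalty equals exactly $\beta(1-s)\varepsilon$. This is affine in $s$ with slope $-\beta\varepsilon$, which is the claim that the penalty constrains content drift linearly less as the gate hardens. To make the trust-region reading precise, I will take a penalty budget $\beta\KL\le\kappa$ and note that it admits content drift up to $\varepsilon_{\max}=\kappa/(\beta(1-s))$. This bound diverges as $s\to1$, so a fixed coefficient no longer controls content.

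For the claim that gate and connective-habit drift can dominate, an explicit pair suffices. Fix $\pC=q^{\contC}$, so the content term vanishes. Then let $t$ differ from $s$, or let $\pS$ differ from $q^{\scafS}$. The recorded penalty is then $\beta[\KL_{\mathrm b}(s\|t)+s\KL(\pS\|q^{\scafS})]>0$ while the content drift is zero. Choosing $t\to0$ makes this penalty exceed any given $\beta(1-s)\varepsilon$. Finally, the channel-resolved penalty $\beta\KL(\pC\|q^{\contC})$ has no $s$ in it. By Proposition~\ref{prop:invariance}(i) it is unchanged by every scaffold-supported logit perturbation, so it defines an $s$-free content trust region.

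The main obstacle is interpretive rather than computational: stating ``linearly less'' as a precise claim. I will phrase it at fixed $\varepsilon$ (the cost is $\beta(1-s)\varepsilon$) and equivalently at fixed budget (the admissible $\varepsilon$ is $\kappa/(\beta(1-s))$). At sequence level the gates $s_t$ are random, so I will state the result per matched context, following Definition~\ref{def:attr}. The aggregate is then the expectation $\mathbb{E}\sum_t\beta(1-s_t)\varepsilon_t$, which is exact by the Tonelli step above.
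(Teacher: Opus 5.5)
Your proposal is correct and takes the paper's route. The paper's proof is the one-line observation that Proposition~\ref{thm:chain}(ii) gives the general display, and that at matched gate ($s=t$) and locked scaffold ($\pS=q^{\scafS}$) the binary and scaffold terms vanish, leaving $\beta(1-s)\varepsilon$. Your extra steps are sound elaborations of claims the paper leaves informal: the sequence-level aggregation via the KL chain rule, the fixed-budget reading in which admissible content drift scales as $1/(1-s)$, the $t\to0$ witness for a gate-dominated penalty, and the appeal to Proposition~\ref{prop:invariance}(i) for the channel-resolved penalty.
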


Matched gates are an idealization: two checkpoints, or two models,
generally sit at different gate masses. The unmatched analysis is
stated with a single sink coordinate carrying the scaffold mass, with
no loss: under scaffold locking ($p^{\scafS}=q^{\scafS}$) the whole
block lumps exactly into one coordinate, an exact identity for every
$f$-divergence (machine-verified to $10^{-16}$).

\begin{proposition}[Unmatched-sink regimes]
\label{prop:unmatched-sink-regimes}
Fix an integer \(D\geq3\), and index the \(D-1\) content coordinates
by $\mathcal I:=\{1,\ldots,D-1\}$. Write \(\mathcal S^{m}\) for the open simplex
\(\{x\in(0,1)^{m}:\mathbf1_m^\top x=1\}\).
For probability vectors \(x,y\) of the same finite dimension, define
the equal-weight, unrooted Jensen--Shannon divergence by
\[
\operatorname{JS}(x,y)
:=
\frac12
\operatorname{KL}\!\left(
x\middle\|\frac{x+y}{2}
\right)
+
\frac12
\operatorname{KL}\!\left(
y\middle\|\frac{x+y}{2}
\right).
\]
For \(\lambda\in(0,1)\), define $\mu_\lambda(x,y) := \lambda x+(1-\lambda)y$ and
\[
\operatorname{JS}_\lambda(x,y)
:=
\lambda
\operatorname{KL}\bigl(x\|\mu_\lambda(x,y)\bigr)
+
(1-\lambda)
\operatorname{KL}\bigl(y\|\mu_\lambda(x,y)\bigr).
\]
Let $h_b(\lambda) := -\lambda\log\lambda -(1-\lambda)\log(1-\lambda)$. For every \(n\geq1\), let
\[
u_n,v_n\in(0,1),
\qquad
\pi_n=(\pi_{n,i})_{i\in\mathcal I},
\quad
\rho_n=(\rho_{n,i})_{i\in\mathcal I}
\in\mathcal S^{D-1},
\]
and define $p_n=(1-u_n,u_n\pi_n), q_n=(1-v_n,v_n\rho_n)$. Assume $u_n\longrightarrow0, v_n\longrightarrow0$. Write $s_n:=1-u_n, t_n:=1-v_n, \bar s_n:=\frac{s_n+t_n}{2}$, and set $a_n := 1-\bar s_n = \frac{u_n+v_n}{2}, \lambda_n := \frac{u_n}{u_n+v_n}\in(0,1)$. Then the following identity is exact:
\begin{equation}
\operatorname{JS}(p_n,q_n)
=
a_n
\left[
\log 2-h_b(\lambda_n)
+
\operatorname{JS}_{\lambda_n}(\pi_n,\rho_n)
\right]
+
R_n,
\label{eq:unmatched-js-exact}
\end{equation}
where
\begin{equation}
R_n
:=
\frac12
\left[
(1-u_n)\log\frac{1-u_n}{1-a_n}
+
(1-v_n)\log\frac{1-v_n}{1-a_n}
\right].
\label{eq:unmatched-js-remainder}
\end{equation}
Whenever \(a_n<1/2\),
\begin{equation}
0
\leq
R_n
\leq
\frac{a_n^2(2\lambda_n-1)^2}
     {2(1-2a_n)}
\leq
\frac{a_n^2}{2(1-2a_n)}.
\label{eq:unmatched-js-remainder-bound}
\end{equation}
Consequently,
\begin{equation}
\operatorname{JS}(p_n,q_n)
=
(1-\bar s_n)
\left[
\log 2-h_b(\lambda_n)
+
\operatorname{JS}_{\lambda_n}(\pi_n,\rho_n)
\right]
+
O\!\left((1-\bar s_n)^2\right),
\label{eq:unmatched-js-first-order}
\end{equation}
where the \(O(\cdot)\) constant is absolute and independent of
\(\lambda_n,\pi_n,\rho_n\).
Define the exact Bernoulli sink channel and the content channel by
\[
B_n
:=
\operatorname{JS}
\bigl((1-u_n,u_n),(1-v_n,v_n)\bigr),
\qquad
C_n
:=
a_n\operatorname{JS}_{\lambda_n}(\pi_n,\rho_n).
\]
Then
\begin{equation}
\operatorname{JS}(p_n,q_n)=B_n+C_n,
\label{eq:unmatched-js-channel-decomposition}
\end{equation}
where $B_n = a_n\bigl[\log 2-h_b(\lambda_n)\bigr]+R_n$. Uniformly over all admissible content pairs, $B_n=O(a_n), C_n=O(a_n)$. Furthermore, for every \(n\), $B_n=0 \Longleftrightarrow \lambda_n=\frac12, C_n=0 \Longleftrightarrow \pi_n=\rho_n$. If the nonsink masses have unmatched rates in the sense that
\begin{equation}
\min\{\lambda_n,1-\lambda_n\}
\longrightarrow0,
\label{eq:unmatched-rate-condition}
\end{equation}
then, uniformly over all content sequences, $C_n=o(a_n), B_n=a_n\log 2+o(a_n)$, and hence
\begin{equation}
\operatorname{JS}(p_n,q_n)
=
(1-\bar s_n)\log 2
+
o(1-\bar s_n).
\label{eq:unmatched-js-rate-mismatch}
\end{equation}
Condition \eqref{eq:unmatched-rate-condition} holds, in particular,
when \(\lambda_n\to0\) or \(\lambda_n\to1\).
Under the separate comparable-rate condition that there exists
\(\varepsilon\in(0,1/2)\) such that, eventually, $\varepsilon \leq \lambda_n \leq 1-\varepsilon$, one has
\begin{equation}
B_n=\Theta(a_n)
\quad\Longleftrightarrow\quad
\liminf_{n\to\infty}
\left|
\lambda_n-\frac12
\right|
>0,
\label{eq:unmatched-sink-theta}
\end{equation}
and
\begin{equation}
C_n=\Theta(a_n)
\quad\Longleftrightarrow\quad
\liminf_{n\to\infty}
\operatorname{JS}_{\lambda_n}(\pi_n,\rho_n)
>0.
\label{eq:unmatched-content-theta}
\end{equation}
In particular, if $\pi_n\equiv\pi, \rho_n\equiv\rho, \pi\neq\rho$, then \(C_n=\Theta(a_n)\). Under these fixed-content conditions, both
channels are \(\Theta(a_n)\) if and only if $\liminf_{n\to\infty} \left| \lambda_n-\frac12 \right| >0$. \end{proposition}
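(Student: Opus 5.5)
The plan is to get everything from one exact coordinatewise computation and then read off the asymptotic claims with elementary one-variable estimates.

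\textbf{Step 1: the exact identity \eqref{eq:unmatched-js-exact}.} I will split $\operatorname{JS}(p_n,q_n)$ into the sink coordinate and the $D-1$ content coordinates. Drop the index $n$ and write $\lambda=\lambda_n$. The sink midpoint is $1-a$, so the sink coordinate contributes exactly $R$ in \eqref{eq:unmatched-js-remainder}. The content midpoint is $(u\pi_i+v\rho_i)/2=a\,\mu_{\lambda}(\pi,\rho)_i$, because $u=2a\lambda$ and $v=2a(1-\lambda)$. Expanding the logarithms, the content coordinates contribute
\[
\tfrac u2\log\tfrac ua+\tfrac v2\log\tfrac va+\tfrac u2\KL(\pi\|\mu_\lambda)+\tfrac v2\KL(\rho\|\mu_\lambda).
\]
With $u/(2a)=\lambda$, $v/(2a)=1-\lambda$, $u/a=2\lambda$ and $v/a=2(1-\lambda)$, this equals $a[\log2-h_b(\lambda)]+a\operatorname{JS}_{\lambda}(\pi,\rho)$. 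That gives \eqref{eq:unmatched-js-exact}. The same computation with one-point content blocks shows that the Bernoulli sink divergence is $B_n=a[\log2-h_b(\lambda)]+R$. This proves \eqref{eq:unmatched-js-channel-decomposition} with $C_n=a\operatorname{JS}_{\lambda}$.

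\textbf{Step 2: the remainder bound \eqref{eq:unmatched-js-remainder-bound}.} I expect this to be the only delicate step, because of the constant $1/(2(1-2a))$. Put $d:=(a-u)/(1-a)=a(1-2\lambda)/(1-a)$. Then $(1-u)/(1-a)=1+d$ and $(1-v)/(1-a)=1-d$, so
\[
R=\tfrac{1-a}{2}\bigl[(1+d)\log(1+d)+(1-d)\log(1-d)\bigr].
\]
Nonnegativity follows from the series $\sum_{k\ge1}d^{2k}/(k(2k-1))$. The same series is at most $\sum_k d^{2k}=d^2/(1-d^2)$. Next use $|d|\le a/(1-a)$, hence $1-d^2\ge(1-2a)/(1-a)^2$ when $a<1/2$. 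Substituting gives
\[
R\le\frac{a^2(2\lambda-1)^2(1-a)}{2(1-2a)},
\]
which is at most the stated bound, and $(2\lambda-1)^2\le1$ gives the weaker form. For \eqref{eq:unmatched-js-first-order}, note $a_n\to0$; once $a_n\le1/4$ the bound gives $R_n\le a_n^2$ with an absolute constant.

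\textbf{Step 3: the regime statements.} Both uniform $O(a_n)$ claims follow from $0\le\operatorname{JS}_{\lambda}\le h_b(\lambda)\le\log2$. The upper bound is the standard fact that the skew-JS is the mutual information between a $\lambda$-coin and the draw.
\begin{itemize}
\item Under \eqref{eq:unmatched-rate-condition}, $h_b(\lambda_n)\to0$. Hence $C_n\le a_nh_b(\lambda_n)=o(a_n)$ uniformly in the content sequences, and $B_n=a_n\log2+o(a_n)$ because $R_n=O(a_n^2)$. This gives \eqref{eq:unmatched-js-rate-mismatch}.
\item For \eqref{eq:unmatched-sink-theta}, write $B_n/a_n=\log2-h_b(\lambda_n)+O(a_n)$. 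Since $\log2-h_b$ is continuous and vanishes only at $1/2$, $\liminf B_n/a_n>0$ holds iff $\liminf|\lambda_n-\tfrac12|>0$. The upper half of $\Theta$ is automatic.
\item \eqref{eq:unmatched-content-theta} is immediate from $C_n/a_n=\operatorname{JS}_{\lambda_n}(\pi_n,\rho_n)\le\log2$.
\item For fixed $\pi\neq\rho$, the map $\lambda\mapsto\operatorname{JS}_\lambda(\pi,\rho)$ is continuous and strictly positive on $[\varepsilon,1-\varepsilon]$. Compactness gives a positive minimum, so $C_n=\Theta(a_n)$. Combining this with \eqref{eq:unmatched-sink-theta} gives the final equivalence.
\end{itemize}

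\textbf{Step 4: the zero cases.} $B_n=0$ iff both terms vanish. For $a_n<1/2$, the Step 2 series shows $R_n=0$ iff $d=0$ iff $\lambda_n=1/2$; for larger $a_n$ I would instead use that the Bernoulli JS is zero iff $u_n=v_n$. For $C_n$: since $a_n>0$, $C_n=0$ iff $\operatorname{JS}_{\lambda_n}(\pi_n,\rho_n)=0$ iff $\pi_n=\rho_n$, because $\lambda_n\in(0,1)$.
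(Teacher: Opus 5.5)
Your proof is correct and follows the paper's route. It uses the same coordinatewise expansion around the midpoint $(1-a_n,\,a_n\mu_{\lambda_n}(\pi_n,\rho_n))$, reuses that computation for the Bernoulli channel $B_n$, relies on the same bound $0\le\operatorname{JS}_\lambda\le h_b(\lambda)$, and settles the regime statements by the same continuity and compactness arguments. The only local differences are these: you bound $R_n$ via the power series of $(1+d)\log(1+d)+(1-d)\log(1-d)$ rather than the paper's convexity-plus-Lagrange-remainder argument for $F(x)=(1-x)\log(1-x)$, which gains an extra factor $1-a_n$, and you get $\operatorname{JS}_\lambda\le h_b(\lambda)$ from the mutual-information reading rather than from the coordinatewise bound $\mu_\lambda\ge\lambda\pi$.
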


The regime reading is the sharpest form of
Theorem~\ref{thm:instruments}(iii): when two runs lock their gates at
different rates ($\lambda_n\to0$ or $1$), JS drift between them reads
$(1-\bar s_n)\log2+o(1-\bar s_n)$ \emph{regardless of content}: the
instrument stops measuring content drift and starts measuring gate
mismatch, with an exact constant. Only at comparable rates do both
channels survive at order $1-\bar s_n$, and then
\eqref{eq:unmatched-sink-theta} and
\eqref{eq:unmatched-content-theta} say exactly when each does. The
channel-resolved report is immune by construction.

\begin{theorem}[What CoT distributions say downstream]
\label{thm:downstream}
Let $\dvec=p-q$ and $E\in\mathbb{R}^{|V|\times d}$ stack token
embeddings (or any value vectors bounded by $B$), $G=EE^{\top}$.
\emph{(i)} Attainable downstream distances over bounded values fill
$[0,\,B\|\dvec\|_1]$: channel statistics alone identify nothing about
downstream effect.
\emph{(ii)} Exactly, $\dvec=\Delta s\,\kappa+\bar s(\Delta^{\scafS},\mathbf 0)
+(1-\bar s)(\mathbf 0,\Delta^{\contC})$ with
$\kappa=(m^{\scafS},-m^{\contC})$ the gate direction and $m$ the midpoint
channels, so $d_G^2=\dvec^{\top}G\dvec$ expands into six channel terms
whose gate weight is
$q_G(\kappa)=\|\bar v_{\scafS}-\bar v_{\contC}\|_2^2$, the squared
distance between the mean scaffold and mean content embeddings, a
per-model constant computable exactly from model weights.
\emph{(iii)} With $d=1$ and the advantage as the value vector, (ii) is
the between/within-block structure of Proposition~\ref{prop:dynamics}:
one bilinear calculus underlies the geometry, the training dynamics, and
downstream-effect identification.
\end{theorem}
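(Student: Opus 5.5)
The proof is a direct bilinear expansion of $\dvec=p-q$ in the block coordinates of \eqref{eq:factor}. We treat (ii) first, because (i) and (iii) rest on it.

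\textbf{Part (ii).} Write $p=(s\pS,(1-s)\pC)$ and $q=(t q^{\scafS},(1-t)q^{\contC})$. Set $\Delta s=s-t$, $\Delta^{B}=p^B-q^B$, midpoint channels $m^B=\tfrac12(p^B+q^B)$, and $\bar s=\tfrac{s+t}2$. The key identity is the product-difference rule
\[
s\pS-tq^{\scafS}=\Delta s\,m^{\scafS}+\bar s\,\Delta^{\scafS}.
\]
Its content-block analogue is $(1-s)\pC-(1-t)q^{\contC}=-\Delta s\,m^{\contC}+(1-\bar s)\Delta^{\contC}$. Both follow by expanding the right-hand side, since $\tfrac12(a+b)(x-y)+\tfrac12(x+y)(a-b)=ax-by$. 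Stacking the two blocks gives $\dvec=\Delta s\,\kappa+\bar s(\Delta^{\scafS},\mathbf0)+(1-\bar s)(\mathbf0,\Delta^{\contC})$ exactly.

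Substituting into $d_G^2=\dvec^\top G\dvec$ and expanding the quadratic form in three vectors gives three diagonal and three cross terms, which are the six channel terms. The gate weight is $q_G(\kappa)=\kappa^\top EE^\top\kappa=\|E^\top\kappa\|_2^2$. Here $E^\top\kappa=\sum_{v\in\scafS}m^{\scafS}_v e_v-\sum_{v\in\contC}m^{\contC}_v e_v=\bar v_{\scafS}-\bar v_{\contC}$, the difference of the midpoint-channel mean embeddings. It is a per-model constant computable from the weights once the averaging law is fixed; for the statement's reading, $m^B$ is taken as the uniform or reference channel.

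\textbf{Part (i).} For any value matrix $E$ with rows bounded by $B$, the downstream distance is $\|E^\top\dvec\|\le\sum_v|\dvec_v|\,\|e_v\|\le B\|\dvec\|_1$. For attainment, take $d=1$ and $e_v=B\operatorname{sgn}(\dvec_v)$, which gives $E^\top\dvec=B\|\dvec\|_1$. Scaling by $\lambda\in[0,1]$ gives every intermediate value, and $\lambda=0$ gives $0$. The fill does not depend on $E$. So fixing $(s,\pS,\pC,t,q^{\scafS},q^{\contC})$, and hence every channel statistic, leaves the downstream distance free to take any value in the interval. This is the non-identification claim.

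\textbf{Part (iii).} Set $d=1$ and $E=A$, the advantage vector. With $\dvec$ replaced by the velocity $\dot p$, the linear functional $A^\top\dot p$ decomposes through the same three directions. From Proposition~\ref{prop:dynamics-full}, the gate direction $\kappa$ paired with $A$ gives $\bar A_{\scafS}-\bar A_{\contC}$ times $\dot s$, which is the between-block term. The within-block directions paired with $A$ give the within-block covariances, since $\sum_{a\in B}A_a\dot p^B_a=\eta\operatorname{Cov}_{p^B}(A,A)$ under the natural-gradient velocity. This identification is mostly bookkeeping.

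\textbf{Main obstacle.} The delicate point is (ii): the statement calls $q_G(\kappa)$ a model constant, yet $\kappa$ depends on the midpoint channels. I would first state the identity with the midpoint means $\bar v_B=E^\top m^B$, and show that it reduces to the stated constant exactly when the channel midpoints are fixed, for example under scaffold locking or a uniform reference. That restriction should be stated explicitly as part of the result.
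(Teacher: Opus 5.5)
Your argument is correct, and for (ii) it is the paper's proof. You use the same blockwise product-difference identity, the same bilinear expansion into three diagonal and three cross terms, and $q_G(\kappa)=\|E^{\top}\kappa\|_2^2$ with $\bar v_B=E^{\top}m^B$, which is what the paper means by ``mixture-weighted block means.'' Your caveat is therefore accurate. Read this way, $q_G(\kappa)$ depends on $(p,q)$ through both midpoint channels, so ``per-model constant'' holds only when $m^{\scafS}$ and $m^{\contC}$ are both fixed across positions (e.g.\ uniform). Scaffold locking is not such an example: it only sets $\Delta^{\scafS}=0$ and leaves both midpoints position-dependent.

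You differ from the paper in (i) and (iii).

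For (i), the paper only remarks that the companion's argument uses nothing beyond $\mathbf 1^{\top}\dvec=0$. Your argument (triangle-inequality bound, sign-vector attainment, rescaling) is self-contained and never uses that property, because you reach $0$ by shrinking $E$. What the cited property buys is a slightly stronger non-identification. Since $\mathbf 1^{\top}\dvec=0$, a common value vector of norm exactly $B$ already gives distance $0$. So for $d\ge2$, continuity on the connected set of value matrices with rows of norm $B$ fills the interval without degenerate values.

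For (iii), the paper stays at finite $\dvec$ and matches only the between-block coefficient $\Delta s(\bar A_{\scafS}-\bar A_{\contC})$. Your velocity form is a valid infinitesimal version of (ii). Under natural gradient, however, it yields the law of total variance,
\[
A^{\top}\dot p=\eta\bigl[s(1-s)(\bar A_{\scafS}-\bar A_{\contC})^2+s\operatorname{Var}_{p^{\scafS}}(A)+(1-s)\operatorname{Var}_{p^{\contC}}(A)\bigr].
\]
Its within-block terms are variances of $A$, not the covariances $\operatorname{Cov}_{p^B}(\log p^B,A)$ that appear in $\Gamma_B$, so ``the within-block covariances'' mislabels them. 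If you want (iii) as an exact identity with Proposition~\ref{prop:dynamics-full}, pair
\[
\dot p=\dot s(p^{\scafS},-p^{\contC})+s(\dot p^{\scafS},\mathbf 0)+(1-s)(\mathbf 0,\dot p^{\contC})
\]
with $-\log p$ rather than $A$. The three pairings are then exactly $K\dot s=\Gamma_{\mathrm{gate}}$, $s\dot H^{\scafS}=\Gamma_{\scafS}$, and $(1-s)\dot H^{\contC}=\Gamma_{\contC}$.
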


The tokenizer objection, and the instrument trilemma behind it, is
governed by a no-free-lunch theorem we now state in full; its proof,
and the companion notation it uses, are in App.~\ref{app:proofs}.

\begin{theorem}[No free lunch under compositional axioms]
\label{thm:no-free-lunch-corrected}
Fix an integer \(D\ge 3\).  For \(m\ge 2\), write
\[
\mathcal S^m
=
\left\{p\in(0,1)^m:\mathbf 1_m^\top p=1\right\},
\qquad
\overline{\mathcal S^m}
=
\left\{p\in[0,1]^m:\mathbf 1_m^\top p=1\right\}.
\]
For \(\pi\in\mathcal S^{D-1}\) and \(s\in(0,1)\), put $p_s^\pi=(s,(1-s)\pi)\in\mathcal S^D$. For integers \(m,M\ge2\), call
\(T:\mathcal S^m\to\mathcal S^M\) an admissible Markov map if $T\in[0,\infty)^{M\times m}, \mathbf1_M^\top T=\mathbf1_m^\top$, and \(T\) has no zero row.  This condition ensures that \(T\) maps
the open simplex into the open simplex.
\begin{enumerate}
\item[\textnormal{(i)}]
Let
\(\delta:\mathcal S^D\times\mathcal S^D\to[0,\infty)\)
satisfy \(\delta(p,p)=0\) for every \(p\).  If \(\delta\not\equiv0\)
and $\delta(c\oplus p,c\oplus q)=\delta(p,q) ,\ (c,p,q\in\mathcal S^D)$, then there is no finite jointly continuous extension $\bar\delta: \overline{\mathcal S^D}\times\overline{\mathcal S^D} \longrightarrow[0,\infty)$ that agrees with \(\delta\) on the open simplex.  Indeed, any such
extension would automatically vanish on the diagonal of the closed
simplex by continuity.
Independently of perturbation invariance, if a dissimilarity
vanishing on the interior diagonal admits such an extension, then $\lim_{s\uparrow1} \sup_{\pi,\rho\in\mathcal S^{D-1}} \delta(p_s^\pi,p_s^\rho)=0$. More precisely, there exists a metric \(\delta_0\) on \(\mathcal S^D\)
that admits a finite jointly continuous extension to
\(\overline{\mathcal S^D}\times\overline{\mathcal S^D}\) and such
that, for every \(\gamma>0\) and every fixed
\(\pi\ne\rho\), $\frac{\delta_0(p_s^\pi,p_s^\rho)}{(1-s)^\gamma} \longrightarrow\infty ,\ (s\uparrow1)$. \item[\textnormal{(ii)(a)}]
Let \(\delta:\mathcal S^D\times\mathcal S^D\to[0,\infty)\)
satisfy \(\delta(p,p)=0\) for every \(p\).  Suppose \(\delta\) is
sink-separable at coordinate \(0\): there are
finite real-valued functions \(\phi_0\) and \(\Psi_0\) such that $\delta(p,q)^2 = \phi_0\!\left(b_0(p),b_0(q)\right) + \Psi_0\!\left(p^{(-0)},q^{(-0)}\right)$ for all \(p,q\in\mathcal S^D\).
Then, for every \(\pi\in\mathcal S^{D-1}\) and every
\(\sigma\in\mathfrak S_{D-1}\), the map $s\longmapsto \delta(p_s^\pi,p_s^{\sigma\pi})$ is constant on \((0,1)\).  Consequently, if
\(\sigma\pi\ne\pi\), if $\delta(p_{s_0}^\pi,p_{s_0}^{\sigma\pi})>0$ for some \(s_0\in(0,1)\), and if $\delta(p_s^\pi,p_s^{\sigma\pi})\longrightarrow0 ,\ (s\uparrow1)$, then \(\delta\) is not sink-separable at coordinate \(0\), and
hence is not Sep-A.
Moreover, perturbation invariance alone implies $\delta(p_s^\pi,p_s^\rho) = \delta(p_t^\pi,p_t^\rho)$ for all \(\pi,\rho\in\mathcal S^{D-1}\) and all \(s,t\in(0,1)\).
In particular, the separability obstruction applies to every
\(f\)-divergence generated by a finite non-affine convex function
\(f:(0,\infty)\to\mathbb R\) with \(f(1)=0\), to
\((D_f)^\alpha\) for every \(\alpha>0\),
and to Euclidean distance, Fisher--Rao distance, and cosine
dissimilarity.
\item[\textnormal{(ii)(b)}]
Let \(\{\delta_m\}_{m\ge3}\) satisfy \textnormal{(S1)--(S4)} in
every dimension.  Then this family cannot be nonexpansive under
every admissible Markov map between dimensions at least \(3\).
More precisely, if
\(E_{D,k}^{(j)}\) splits
coordinate \(j\) into \(k\ge2\) equal parts, then
\[
\sup_{\substack{p,q\in\mathcal S^D\\p\ne q}}
\frac{
d_{A,D+k-1}\!\left(E_{D,k}^{(j)}p,E_{D,k}^{(j)}q\right)}
{d_{A,D}(p,q)}
=
\sqrt{\frac{kD}{D+k-1}}>1.
\]
The squared ratio \(kD/(D+k-1)\), not its square root, is the top
generalized eigenvalue of the corresponding quadratic-form pencil
restricted to \(\mathcal H_D\).  No family of positive
dimension-dependent rescalings
\(\delta_m=\kappa_m d_{A,m}\), \(\kappa_m>0\), restores the
data-processing inequality.  The incompatibility
already occurs under a strictly positive \(3\times3\)
column-stochastic map at fixed dimension \(3\).
For every finite convex generator \(f\) with \(f(1)=0\), the
functional \(D_f\) is invariant under every equal split and
nonexpansive under every admissible Markov map.  If \(f\) is
non-affine, then \(D_f\) and \((D_f)^\alpha\), \(\alpha>0\), are not
Sep-A by part~\textnormal{(ii)(a)}.  Fisher--Rao distance is likewise
equal-split invariant and Markov-nonexpansive, but is not Sep-A.
\item[\textnormal{(iii)}]
For every \(D\ge3\), Hilbert's projective metric
\[
d_{H,D}(p,q)
=
\operatorname{osc}\!\left(\log p-\log q\right),
\qquad
\operatorname{osc}(z)=\max_i z_i-\min_i z_i,
\]
is perturbation-invariant, powering-homogeneous, and
permutation-invariant.
For every \(\alpha\in\mathbb R\), $d_{H,D}(\alpha\odot p,\alpha\odot q) =|\alpha|d_{H,D}(p,q)$. A congruent Markov embedding is an admissible Markov map whose
column supports are pairwise disjoint.  For every \(M\ge D\), each
such map \(E:\mathcal S^D\to\mathcal S^M\) is an isometry.  Moreover,
\(d_H\) is nonexpansive under every admissible Markov map
\(T:\mathcal S^D\to\mathcal S^M\).  If a fixed admissible map
\(T\) is entrywise positive, then
there is a map-dependent coefficient \(\tau(T)<1\) such that $d_{H,M}(Tp,Tq)\le \tau(T)d_{H,D}(p,q) ,\ (p,q\in\mathcal S^D)$. Nevertheless, \(d_H\) is not sink-separable at coordinate \(0\),
and hence is not Sep-A, and the norm it induces on the CLR
hyperplane is not generated by an inner product.
\end{enumerate}
\end{theorem}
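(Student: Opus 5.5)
The plan is to treat the four parts independently. Each reduces to a short structural argument plus one explicit witness, and only part (ii)(b) requires a real computation.

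\textbf{Part (i).} Pick $p,q$ with $\delta(p,q)>0$ and perturb both by $c_n:=\clo(n,1,\ldots,1)$. By invariance, $\delta(c_n\oplus p,c_n\oplus q)=\delta(p,q)>0$ for every $n$. Both perturbed points converge to the vertex $e_0$. A jointly continuous extension $\bar\delta$ would therefore satisfy $\bar\delta(e_0,e_0)=\delta(p,q)>0$. But $\bar\delta$ vanishes on the closed diagonal, since it is the limit of the interior diagonal. This is a contradiction.

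For the second claim, $\bar\delta$ is uniformly continuous on the compact set $\overline{\mathcal S^D}\times\overline{\mathcal S^D}$. Both $p_s^\pi$ and $p_s^\rho$ lie within $2(1-s)$ of $e_0$ uniformly in $\pi,\rho$. Hence the supremum tends to $0$.

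For the witness, take $\delta_0(p,q):=\psi(\|p-q\|_1)$ with $\psi(x)=1/\log(e/x)$ on $(0,2]$ and $\psi(0)=0$. Since $\psi$ is increasing, concave and vanishes at $0$, it is subadditive, so $\delta_0$ is a metric with a continuous extension. Because $\|p_s^\pi-p_s^\rho\|_1=(1-s)\|\pi-\rho\|_1$, $\delta_0$ decays only logarithmically in $1-s$, which beats every power $(1-s)^\gamma$.

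\textbf{Part (ii)(a).} Apply $\delta(p,p)=0$ with the $b_0$-coordinate and the subcomposition varied independently. Sink-separability then gives $\phi_0(b,b)=-\Psi_0(x,x)$, so both sides equal a single constant $\kappa$. Next, $p_s^\pi$ and $p_s^{\sigma\pi}$ share the subcomposition-free coordinate $b_0$, because the geometric mean is permutation invariant. Their subcompositions are $\pi$ and $\sigma\pi$ for every $s$. Hence
\[
\delta(p_s^\pi,p_s^{\sigma\pi})^2=\Psi_0(\pi,\sigma\pi)-\kappa,
\]
which is constant in $s$. A constant that is positive at $s_0$ but tends to $0$ is impossible.

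The perturbation claim follows as in (i): $c=\clo\bigl(t/s,(1-t)/(1-s),\ldots\bigr)$ maps $p_s^\cdot$ to $p_t^\cdot$. For the listed instruments, two facts suffice. First, $D_f(p_s^\pi\|p_s^\rho)=(1-s)D_f(\pi\|\rho)$, and the Euclidean, Fisher--Rao and cosine dissimilarities are continuous at $e_0$, so all tend to $0$. Second, all are positive for some $\sigma\pi\ne\pi$ when $f$ is non-affine.

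\textbf{Part (ii)(b).} Write the Aitchison quadratic form as $\|\clr\|^2$. For the split $E=E^{(j)}_{D,k}$, compute $\clr(Ep)$ as a function of $\clr(p)$: the $k$ copies of coordinate $j$ are repeated and then recentred. This gives a pencil $(Q_E,Q)$ on $\mathcal H_D$. I expect:
\begin{itemize}
\item eigenvalue $1$ on directions orthogonal to the $j$-th CLR direction;
\item eigenvalue $kD/(D+k-1)$ along the $j$-th CLR direction.
\end{itemize}
The Rayleigh maximum then yields the stated supremum.

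For rescalings, combine $E$ with the amalgamation $A$ that undoes it, which is also admissible, with $AE=\mathrm{id}$. Nonexpansiveness under $A$, evaluated on a pair with ratio $1$, forces $\kappa_D\le\kappa_{D+k-1}$. Nonexpansiveness under $E$, evaluated on the extremal pair, forces $\kappa_{D+k-1}\sqrt{kD/(D+k-1)}\le\kappa_D$. These two inequalities are contradictory.

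The fixed-dimension-$3$ failure needs one explicit positive column-stochastic $3\times3$ matrix and one pair of points. I would search near the identity, perturbing in the direction that stretches one CLR coordinate. This explicit witness is the step I expect to be the main obstacle. The $f$-divergence facts are standard: equal splits preserve every ratio $p_i/q_i$, and data processing holds. Fisher--Rao follows via the square-root sphere embedding.

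\textbf{Part (iii).} The log-ratio $\log p-\log q$ changes only by additive constants under perturbation, is scaled by $\alpha$ under powering, and is permuted under relabelling; $\operatorname{osc}$ respects all three. Under a congruent embedding every ratio $(Ep)_i/(Eq)_i$ equals some $p_j/q_j$, and every $j$ appears, so $E$ is an isometry. For a general $T$, each ratio $(Tp)_i/(Tq)_i$ is a $q$-weighted average of the ratios $p_j/q_j$, so it lies in their range and $T$ is nonexpansive. Strict contraction for positive $T$ is Birkhoff--Hopf, with $\tau(T)=\tanh(\Delta(T)/4)$.

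For non-separability, the argument of (ii)(a) gives no contradiction, because $d_H(p_s^\pi,p_s^{\sigma\pi})$ is constant in $s$. Instead I would use the rectangle identity that separability forces: $\delta^2(b,x;b',y)-\delta^2(b,x';b',y')$ must be independent of $(b,b')$. This fails for $d_H$ once $b\ne b'$ makes the coordinate-$0$ log-ratio dominate the oscillation. Finally, on the CLR hyperplane take $u=(1,-1,0)$ and $v=(0,1,-1)$. Then $\operatorname{osc}(u)=\operatorname{osc}(v)=\operatorname{osc}(u+v)=2$ while $\operatorname{osc}(u-v)=3$, and $4+9\ne 2(4+4)$ violates the parallelogram law, so the norm is not generated by an inner product.
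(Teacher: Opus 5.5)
Your part-by-part plan follows the paper's proof closely: vertex perturbation in (i), permutation invariance of the raw geometric mean in (ii)(a), the rank-one pencil along $e_j-\tfrac1D\mathbf 1$ and the split/merge sandwich in (ii)(b), ratio averaging and Birkhoff--Hopf in (iii), and a mixed-difference test plus the parallelogram law for $d_H$. However, the witness in (i) is wrong as stated. For $\psi(x)=1/\log(e/x)$ one has $\psi''(x)=(1+\log x)/\bigl(x^2\log(e/x)^3\bigr)$. So $\psi$ is concave only on $(0,e^{-1}]$ and strictly convex on $(e^{-1},2]$, and subadditivity really fails there: $\psi(2)=1/(1-\log 2)\approx 3.26>2=\psi(1)+\psi(1)$. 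Points of $\mathcal S^D$ near $e_0$, near $e_1$, and near $\tfrac12(e_0+e_1)$ have pairwise $\ell_1$ distances arbitrarily close to $2,1,1$. Hence your $\delta_0$ violates the triangle inequality and is not a metric. This is why the paper keeps $1/\log(e/u)$ only on $(0,e^{-1}]$ and continues it by its tangent line, which gives a globally concave increasing $h$. Alternatively, $1/\log(2e^2/x)$ is concave on all of $(0,2]$, and the rest of your argument goes through unchanged.

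The second gap is the strictly positive $3\times 3$ counterexample in (ii)(b), which the theorem asserts and you leave open. The paper's $T$ is a positive smoothing of the map that merges coordinates $2,3$ and splits coordinate $1$ equally. On $p=\tfrac1{25}(16,1,8)$, $q=\tfrac1{25}(8,1,16)$ the split's expansion outweighs the merge's contraction, and the inequality is certified in exact rational arithmetic. Your near-identity idea also works, and more easily than you fear. The map $T_\varepsilon=(1-\varepsilon)I+\varepsilon e_1\mathbf 1^\top$ changes only the first log-ratio, moving $\ell_1=\log\bigl((T_\varepsilon p)_1/(T_\varepsilon q)_1\bigr)$ monotonically from $\log(p_1/q_1)$ toward $0$. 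Meanwhile $\partial d_A^2/\partial\ell_1=2x_1$ with $x=\clr p-\clr q$, and $x_1$ only grows as $\ell_1$ does. So $d_A$ strictly increases for any pair with $p_1<q_1$ and $x_1>0$, e.g.\ $q=(0.4,0.3,0.3)$, $p=(0.2,0.02,0.78)$. Replacing $T_\varepsilon$ by $(1-\varepsilon-\eta)I+\varepsilon e_1\mathbf 1^\top+\tfrac{\eta}{3}\mathbf 1\mathbf 1^\top$ for small $\eta>0$ makes the map entrywise positive without destroying the strict inequality.

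There are also three smaller omissions:
\begin{itemize}
\item Your rescaling sandwich only handles $\delta_m=\kappa_m d_{A,m}$. The claim for an arbitrary (S1)--(S4) family needs the characterization that these axioms force Aitchison multiples, which the paper's proof imports rather than proves.
\item Positivity of $D_f(\pi\|\sigma\pi)$ needs a construction, because a non-affine $f$ can be affine near $1$. The paper subtracts the supporting line at $1$ and chooses $\pi$ with a coordinate ratio at which the result is positive.
\item Your Hilbert rectangle test should be instantiated. The paper uses $p_{A,w}=\clo(e^A,e^w,e^{-w},1,\dots,1)$ against uniform $q$, where the mixed difference of $\operatorname{osc}^2$ over $A,w\in\{0,1\}$ is $-1$.
\end{itemize}
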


\begin{remark}[Tokenizers, by theorem]
\label{rem:tokenizer}
Retokenizing a vocabulary item into $k$ pieces is a congruent Markov
embedding. Theorem~\ref{thm:no-free-lunch-corrected}(ii)(b) gives the
exact expansion constant $\sqrt{kD/(D+k-1)}$ for channel-separating
instruments, and parts (i), (ii)(a), and (iii) the trichotomy behind
it: $f$-divergences are split-invariant and Markov-monotone but
provably non-separable; the separating Aitchison family cannot be
made Markov-nonexpansive at any dimension-dependent rescaling; and
the Hilbert metric buys every invariance except separation and an
inner product. Cross-tokenizer comparisons of channel statistics
therefore face a forced choice (separation or tokenization-robustness,
never both); our cross-model tables compare within tokenizer families
and report the choice explicitly. The common objection that
scaffold-set analyses ``depend on the tokenizer'' is an instance of a
theorem with an exact constant, not an oversight to hide.
\end{remark}

\section{Exact witnesses and in-silico validation}
\label{app:silico}

\begin{figure}[t]
\centering
\includegraphics[width=0.98\textwidth]{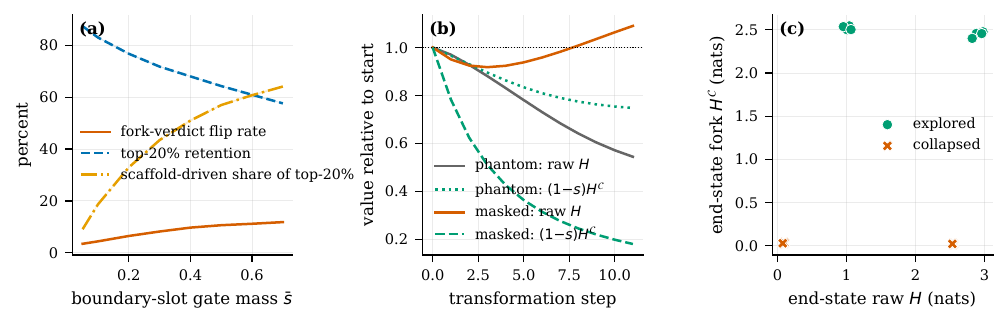}
\caption{\textbf{The three simulations.} (a) SYN-1: sweeping the boundary-slot
gate mass $\bar s$, the fork-verdict flip rate, the top-$20\%$ retention
between conventions, and the scaffold-driven share of the raw top-$20\%$
set (mean$\pm$sd, 3 seeds, $5\times10^{4}$ positions each). (b) SYN-2,
one fixed position population transformed per arm: in the phantom arm the
transformation moves only the nuisance group, so raw $H$ falls $46\%$
while the content channel is exactly invariant by
Proposition~\ref{prop:invariance}; in the masked arm a genuine content
collapse coexists with \emph{rising} raw $H$. (c) SYN-3, end states of the
in-silico GRPO $2\times2$: fork-content entropy separates exploring from
collapsed policies without overlap, and raw $H$ does not, because the
format-reward weight moves raw $H$ by up to $2.5$ nats at unchanged
exploration.}
\label{fig:wedge}
\label{fig:collapse}
\label{fig:insilico}
\end{figure}

All numbers below are produced by exact evaluations or seeded CPU simulations completed within minutes; the identities and both optimizer-regime formulas of Corollary~\ref{cor:reinforce} are additionally machine-checked, and Proposition~\ref{prop:dynamics} is checked against realized finite-difference entropy steps along the in-silico training runs (median relative error $0.0\%$ in the plain arm and $3.2\%$ in the larger-step intervention arm).

\subsection{Exact witnesses and calibrated flip rates}
\label{sec:synthetic}

\begin{table}[t]
\centering
\small
\caption{\textbf{Exact witness for Theorem~\ref{thm:reversal}(i)} (the Figure~\ref{fig:punchline} pair; all values in nats, computed in closed form by exact calculations). Raw entropy selects $t$; the content channel says $u$ forks $16\times$ harder. Boldface marks each convention's verdict.}
\label{tab:witness}
\begin{tabular}{lccccccc}
\toprule
position & $s$ & $H$ & $\Hb(s)$ & $s\HS$ & $(1{-}s)\HC$ & $\HC$ & $G$ \\
\midrule
$t$ (boundary) & $0.50$ & $\mathbf{1.540}$ & $0.693$ & $0.805$ & $0.042$ & $0.083$ & $1.456$ \\
$u$ (computation) & $0.02$ & $1.447$ & $0.098$ & $0.032$ & $1.316$ & $\mathbf{1.343}$ & $0.103$ \\
\bottomrule
\end{tabular}
\end{table}

Table~\ref{tab:witness} instantiates the reversal with realistic magnitudes, and Figure~\ref{fig:wedge} measures how often it happens in a stylized population whose stated target, a low-entropy majority with a boundary-concentrated high-entropy minority in the spirit of \citet{wang2025beyond,zhao2026shorthand}, is \emph{checked in-run} instead of asserted: at $\bar s{=}0.4$ the generator reports $65\%$ of positions below $0.7$ nats, a top-20\% threshold of $1.38$ nats, and boundary slots occupying $58\%$ of the raw top-20\% against a $20\%$ base rate ($2.8\times$ enrichment); these are simulation choices, not model measurements, and the free parameter $\bar s$ is swept precisely because no published statistic pins it (generator and targets: App.~\ref{app:expdetails}). Three seeds, $50{,}000$ positions each; flip rates are computed on non-tied pairs with the tie fraction reported. As $\bar s$ rises through $0.2$--$0.6$: the fork-verdict flip rate climbs $6.5\%\to11\%$ ($12\%$ at $\bar s{=}0.7$); top-20\% retention falls $77\%\to61\%$; and the scaffold-driven share of the raw top-20\% set ($\Hb+s\HS>(1-s)\HC$) grows $33\%\to61\%$, tracking the boundary share of the selected set. At the largest gate mass on the grid ($\bar s=0.7$) the sweep reaches $11.8\%$ flips, $57.6\%$ retention, and a $64.1\%$ scaffold-driven share, which calibrates the plausible range before any model is measured. The disjointness construction of Theorem~\ref{thm:reversal}(ii) is exercised exactly (measured Jaccard $0.00$).

Figure~\ref{fig:collapse} stages the two failure modes of collapse monitoring. The phantom arm moves the population inside the nuisance group only, so the content channel's silence is theorem-guaranteed; the masked arm applies a genuine content collapse, and raw entropy gets both cases wrong in opposite directions: it reports a $46\%$ collapse that is $100\%$ scaffold-side, and it reports $+9\%$ against a $79\%$ content collapse. These are constructions, not measurements; their job, as in the companion's synthetic section, is to isolate the mechanism the protocol then hunts in the wild.

\subsection{In-silico RLVR: which channel carries the collapse, and which predicts exploration}
\label{sec:insilico}

The toy testbed (spec in App.~\ref{app:expdetails}) is built so that ground truth about exploration exists: each content fork accepts two interchangeable tokens, and a policy explored iff it kept both alive, measured as exact pass@8 under the worst-case preference shift. Its role is \emph{mechanism validation}, not independent evidence: the environment is authored, the sample is 16 runs, and the successful intervention is of the same family the content channel measures, so the model-scale versions of these claims live in \S\ref{sec:protocol}. With that scope fixed, three results validate the machinery under the sampled REINFORCE update itself (the regime of Corollary~\ref{cor:reinforce}, not the natural-gradient idealization). \emph{First}, collapse is multi-channel and the attribution is exact: in the format-rewarded run of Figure~\ref{fig:insilico}a, $29\%$ of the $3.55$-nat collapse is gate$+$scaffold, and the per-step channel prediction matches realized entropy differences (median relative error $0.0\%$; $3.2\%$ in the larger-step intervention arm). \emph{Second}, the phantom is manufacturable by reward shaping alone: switching the format weight $\lambda$ from $0$ to $0.35$ moves end-state raw entropy by $2.45$ nats (no intervention) and $1.89$ nats (with intervention) while moving the exploration probe by $0.000$ and $0.035$ respectively. \emph{Third}, within this testbed the content channel is the cleaner exploration signal: end-state fork $\HC$ separates exploring from collapsed policies without overlap (AUC $1.00$ vs $0.75$ for raw $H$ over the 16 endpoints, Figure~\ref{fig:insilico}b), because the format knob moves raw $H$ orthogonally to exploration; the pattern echoes the finding that entropy interventions succeed by protecting a minority of decisive tokens \citep{wang2025beyond,cui2025entropy}, and its model-scale test is Stage 2b.

\section{Worked example: leakage, exactly}
\label{app:worked}

The leakage term $\lek$ of Theorem~\ref{prop:exploration} is the least
familiar object in the paper, so we compute it exactly on the smallest
possible example (four symbols, two steps; all trajectories are
enumerated exactly, without sampling).

\textbf{Alphabet.} $\scafS=\{\text{``Wait''},\text{``So''}\}$,
$\contC=\{a,b\}$; the answer $\alpha$ is the subsequence of content
tokens.

\textbf{Case 1 (leaky, and tight).} Step 1 emits a scaffold token,
uniformly: $p_1=(\tfrac12,\tfrac12,0,0)$. Step 2 converts the scaffold
identity into content deterministically: after ``Wait'' emit $a$, after
``So'' emit $b$. Then per-position gate$+$content budgets are $0$ at both
steps (step 1: $s=1$, $\Hb(1)=0$; step 2: deterministic given the
prefix), yet $H(\alpha)=\log 2$: the answer is a fair coin. The
decomposition locates the coin precisely:
\[
H(\alpha)=0.6931,\qquad
\textstyle\sum_t\mathbb{E}[\Hb+(1-s)\HC]=0,\qquad
\lek=0.6931=\textstyle\sum_t\mathbb{E}[s\HS],
\]
so the bound $H(\alpha)\le\text{gate+content}+\lek$ holds with equality
\emph{and} the envelope $\lek\le\sum_t\mathbb{E}[s\HS]$ holds with
equality: all the exploration was banked inside the scaffold channel at
step 1 (``which connective?'' $=\log 2$ nats) and withdrawn as content at
step 2 through history dependence. Scaffold-internal entropy is not
inert here, and no bound that ignores $\lek$ can be correct, which is
why Theorem~\ref{prop:exploration} carries the exact correction
in place of an assumption-free inequality.

\textbf{Case 2 (inert).} Same alphabet, with step-2 laws depending on the
past only through the censored view $Z_1$ (block flag $+$ content
identity): $p_1=(0.3,0.3,0.28,0.12)$, and $p_2$ is one distribution if
$Y_1\in\contC$ and another if $Y_1\in\scafS$, but never depends on
\emph{which} scaffold token occurred. Enumeration gives
\[
H(\alpha)=1.7304\;\le\;1.9993=\textstyle\sum_t\mathbb{E}[\Hb+(1-s)\HC],
\qquad \lek=0\ (\le 10^{-12}),
\]
with $\sum_t\mathbb{E}[s\HS]=0.5734$ entirely idle: under scaffold
inertness the scaffold-internal channel really does buy no answer
diversity, however large it is.

The pair of cases summarizes what the scaffold can do for
exploration: nothing, unless the model routes information through the
identity of its connectives, and the amount routed is exactly $\lek$.

\section{Proofs}
\label{app:proofs}

Throughout, $p\in\Sm{D}$ factors as
$p=(s\,\pS,(1-s)\,\pC)$ per \eqref{eq:factor}, $m=|\scafS|$,
$k=|\contC|$, $D=m+k$, and all logs are natural. Statements labeled
\emph{(classical)} are standard and included for completeness in this
setting. The main-text statements of Theorems~\ref{thm:chain},
\ref{thm:reversal}, \ref{thm:no-free-lunch-corrected} and
Propositions~\ref{prop:invariance}, \ref{prop:dynamics},
\ref{prop:exploration}, \ref{prop:unmatched-sink-regimes} are stated
in full generality with all conventions inline; the proofs below use
exactly those conventions, and each is followed by the remarks that
delimit its scope (boundary cases, random prompts, optimizer scope,
necessity of the interior domain). Every statement is additionally
machine-verified by 215 checks: exact joint-law
enumeration for Theorem~\ref{prop:exploration} and its equality
cases, reversal witnesses and population formulas, the natural-gradient
pseudoinverse solution, sharp surplus attainers, maximal invariance and
temperature reversals, the no-free-lunch split constant and Hilbert
claims, and the unmatched-sink law and its regimes. The numerical
witnesses and instrument calculus were also verified computationally.

\subsection{Proposition~\ref{thm:chain}: chain rules and sharp signed
  surplus bounds}

\begin{proof}
Fix \(r\in\Delta_{\mathrm{gi}}(V)\) and write
\[
\rho:=s_r\in(0,1).
\]
For \(v\in\scafS\) with \(r_v>0\),
\[
r_v=\rho r_v^{\scafS},
\]
and hence
\[
-r_v\log r_v
=
-\rho r_v^{\scafS}\log\rho
-\rho r_v^{\scafS}\log r_v^{\scafS}.
\]
This identity extends to \(r_v=0\) under the convention
\(0\log0=0\). Therefore
\begin{align*}
-\sum_{v\in\scafS}r_v\log r_v
&=
-\rho\log\rho
\sum_{v\in\scafS}r_v^{\scafS}
-\rho\sum_{v\in\scafS}
r_v^{\scafS}\log r_v^{\scafS}\\
&=
-\rho\log\rho+\rho\,\HS(r).
\end{align*}
Similarly,
\[
-\sum_{v\in\contC}r_v\log r_v
=
-(1-\rho)\log(1-\rho)
+(1-\rho)\,\HC(r).
\]
Adding the two disjoint block contributions proves \textup{(i)}.
We next prove \textup{(ii)}. Suppose first that \(p\ll q\).
Since \(s,t\in(0,1)\),
\[
p^{\scafS}\ll q^{\scafS},
\qquad
p^{\contC}\ll q^{\contC}.
\]
For \(v\in\scafS\) with \(p_v>0\), one has \(q_v>0\), and
\[
\log\frac{p_v}{q_v}
=
\log\frac{s}{t}
+
\log\frac{p_v^{\scafS}}{q_v^{\scafS}}.
\]
Consequently,
\begin{align*}
\sum_{\substack{v\in\scafS\\p_v>0}}
p_v\log\frac{p_v}{q_v}
&=
\sum_{\substack{v\in\scafS\\p_v>0}}
s p_v^{\scafS}
\left(
\log\frac{s}{t}
+
\log\frac{p_v^{\scafS}}{q_v^{\scafS}}
\right)\\
&=
s\log\frac{s}{t}
+
s\,\KL(p^{\scafS}\|q^{\scafS}).
\end{align*}
The analogous calculation on \(\contC\) gives
\[
\sum_{\substack{v\in\contC\\p_v>0}}
p_v\log\frac{p_v}{q_v}
=
(1-s)\log\frac{1-s}{1-t}
+
(1-s)\,\KL(p^{\contC}\|q^{\contC}).
\]
Adding these identities proves \textup{(ii)} when \(p\ll q\).
Suppose now that \(p\not\ll q\). Then some \(v\in V\) satisfies
\(p_v>0=q_v\), so
\[
\KL(p\|q)=+\infty.
\]
If \(v\in\scafS\), then
\[
p_v^{\scafS}>0=q_v^{\scafS},
\]
so
\[
\KL(p^{\scafS}\|q^{\scafS})=+\infty.
\]
Because \(s>0\), the right-hand side of the asserted chain rule is
\(+\infty\). If \(v\in\contC\), the corresponding content divergence
is \(+\infty\), and its coefficient \(1-s\) is positive. Hence both
sides equal \(+\infty\), proving \textup{(ii)} in every case.
By \textup{(i)},
\begin{align*}
G(r)
&=
H(r)-\HC(r)\\
&=
\Hb(s_r)
+s_r\HS(r)
+(1-s_r)\HC(r)-\HC(r)\\
&=
\Hb(s_r)
+s_r\bigl(\HS(r)-\HC(r)\bigr).
\end{align*}
We now establish the entropy extrema. Let \(A\) be a nonempty finite
set with \(|A|=\ell\), and let
\(a\in\overline{\Delta}(A)\). Since
\[
-a_x\log a_x\geq0,
\]
one has \(H(a)\geq0\). Equality holds if and only if \(a\) is a point
mass.
By concavity of the logarithm,
\begin{align*}
H(a)
&=
\sum_{x:a_x>0}
a_x\log\frac{1}{a_x}\\
&\leq
\log\left(
\sum_{x:a_x>0}
a_x\frac{1}{a_x}
\right)\\
&=
\log|\operatorname{supp}(a)|
\leq
\log\ell.
\end{align*}
Equality in the first inequality holds if and only if \(a\) is
uniform on its support. Therefore \(H(a)=\log\ell\) if and only if
\(\operatorname{supp}(a)=A\) and \(a\) is uniform on \(A\).
Fix \(\sigma\in(0,1)\) and
\(c\in\overline{\Delta}(\contC)\). For
\(r\in\mathcal P_{\sigma,c}\),
\[
G(r)
=
\Hb(\sigma)
+\sigma\HS(r)
-\sigma H(c).
\]
Since
\[
0\leq\HS(r)\leq\log m,
\]
the stated minimum and maximum follow. Their equality
characterizations follow from the entropy equality cases above.
For \(r\in\mathcal P_{\sigma}\),
\[
0\leq\HS(r)\leq\log m,
\qquad
0\leq\HC(r)\leq\log k.
\]
Hence
\[
G(r)-\bigl(\Hb(\sigma)-\sigma\log k\bigr)
=
\sigma\HS(r)
+\sigma\bigl(\log k-\HC(r)\bigr)
\geq0,
\]
and
\[
\Hb(\sigma)+\sigma\log m-G(r)
=
\sigma\bigl(\log m-\HS(r)\bigr)
+\sigma\HC(r)
\geq0.
\]
The lower equality holds exactly when
\[
\HS(r)=0,
\qquad
\HC(r)=\log k,
\]
and the upper equality holds exactly when
\[
\HS(r)=\log m,
\qquad
\HC(r)=0.
\]
These are precisely the equality conditions stated in
\textup{(iii)}.
It remains to prove attainment. For arbitrary
\[
a\in\overline{\Delta}(\scafS),
\qquad
c\in\overline{\Delta}(\contC),
\]
define
\[
r_v
=
\begin{cases}
\sigma a_v,&v\in\scafS,\\
(1-\sigma)c_v,&v\in\contC.
\end{cases}
\]
Then
\[
r\in\Delta_{\mathrm{gi}}(V),
\qquad
s_r=\sigma,
\qquad
r^{\scafS}=a,
\qquad
r^{\contC}=c.
\]
Taking \(a\) to be a point mass or uniform proves attainment of the
fixed-content extrema. Taking
\[
(a,c)
=
(\text{point mass},\text{uniform})
\]
or
\[
(a,c)
=
(\text{uniform},\text{point mass})
\]
proves attainment of the global extrema. If a block is a singleton,
its unique probability law is simultaneously a point mass and
uniform. This proves all sharpness and equality claims.
For \textup{(iv)}, apply \textup{(i)} to each \(p_j\), multiply by
\(\alpha_j\), and sum over \(j\). This proves the deterministic
identity.
For the random identity, note first that \(H\), \(\Hb(s_\cdot)\),
\(s_\cdot\HS\), and \((1-s_\cdot)\HC\) are continuous on
\(\Delta_{\mathrm{gi}}(V)\), so every summand is measurable, and
\(\sum_{j\leq N}=\sum_{j\geq1}\mathbf1\{N\geq j\}\cdot(\cdot)\)
preserves measurability. The same equality then holds pathwise.
Furthermore,
\[
0\leq H(P_j)\leq\log(m+k),
\qquad
0\leq\Hb(S_j)\leq\log2,
\]
\[
0\leq S_j\HS(P_j)\leq\log m,
\qquad
0\leq(1-S_j)\HC(P_j)\leq\log k.
\]
Thus
\[
\mathbb E\left[
\sum_{j=1}^{N}|W_j|
\right]<\infty
\]
implies absolute integrability of all four stopped weighted sums,
each dominated by \(\log(m+k)\sum_{j\leq N}|W_j|\). Taking
expectations of the pathwise identity is therefore justified by
linearity of expectation for integrable summands.
\end{proof}

\begin{remark}[Boundary and open-simplex cases]
The entropy identity extends to a gate mass of \(0\) or \(1\) by
assigning weight zero to the entropy of a zero-mass block.
At gate mass \(0\), the content law is the restriction of \(r\) to
\(\contC\), and
\[
G(r)=0.
\]
The numerical surplus bounds then collapse to equality for every
\(r\), so their gate-interior equality characterizations do not
extend.
At gate mass \(1\), the ratio defining \(r^{\contC}\) is undefined.
If \(k\geq2\), no boundary value of \(G\) at such \(r\) is
consistent with continuity. Indeed, choose
\(a\in\overline{\Delta}(\scafS)\) and content laws
\(c_1,c_2\in\overline{\Delta}(\contC)\) with
\(H(c_1)\neq H(c_2)\), and define
\[
r_{\varepsilon,v}^{(i)}
=
\begin{cases}
(1-\varepsilon)a_v,&v\in\scafS,\\
\varepsilon(c_i)_v,&v\in\contC.
\end{cases}
\]
Then \(r_\varepsilon^{(1)}\) and \(r_\varepsilon^{(2)}\) converge to
the same boundary distribution, while
\[
G(r_\varepsilon^{(i)})
\longrightarrow
H(a)-H(c_i).
\]
Thus \(G\) has no path-independent continuous extension in general.
If \(k=1\), the unique content law has entropy zero, so
\[
G(r):=H(r)
\]
gives a canonical boundary extension.
Boundary values in the KL identity require a separate formulation
with explicit conventions for zero-mass conditional laws. They are
not included in the theorem.
If \(p\) and \(q\) arise from ordinary softmax maps with finite logits
on both nonempty blocks, then \(p,q\in\Delta_{\mathrm{gi}}(V)\).
For a nonempty finite set \(A\), define its relative open simplex by
\[
\Delta^\circ(A)
:=
\left\{
a\in\overline{\Delta}(A):
a_x>0\ \text{for every }x\in A
\right\}.
\]
If the manuscript restricts \(r\) to \(\Delta^\circ(V)\), all chain
rule identities remain valid. In the fixed-content statement one
must additionally require
\[
c\in\Delta^\circ(\contC);
\]
otherwise the constrained class is empty. For admissible \(c\), any
extremum requiring a point mass on a block of cardinality greater
than one is not attained in \(\Delta^\circ(V)\), but the same constant
remains the sharp infimum or supremum and is approached by strictly
positive distributions. On a singleton block, the unique conditional
law is both uniform and a point mass.
\end{remark}
\paragraph{Standing notation.}
For every nonempty finite set \(A\), define
\[
\overline{\Delta}_A
:=
\left\{
x\in[0,\infty)^A:
\sum_{a\in A}x_a=1
\right\},
\qquad
\Delta_A^\circ
:=
\left\{
x\in(0,\infty)^A:
\sum_{a\in A}x_a=1
\right\}.
\]
Thus, if \(|A|=1\), then \(\Delta_A^\circ=\{(1)\}\).
For \(x\in\overline{\Delta}_A\), define
\[
H(x):=-\sum_{a:x_a>0}x_a\log x_a.
\]
Equivalently, \(0\log0:=0\).
For \(p\in\Delta_V^\circ\) and nonempty \(B\subseteq V\), write
\[
p(B):=\sum_{v\in B}p_v,
\qquad
p^B
:=
\left(
\frac{p_v}{p(B)}
\right)_{v\in B}
\in\Delta_B^\circ.
\]
For \(z\in\mathbb R^V\), define
\[
P_v(z)
:=
\frac{e^{z_v}}{\sum_{u\in V}e^{z_u}},
\qquad
P^B(z):=(P(z))^B
=
\left(
\frac{e^{z_v}}{\sum_{u\in B}e^{z_u}}
\right)_{v\in B}.
\]
For \(x\in\Delta_A^\circ\), define
\[
\operatorname{clr}_A(x)_a
:=
\log x_a
-\frac1{|A|}\sum_{b\in A}\log x_b,
\]
and, for \(x,y\in\Delta_A^\circ\),
\[
d_{\mathrm A}(x,y)
:=
\left\|
\operatorname{clr}_A(x)-\operatorname{clr}_A(y)
\right\|_2.
\]
When \(|A|=1\), both centered log-ratio vectors and hence
\(d_{\mathrm A}\) are identically zero. For \(\alpha>0\), define
Aitchison powering by
\[
(\alpha\odot x)_a
:=
\frac{x_a^\alpha}{\sum_{b\in A}x_b^\alpha}.
\]

\subsection{Lemma~\ref{lem:pyth}: three-way Pythagorean split}

\begin{proof}
\emph{(classical construction \citep{egozcue2003ilr,egozcue2005balances};
we verify it in coordinates).} Define $u_b\in\mathbb{R}^{D}$ by
$(u_b)_i=\sqrt{k/(mD)}$ for $i\in\scafS$ and
$(u_b)_i=-\sqrt{m/(kD)}$ for $i\in\contC$. Then
$\lVert u_b\rVert^2=m\cdot\frac{k}{mD}+k\cdot\frac{m}{kD}=1$ and
$u_b^{\top}\mathbf{1}=m\sqrt{k/(mD)}-k\sqrt{m/(kD)}=0$, so $u_b$ is a
unit vector of the CLR hyperplane. Writing
$\bar\ell_{\scafS},\bar\ell_{\contC},\bar\ell$ for the means of
$\log p$ over $\scafS$, $\contC$, and $V$,
\[
u_b^{\top}\clr(p)
=\sqrt{\tfrac{k}{mD}}\,m(\bar\ell_{\scafS}-\bar\ell)
-\sqrt{\tfrac{m}{kD}}\,k(\bar\ell_{\contC}-\bar\ell)
=\sqrt{\tfrac{mk}{D}}\big(\bar\ell_{\scafS}-\bar\ell_{\contC}\big)=b(p).
\]
Complete $u_b$ with any orthonormal basis $\{v_j\}_{j<m}$ of the
sum-zero vectors supported on $\scafS$ and $\{w_j\}_{j<k}$ supported on
$\contC$; the three groups are mutually orthogonal and count
$1+(m-1)+(k-1)=D-1$ vectors, an orthonormal basis of the hyperplane.
For $v_j$ supported on $\scafS$ with $v_j^{\top}\mathbf{1}=0$,
\[
v_j^{\top}\clr(p)=v_j^{\top}\log p\big|_{\scafS}
=v_j^{\top}\big(\log \pS+\log s\,\mathbf{1}\big)\big|_{\scafS}
=v_j^{\top}\clr_{\scafS}(\pS),
\]
i.e.\ the within-$\scafS$ coordinates of $p$ \emph{are} ILR coordinates
of the subcomposition $\pS$, and likewise for $\contC$. The map
$p\mapsto(b,\ilr(\pS),\ilr(\pC))$ is therefore an isometric linear
change of ILR basis, and $\dA^2$, the Euclidean norm in any ILR basis,
splits as claimed.
\end{proof}

\begin{remark}[Amalgamation versus balances]
\label{rem:amalg}
The entropy identity is written in the \emph{amalgamated mass} $s$
(an arithmetic sum of parts), the geometric identity in the
\emph{balance} $b$ (a log-ratio of geometric means). Amalgamation is not
a linear operation of Aitchison geometry, a known tension in
compositional data analysis \citep{pawlowsky2015,greenacre2022reappraisal};
the two coordinate systems answer ``how much probability sits on the
scaffold'' and ``where the scaffold sits in log-ratio terms''
respectively. Both are exact; we never mix them within one identity,
and the companion's singleton-sink case (where $m=1$, $\HS\equiv0$, and
$b$ is the sink balance) is the unique setting where the two stories
collapse into one \citep{anonymous2026rows}.
\end{remark}

\subsection{Proposition~\ref{prop:invariance}: content maximal
  invariance and temperature behavior}

\begin{proof}
The transformations in \(G_{\scafS}\) form a group because
\[
g_{w,c}\circ g_{\widetilde w,\widetilde c}
=
g_{w+\widetilde w,c+\widetilde c},
\]
and \(w+\widetilde w\) still vanishes on \(\contC\). The identity is
\(g_{0,0}\), and
\[
g_{w,c}^{-1}=g_{-w,-c}.
\]
\emph{Proof of \textup{(i)}.}
For \(v\in\contC\), one has \(w_v=0\). Therefore
\[
\begin{aligned}
P_v^{\contC}(g_{w,c}z)
&=
\frac{e^{z_v+c}}
     {\sum_{u\in\contC}e^{z_u+c}}\\
&=
\frac{e^{z_v}}
     {\sum_{u\in\contC}e^{z_u}}
=
P_v^{\contC}(z).
\end{aligned}
\]
Thus \(P^{\contC}\) is invariant.
Suppose that
\[
P^{\contC}(z)=P^{\contC}(z').
\]
Let
\[
Z_{\contC}(z):=\sum_{u\in\contC}e^{z_u}.
\]
For every \(v\in\contC\),
\[
\frac{e^{z'_v}}{Z_{\contC}(z')}
=
\frac{e^{z_v}}{Z_{\contC}(z)}.
\]
Taking logarithms gives
\[
z'_v-z_v
=
\log Z_{\contC}(z')
-
\log Z_{\contC}(z)
=:c,
\qquad v\in\contC.
\]
Define
\[
w:=z'-z-c\mathbf1_V.
\]
Then \(w|_{\contC}=0\), and
\[
z'=z+w+c\mathbf1_V=g_{w,c}(z).
\]
The converse follows from the invariance already proved. Hence the
fibers of \(z\mapsto P^{\contC}(z)\) are exactly the
\(G_{\scafS}\)-orbits.
The logit and simplex actions agree because
\[
\begin{aligned}
P_v(g_{w,c}z)
&=
\frac{e^{z_v+w_v+c}}
     {\sum_{u\in V}e^{z_u+w_u+c}}\\
&=
\frac{e^{w_v}P_v(z)}
     {\sum_{u\in V}e^{w_u}P_u(z)}
=
\bigl(\Phi_w(P(z))\bigr)_v.
\end{aligned}
\]
Thus the induced action is independent of \(c\).
Let \(p,q\in\Delta_V^\circ\), and set
\[
z:=\log p,
\qquad
z':=\log q
\]
coordinatewise. Positivity gives
\[
P(z)=p,
\qquad
P(z')=q.
\]
If \(p^{\contC}=q^{\contC}\), then
\(P^{\contC}(z)=P^{\contC}(z')\). By maximality on logit space,
\(z'=g_{w,c}(z)\) for some \(w|_{\contC}=0\), and hence
\[
q=P(z')=\Phi_w(p).
\]
The converse follows from content-subcomposition invariance.
Suppose that \(F\) satisfies \textup{(a)}. Then \(F\) is constant on
every fiber of \(p\mapsto p^{\contC}\). Fix
\[
\rho\in(0,1)
\qquad\text{and}\qquad
r\in\Delta_{\scafS}^\circ.
\]
For \(q\in\Delta_{\contC}^\circ\), define
\[
\iota(q)_v
:=
\begin{cases}
\rho r_v,&v\in\scafS,\\[1mm]
(1-\rho)q_v,&v\in\contC.
\end{cases}
\]
Then \(\iota(q)\in\Delta_V^\circ\) and
\[
\iota(q)^{\contC}=q.
\]
Define
\[
\widetilde F(q):=F(\iota(q)).
\]
For every \(p\in\Delta_V^\circ\), the vectors
\(p\) and \(\iota(p^{\contC})\) have the same content
subcomposition and hence lie in the same orbit. Therefore
\[
F(p)
=
F(\iota(p^{\contC}))
=
\widetilde F(p^{\contC}).
\]
This proves existence. Since \(p\mapsto p^{\contC}\) is surjective,
the factor \(\widetilde F\) is unique. Conversely, every statistic
of the form
\[
F(p)=\widetilde F(p^{\contC})
\]
is invariant because \(p^{\contC}\) is invariant.
\emph{Proof of \textup{(ii)}.}
Let \(e_a\) denote the coordinate unit vector corresponding to
\(a\in V\). Choose \(a\in\scafS\), let
\[
z=0,
\qquad
z'=(\log2)e_a,
\]
and set
\[
p:=P(z),
\qquad
q:=P(z').
\]
Then
\[
p_i=\frac1D
\quad(i\in V),
\]
whereas
\[
q_a=\frac2{D+1},
\qquad
q_i=\frac1{D+1}
\quad(i\neq a).
\]
The two distributions belong to the same orbit and have the same
content subcomposition. Nevertheless,
\[
H(p)=\log D,
\qquad
H(q)
=
\log(D+1)-\frac2{D+1}\log2
<\log D.
\]
The strict inequality also follows from uniqueness of the uniform
entropy maximizer. Thus raw entropy is not invariant.
Moreover,
\[
\operatorname{VarEnt}(p)=0.
\]
Under \(q\), the random variable \(-\log q_i\) takes two values whose
difference is \(\log2\), with aggregate probabilities
\[
\frac2{D+1}
\qquad\text{and}\qquad
\frac{D-1}{D+1}.
\]
Hence
\[
\operatorname{VarEnt}(q)
=
\frac{2(D-1)}{(D+1)^2}(\log2)^2
>0.
\]
Thus raw varentropy is not invariant.
For the margin, choose \(b\in\contC\) and logits \(z\) satisfying
\[
e^{z_b}=4,
\qquad
e^{z_a}=2,
\qquad
e^{z_i}=1
\quad(i\notin\{a,b\}).
\]
Set
\[
z':=z+(\log6)e_a.
\]
Before the transformation, the two largest unnormalized weights are
\(4\) and \(2\); afterward they are \(12\) and \(4\). Therefore
\[
\operatorname{mar}(P(z))
=
\frac2{D+4},
\qquad
\operatorname{mar}(P(z'))
=
\frac8{D+14},
\]
and
\[
\frac8{D+14}-\frac2{D+4}
=
\frac{6D+4}{(D+14)(D+4)}
>0.
\]
Thus the raw top-two probability margin is not invariant.
Fix \(k\in\{1,\ldots,D-1\}\). Choose \(b\in\contC\) and
\(R\subseteq V\setminus\{a,b\}\) with \(|R|=k-1\). Define
\[
z_i=
\begin{cases}
2,&i\in R,\\
1,&i=b,\\
0,&i=a,\\
-1,&\text{otherwise},
\end{cases}
\qquad
z':=z+3e_a.
\]
There is no tie at the \(k\)-th selection boundary, and softmax
preserves coordinate ordering. Hence
\[
\operatorname{Top}_k(P(z))
=
R\cup\{b\},
\qquad
\operatorname{Top}_k(P(z'))
=
R\cup\{a\}.
\]
Thus every nontrivial raw top-\(k\) map is non-invariant.
For \(\lambda\ge0\), define
\[
z(\lambda):=\lambda e_a,
\qquad
p_\lambda:=P(z(\lambda)).
\]
All \(z(\lambda)\) lie in the same \(G_{\scafS}\)-orbit, and
\[
H(p_\lambda)
=
\log(e^\lambda+D-1)
-
\lambda\frac{e^\lambda}{e^\lambda+D-1}.
\]
For \(\lambda>0\),
\[
\frac{d}{d\lambda}H(p_\lambda)
=
-\lambda
\frac{e^\lambda(D-1)}
     {(e^\lambda+D-1)^2}
<0.
\]
Furthermore,
\[
H(p_0)=\log D,
\qquad
\lim_{\lambda\to\infty}H(p_\lambda)=0.
\]
Therefore, for every \(h\in(0,\log D)\), there exists a finite
\(\lambda\) such that
\[
H(p_\lambda)<h<H(p_0).
\]
Thus
\[
E_h(p_\lambda)\neq E_h(p_0).
\]
All content-only invariances follow from part~\textup{(i)}.
\emph{Proof of \textup{(iii)}.}
For \(i\in B\),
\[
\begin{aligned}
P_i^B(z/\tau)
&=
\frac{e^{z_i/\tau}}
     {\sum_{j\in B}e^{z_j/\tau}}\\
&=
\frac{(P_i^B(z))^{1/\tau}}
     {\sum_{j\in B}(P_j^B(z))^{1/\tau}}.
\end{aligned}
\]
Thus
\[
P^B(z/\tau)=(1/\tau)\odot P^B(z).
\]
For every \(\alpha>0\),
\[
\operatorname{clr}_B(\alpha\odot x)
=
\alpha\,\operatorname{clr}_B(x).
\]
Consequently,
\[
\begin{aligned}
d_{\mathrm A}\!\left(
P^{\contC}(z/\tau),P^{\contC}(z'/\tau)
\right)
&=
\left\|
\frac1\tau\operatorname{clr}_{\contC}(P^{\contC}(z))
-
\frac1\tau\operatorname{clr}_{\contC}(P^{\contC}(z'))
\right\|_2\\
&=
\frac1\tau
d_{\mathrm A}\!\left(
P^{\contC}(z),P^{\contC}(z')
\right).
\end{aligned}
\]
Because \(1/\tau>0\) is common to all pairs, every distance ranking
and every distance tie is preserved.
For \(i,j\in\contC\),
\[
\frac{P_i^{\contC}(z/\tau)}
     {P_j^{\contC}(z/\tau)}
=
\exp\!\left(\frac{z_i-z_j}{\tau}\right).
\]
Its comparison with \(1\) is independent of \(\tau>0\), proving
preservation of within-content coordinate order and ties.
We now prove the negative rank-preservation statements. First take
\[
D=3,
\qquad
\scafS=\{1\},
\qquad
\contC=\{2,3\}.
\]
For \(x\in\mathbb R^n\), define
\[
H_\tau(x)
:=
H\!\left(
\frac{e^{x/\tau}}{\sum_{i=1}^n e^{x_i/\tau}}
\right).
\]
Let
\[
\psi_x(\beta):=\log\sum_{i=1}^n e^{\beta x_i},
\qquad
\beta:=\frac1\tau.
\]
Then
\[
H_\tau(x)
=
\psi_x(\beta)-\beta\psi_x'(\beta).
\]
If \(U_n\) is uniform on the coordinates of \(x\), Taylor expansion
at \(\beta=0\) gives
\[
\psi_x(\beta)
=
\log n
+
\beta\,\mathbb E_{U_n}[x]
+
\frac{\beta^2}{2}\operatorname{Var}_{U_n}(x)
+
O(\beta^3),
\]
and
\[
\psi_x'(\beta)
=
\mathbb E_{U_n}[x]
+
\beta\,\operatorname{Var}_{U_n}(x)
+
O(\beta^2).
\]
It follows that
\[
H_\tau(x)
=
\log n
-
\frac{\operatorname{Var}_{U_n}(x)}{2\tau^2}
+
O(\tau^{-3}),
\qquad
\tau\to\infty.
\tag{1}
\]
For raw entropy, take
\[
x=(0,0,1),
\qquad
x'=(0,2,2).
\]
As \(\tau\downarrow0\), softmax converges to the uniform distribution
on the maximizing coordinates, so
\[
H_\tau(x)\longrightarrow0,
\qquad
H_\tau(x')\longrightarrow\log2.
\]
Moreover,
\[
\operatorname{Var}_{U_3}(x)=\frac29,
\qquad
\operatorname{Var}_{U_3}(x')=\frac89.
\]
Thus (1) gives
\[
H_\tau(x)-H_\tau(x')
=
\frac1{3\tau^2}
+
O(\tau^{-3})
>0
\]
for all sufficiently large \(\tau\). Hence there exist finite
\(0<\tau_{\mathsf H,-}<\tau_{\mathsf H,+}\) at which the entropy
ordering is reversed.
For the gate mass, take
\[
y=(1,0,0),
\qquad
y'=(3,0,3).
\]
As \(\tau\downarrow0\),
\[
s(y/\tau)\longrightarrow1,
\qquad
s(y'/\tau)\longrightarrow\frac12.
\]
Writing \(\beta=1/\tau\),
\[
s(y/\tau)
=
\frac{e^\beta}{e^\beta+2}
=
\frac13+\frac{2\beta}{9}+O(\beta^2),
\]
while
\[
s(y'/\tau)
=
\frac{e^{3\beta}}{2e^{3\beta}+1}
=
\frac13+\frac{3\beta}{9}+O(\beta^2).
\]
Therefore
\[
s(y/\tau)-s(y'/\tau)
=
-\frac1{9\tau}
+
O(\tau^{-2})
<0
\]
for all sufficiently large \(\tau\). Hence the gate-mass ordering
reverses between two finite temperatures.
For the surplus, take
\[
u=(1,0,0),
\qquad
u'=(3,0,1).
\]
As \(\tau\downarrow0\),
\[
G(u/\tau)\longrightarrow-\log2,
\qquad
G(u'/\tau)\longrightarrow0.
\]
Applying (1) to the full three-coordinate distribution and its
two-coordinate content subcomposition gives
\[
G(x/\tau)
=
\log\frac32
-
\frac{
\operatorname{Var}_{U_3}(x)
-
\operatorname{Var}_{U_2}(x|_{\contC})
}{2\tau^2}
+
O(\tau^{-3}).
\tag{2}
\]
For \(u\),
\[
\operatorname{Var}_{U_3}(u)
-
\operatorname{Var}_{U_2}(u|_{\contC})
=
\frac8{36},
\]
whereas, for \(u'\),
\[
\operatorname{Var}_{U_3}(u')
-
\operatorname{Var}_{U_2}(u'|_{\contC})
=
\frac{47}{36}.
\]
Consequently,
\[
G(u/\tau)-G(u'/\tau)
=
\frac{13}{24\tau^2}
+
O(\tau^{-3})
>0
\]
for all sufficiently large \(\tau\). Thus the surplus ordering
reverses between two finite temperatures.
It remains to extend these witnesses to every admissible partition
with \(D\ge3\).
If \(|\contC|\ge2\), choose
\[
a\in\scafS,
\qquad
b_1,b_2\in\contC,
\qquad
b_1\neq b_2,
\]
and assign each displayed three-coordinate witness to
\((a,b_1,b_2)\).
If \(|\contC|=1\), then \(|\scafS|\ge2\). Write
\[
\contC=\{c\},
\]
and choose distinct \(a_1,a_2\in\scafS\). Assign the entropy witness
to \((c,a_1,a_2)\). For the gate witness, assign \(y,y'\) to
\((c,a_1,a_2)\). The actual scaffold mass is then one minus the
first-coordinate probability, so both comparison signs are reversed
and a strict ranking reversal remains. Finally,
\(P^{\contC}(z)\) is the one-point distribution, so
\[
H(P^{\contC}(z))=0,
\qquad
G(z)=\mathsf H(z).
\]
The entropy witness therefore also supplies the surplus reversal.
For \(D>3\), first choose finite temperatures at which the relevant
three-coordinate inequalities are strict. Set every unused
coordinate in both logit vectors equal to \(-L\). At either fixed
temperature, as \(L\to\infty\), the resulting full softmax
distributions and their block subcompositions converge to the
corresponding embedded sub-block distributions. Shannon entropy, gate mass,
and content entropy are continuous under this convergence. Therefore
a sufficiently large finite \(L\), one value serving both fixed
temperatures, preserves all strict inequalities.
All final logits and temperatures are finite.
This proves all three parts.
\end{proof}

\begin{remark}[Necessity of the interior domain]
The maximal-invariance statement does not extend verbatim to the
closed simplex under finite logit reweightings. Such reweightings
preserve zero coordinates, and the content subcomposition is
undefined when the content block has zero total mass. A boundary
version must therefore be stratified by support. The dimension bound
in part (iii) is also necessary: for \(D=2\) both blocks are
singletons, \(G=\mathsf H\), and \(\mathsf H\) and \(s\) are
strictly monotone in \((z_1-z_2)/\tau\), so no common-temperature
reversal exists.
\end{remark}

\subsection{Proposition~\ref{prop:dynamics}: channel dynamics}

\begin{proof}
Fix \(t\in I\) and suppress the dependence on \(t\) until the
natural-gradient specialization. Since all logits are finite,
\[
p_a>0\qquad(a\in V).
\]
Because both blocks are nonempty,
\[
0<s<1,
\qquad
m_{\scafS}>0,
\qquad
m_{\contC}>0.
\]
Thus all conditional distributions and logarithms appearing below are
well defined.
Put
\[
\bar u:=\mathbb E_{a\sim p}[u_a].
\]
Differentiating the softmax gives
\begin{align*}
\dot p_a
&=
\frac{e^{z_a}u_a}{\sum_v e^{z_v}}
-
\frac{e^{z_a}\sum_v e^{z_v}u_v}
     {(\sum_v e^{z_v})^2}\\
&=
p_a\left(u_a-\sum_vp_vu_v\right)\\
&=
p_a(u_a-\bar u).
\end{align*}
Consequently,
\[
\sum_{a\in V}\dot p_a
=
\sum_a p_a u_a-\bar u\sum_a p_a
=
0.
\]
Therefore
\begin{align*}
\frac{d}{dt}H(p)
&=
-\sum_{a\in V}(1+\log p_a)\dot p_a\\
&=
-\sum_{a\in V}\log p_a\,\dot p_a\\
&=
-\sum_{a\in V}
p_a(u_a-\bar u)\log p_a\\
&=
-\operatorname{Cov}_{a\sim p}(\log p_a,u_a).
\end{align*}
For \(B\in\{\scafS,\contC\}\),
\begin{align*}
\dot m_B
&=
\sum_{a\in B}\dot p_a\\
&=
\sum_{a\in B}p_a(u_a-\bar u)\\
&=
m_B(\bar u_B-\bar u).
\end{align*}
Furthermore,
\[
\bar u
=
s\bar u_{\scafS}
+(1-s)\bar u_{\contC}.
\]
Taking \(B=\scafS\), we obtain
\begin{align*}
\dot s
&=
s(\bar u_{\scafS}-\bar u)\\
&=
s(1-s)
\bigl(\bar u_{\scafS}-\bar u_{\contC}\bigr).
\end{align*}
For \(a\in B\), the quotient rule gives
\begin{align*}
\dot p_a^B
&=
\frac{\dot p_a}{m_B}
-\frac{p_a\dot m_B}{m_B^2}\\
&=
p_a^B(u_a-\bar u)
-p_a^B(\bar u_B-\bar u)\\
&=
p_a^B(u_a-\bar u_B).
\end{align*}
It follows that
\[
\sum_{a\in B}\dot p_a^B=0
\]
and hence
\begin{align*}
\dot H^B
&=
-\sum_{a\in B}(1+\log p_a^B)\dot p_a^B\\
&=
-\sum_{a\in B}\log p_a^B\,\dot p_a^B\\
&=
-\sum_{a\in B}
p_a^B(u_a-\bar u_B)\log p_a^B\\
&=
-\operatorname{Cov}_{a\sim p^B}
\bigl(\log p_a^B,u_a\bigr).
\end{align*}
For \(a\in B\), \(p_a=m_Bp_a^B\). Consequently,
\begin{align*}
-\sum_{a\in B}p_a\log p_a
&=
-m_B\sum_{a\in B}
p_a^B\bigl(\log m_B+\log p_a^B\bigr)\\
&=
-m_B\log m_B+m_BH^B.
\end{align*}
Summing over the two blocks yields
\[
H(p)
=
\Hb(s)+sH^{\scafS}+(1-s)H^{\contC}.
\]
Since
\[
\Hb'(s)=\log\frac{1-s}{s},
\]
differentiating the grouping identity gives
\begin{align*}
\dot H(p)
&=
\left[
\log\frac{1-s}{s}
+H^{\scafS}-H^{\contC}
\right]\dot s\\
&\qquad
+s\dot H^{\scafS}
+(1-s)\dot H^{\contC}\\
&=
\Gamma_{\mathrm{gate}}
+\Gamma_{\scafS}
+\Gamma_{\contC}.
\end{align*}
This also proves
\[
\Gamma_{\mathrm{gate}}
=
K(p)\dot s
=
\dot\Hb(s)
+\bigl(H^{\scafS}-H^{\contC}\bigr)\dot s.
\]
If \(u_a\) is constant on \(B\), then
\(u_a-\bar u_B=0\) for every \(a\in B\), so
\[
\dot p_a^B=0
\qquad\text{and}\qquad
\Gamma_B=0.
\]
We now prove the natural-gradient specialization. Restore the time
arguments. For a categorical softmax distribution,
\[
\nabla_z\log p_a(t)=e_a-p(t).
\]
Thus
\begin{align*}
g(t)
&=
\mathbb E_{a\sim p(t)}
\bigl[A_a(t)(e_a-p(t))\bigr]\\
&=
\operatorname{diag}(p(t))A(t)
-p(t)\bigl(p(t)^\top A(t)\bigr)\\
&=
F(t)A(t).
\end{align*}
For every \(x\in\mathbb R^V\),
\begin{align*}
x^\top F(t)x
&=
\sum_{a\in V}p_a(t)x_a^2
-
\left(\sum_{a\in V}p_a(t)x_a\right)^2\\
&=
\operatorname{Var}_{a\sim p(t)}(x_a).
\end{align*}
Because \(p_a(t)>0\) for every \(a\), this variance vanishes if and
only if \(x\) is constant. Therefore
\[
\ker F(t)=\operatorname{span}\{\mathbf1\}.
\]
Since \(F(t)\) is symmetric,
\[
\operatorname{range}F(t)
=
\ker F(t)^\perp
=
\mathbf1^\perp.
\]
Since \(F(t)^\dagger F(t)\) is the orthogonal projector onto
\(\operatorname{range}(F(t)^\top)=\mathbf1^\perp\),
\[
F(t)^\dagger F(t)
=
I_{|V|}
-\frac{1}{|V|}\mathbf1\mathbf1^\top.
\]
Using the assumed natural-gradient flow,
\begin{align*}
\dot z(t)
&=
u^{\mathrm{NG}}(t)\\
&=
\eta F(t)^\dagger g(t)\\
&=
\eta F(t)^\dagger F(t)A(t)\\
&=
\eta
\left(
A(t)-\langle A(t)\rangle_{\mathrm{unif}}\mathbf1
\right).
\end{align*}
Thus the Moore--Penrose velocity is of the form
\[
u^{\mathrm{NG}}(t)=\eta A(t)+c(t)\mathbf1,
\qquad
c(t)=-\eta\langle A(t)\rangle_{\mathrm{unif}}.
\]
Since the common additive term cancels from the softmax derivative,
\begin{align*}
\dot p_a(t)
&=
p_a(t)
\left(
\dot z_a(t)
-\mathbb E_{v\sim p(t)}[\dot z_v(t)]
\right)\\
&=
\eta p_a(t)
\bigl(A_a(t)-\bar A(t)\bigr).
\end{align*}
Applying the already proved block formulas gives
\[
\dot p_a^B(t)
=
\eta p_a^B(t)
\bigl(A_a(t)-\bar A_B(t)\bigr)
\]
and
\[
\dot s(t)
=
\eta s(t)(1-s(t))
\bigl(\bar A_{\scafS}(t)-\bar A_{\contC}(t)\bigr).
\]
Because covariance with a constant is zero,
\[
\Gamma_B(t)
=
-\eta m_B(t)
\operatorname{Cov}_{a\sim p^B(t)}
\bigl(\log p_a^B(t),A_a(t)\bigr),
\]
while substitution into the general gate formula yields
\[
\Gamma_{\mathrm{gate}}(t)
=
\eta K(p(t))s(t)(1-s(t))
\bigl(\bar A_{\scafS}(t)-\bar A_{\contC}(t)\bigr).
\]
If \(A_a(t)=\alpha_B(t)\) for every \(a\in B\), then
\(\bar A_B(t)=\alpha_B(t)\), and hence
\[
\dot p_a^B(t)=0
\qquad(a\in B).
\]
If this holds for every \(t\in J\), then
\(\dot p_a^B(t)=0\) throughout \(J\). Since \(J\) is an interval,
\(p^B(t)\) is constant on \(J\).
Finally, \(\eta>0\) and \(s(t)(1-s(t))>0\). The displayed formulas for
\(\dot s(t)\) and \(\Gamma_{\mathrm{gate}}(t)\) therefore imply the
stated equivalences and sign identity.
\end{proof}

\begin{proof}[Proof of Corollary~\ref{cor:reinforce}]
For the sampled update $\Delta z_a=\eta(\mathbf{1}[a=a']-p_a)\hat A(a')$
with $a'\sim p$, taking expectations gives
$\mathbb{E}[\Delta z_a]=\eta\,p_a(\hat A_a-\bar{\hat A})$. A key
point, which an earlier version of this corollary got wrong and a
careful reader will check: this velocity is \emph{not} blockwise
constant even for block-constant $\hat A$, because of the factor $p_a$.
Take $\hat A_a=A_{\scafS}$ on $\scafS$ and $A_{\contC}$ on $\contC$,
$\Delta A=A_{\scafS}-A_{\contC}$, so
$\bar{\hat A}=sA_{\scafS}+(1-s)A_{\contC}$ and
\[
u_a=\eta\,p_a(1-s)\Delta A=\eta\,s(1-s)\Delta A\,\pS_a\ \ (a\in\scafS),
\qquad
u_a=-\eta\,p_a\,s\,\Delta A=-\eta\,s(1-s)\Delta A\,\pC_a\ \ (a\in\contC).
\]
Within each block $u$ is proportional to $p^{B}$, so
$\operatorname{Cov}_{p^{B}}(\log p^{B},u)=c_B
\operatorname{Cov}_{p^{B}}(\log p^{B},p^{B})=c_B V_B$ with
$c_{\scafS}=\eta s(1-s)\Delta A$, $c_{\contC}=-\eta s(1-s)\Delta A$,
and plugging into $\Gamma_B=-m_B\operatorname{Cov}_{p^{B}}(\log p^{B},u)$
gives the displayed formulas. $V_B\ge0$ because $x\mapsto\log x$ and
$x\mapsto x$ are comonotone functions of $p^{B}_a$ (Chebyshev's sum
inequality), with equality iff $p^{B}$ is uniform. A numerical
counterexample to the gate-only reading is
$p=(0.4,0.1\mid0.25,0.25)$, $\hat A=(1,1\mid-1,-1)$ gives
$u=(0.4,0.1\mid-0.25,-0.25)$ and $\Gamma_{\scafS}=-0.03327\ne0$. Under
natural gradient $u=\eta\hat A$ the velocity IS blockwise constant for
block-level rewards, and only the between-block term moves
(Proposition~\ref{prop:dynamics}).
\end{proof}

\begin{remark}[Scope]
The natural-gradient specialization concerns exact undamped ascent in
the local categorical Fisher geometry at a fixed prefix. It assumes
that the actual logit path satisfies the displayed natural-gradient
ODE. It does not assert that a shared-parameter neural-network update,
a damped or approximate natural gradient, clipping, Adam, or a
REINFORCE/GRPO update induces the same local logit velocity.
For the standard exact advantage
\[
A_a(t)
=
Q_a(t)-\mathbb E_{v\sim p(t)}[Q_v(t)],
\]
one has \(\bar A(t)=0\). The formulas retain \(\bar A(t)\) to make
their invariance to action-independent baselines explicit.
Blockwise constancy is a condition on the conditional action score or
advantage. An immediate reward that is constant within a block need
not yield a blockwise-constant advantage in a sequential model,
because continuation values may differ among actions in that block.
Finally, the proposition is an exact differential identity. At a
fixed finite logit vector \(z\in\mathbb R^V\), let
\(p=\operatorname{softmax}(z)\). Since
\(z\mapsto H(\operatorname{softmax}(z))\) is smooth,
\[
H(\operatorname{softmax}(z+\Delta z))
-
H(\operatorname{softmax}(z))
=
-\operatorname{Cov}_{a\sim p}
\bigl(\log p_a,\Delta z_a\bigr)
+
O(\|\Delta z\|_2^2)
\]
as \(\|\Delta z\|_2\to0\). The implicit constant may depend on
\(z\) and \(V\).
\end{remark}
\providecommand{\scafS}{\mathcal S}
\providecommand{\contC}{\mathcal C}
\providecommand{\KL}{\operatorname{KL}}
\providecommand{\HS}{H^{\scafS}}
\providecommand{\HC}{H^{\contC}}
\providecommand{\Hb}{H_{\mathrm b}}

\subsection{Theorem~\ref{thm:reversal}: fork-verdict reversal}

\paragraph{Simplex, entropy, and selection conventions.}
For every nonempty finite set \(A\), define
\[
\Delta_A
:=
\left\{
x\in[0,\infty)^A:
\sum_{a\in A}x_a=1
\right\},
\qquad
\Delta_A^\circ
:=
\left\{
x\in(0,\infty)^A:
\sum_{a\in A}x_a=1
\right\}.
\]
Thus
\[
\Delta_A^\circ=\operatorname{ri}(\Delta_A).
\]
In particular, if \(|A|=1\), then
\[
\Delta_A^\circ=\{(1)\}.
\]
For \(d\geq1\), write
\[
[d]:=\{1,\ldots,d\},
\qquad
\Delta_d^\circ:=\Delta_{[d]}^\circ.
\]
For every probability vector \(x\), define
\[
H(x):=-\sum_a x_a\log x_a,
\]
with the convention \(0\log0=0\). For \(s\in[0,1]\), define
\[
\Hb(s):=-s\log s-(1-s)\log(1-s),
\]
using the same convention at \(s=0,1\). For every nonzero
nonnegative vector \(z\), define
\[
\clo(z):=\frac{z}{\sum_a z_a}.
\]
For a score vector \(a=(a_1,\ldots,a_N)\in\mathbb R^N\) and
\(1\leq K\leq N\), define
\[
\operatorname{Top}_K(a)
:=
\left\{
i\in[N]:
\#\left\{
j\in[N]:
(a_j,-j)>_{\mathrm{lex}}(a_i,-i)
\right\}<K
\right\}.
\]
The pairs \((a_i,-i)\) are distinct, so this rule selects exactly
\(K\) indices. Larger scores are selected first, and smaller indices
break score ties.

\begin{proof}
For every \(r\in\Delta_V^\circ\), the definition of \(G\) gives
\[
H(r)=\HC(r)+G(r).
\]
Therefore, for every \(p,q\in\Delta_V^\circ\),
\[
\begin{aligned}
H(p)-H(q)
&=
\HC(p)+G(p)-\HC(q)-G(q) \\
&=
\delta(p,q)+\gamma(p,q).
\end{aligned}
\]
The two strict rankings are opposite precisely when the two nonzero
gaps
\[
\delta(p,q)
\qquad\text{and}\qquad
\delta(p,q)+\gamma(p,q)
\]
have opposite signs. Two nonzero real numbers have opposite signs if
and only if their product is negative. This proves the ranking
criterion in part~{\rm(i)}.
We next construct an interior strict reversal for every permitted
pair \((m,k)\). Fix arbitrary orderings of the coordinates in
\(\scafS\) and \(\contC\). For \(d\geq2\) and
\(0<\varepsilon<1/d\), define
\[
r_{d,\varepsilon}
:=
\left(
1-(d-1)\varepsilon,
\underbrace{\varepsilon,\ldots,\varepsilon}_{d-1}
\right)
\in\Delta_d^\circ,
\]
and set
\[
r_{1,\varepsilon}:=(1)\in\Delta_1^\circ.
\]
Also let
\[
u_k:=\left(\frac1k,\ldots,\frac1k\right)\in\Delta_k^\circ.
\]
Choose
\[
0<\varepsilon<\frac{1}{\max\{m,k\}}
\]
and define
\[
p_\varepsilon
:=
\left(
(1-\varepsilon)r_{m,\varepsilon},
\ \varepsilon u_k
\right),
\qquad
q_\varepsilon
:=
\left(
\frac12r_{m,\varepsilon},
\ \frac12r_{k,\varepsilon}
\right).
\]
For every \(d\in\{m,k\}\) with \(d\geq2\), the choice of
\(\varepsilon\) implies \(\varepsilon<1/d\), and hence
\[
1-(d-1)\varepsilon
>
1-\frac{d-1}{d}
=
\frac1d
>
0.
\]
If \(d=1\), then \(r_{1,\varepsilon}=(1)\), whose sole coordinate is
positive. Thus all coordinates of \(r_{m,\varepsilon}\) and
\(r_{k,\varepsilon}\) are positive. Since their coordinates sum to
one, it follows that
\[
p_\varepsilon,q_\varepsilon\in\Delta_V^\circ.
\]
Their normalized content laws are
\[
\clo(\varepsilon u_k)=u_k,
\qquad
\clo\left(\frac12r_{k,\varepsilon}\right)
=
r_{k,\varepsilon}.
\]
Hence
\[
\HC(p_\varepsilon)=H(u_k)=\log k,
\qquad
\HC(q_\varepsilon)=H(r_{k,\varepsilon}).
\]
For probability vectors \(a,b\) and \(s\in(0,1)\), direct expansion
gives the two-block grouping identity
\[
H\bigl(sa,(1-s)b\bigr)
=
\Hb(s)+sH(a)+(1-s)H(b).
\]
Applying this identity to \(p_\varepsilon\), and using
\(\Hb(1-\varepsilon)=\Hb(\varepsilon)\), gives
\[
H(p_\varepsilon)
=
\Hb(\varepsilon)
+
(1-\varepsilon)H(r_{m,\varepsilon})
+
\varepsilon\log k.
\]
Applying it to \(q_\varepsilon\) gives
\[
H(q_\varepsilon)
=
\log2
+
\frac12H(r_{m,\varepsilon})
+
\frac12H(r_{k,\varepsilon}).
\]
For every fixed \(d\geq2\),
\[
\begin{aligned}
H(r_{d,\varepsilon})
={}&
-\bigl(1-(d-1)\varepsilon\bigr)
 \log\bigl(1-(d-1)\varepsilon\bigr) \\
&\quad
-(d-1)\varepsilon\log\varepsilon
\longrightarrow0
\qquad
(\varepsilon\downarrow0).
\end{aligned}
\]
Indeed, the first term converges to
\[
-1\log1=0
\]
by continuity, while the second converges to zero because
\[
\varepsilon\log\varepsilon\longrightarrow0.
\]
For \(d=1\),
\[
H(r_{1,\varepsilon})=0
\]
identically. Also,
\[
\Hb(\varepsilon)\longrightarrow0.
\]
Consequently,
\[
H(p_\varepsilon)-H(q_\varepsilon)
\longrightarrow
-\log2<0,
\]
while
\[
\HC(p_\varepsilon)-\HC(q_\varepsilon)
\longrightarrow
\log k>0.
\]
Therefore, there exists \(\varepsilon_0>0\) such that, whenever
\[
0<\varepsilon<
\min\left\{
\varepsilon_0,
\frac{1}{\max\{m,k\}}
\right\},
\]
one has simultaneously
\[
H(p_\varepsilon)<H(q_\varepsilon),
\qquad
\HC(p_\varepsilon)>\HC(q_\varepsilon).
\label{eq:universal-strict-reversal}
\]
For any such \(\varepsilon\), set
\[
p_\star:=p_\varepsilon,
\qquad
q_\star:=q_\varepsilon.
\]
This proves part~{\rm(ii)}. Moreover,
\[
(p_\varepsilon,q_\varepsilon)\in\mathcal R,
\]
so \(\mathcal R\) is nonempty.
It remains to establish openness and positive measure. Define
\[
\Phi(p,q)
:=
\bigl(\HC(p)-\HC(q)\bigr)
\bigl(H(p)-H(q)\bigr).
\]
Entropy is continuous. On \(\Delta_V^\circ\), the content mass
\(p(\contC)\) is positive, so
\[
p\longmapsto
p^{\contC}
=
\frac{p|_{\contC}}{p(\contC)}
\]
is continuous. Hence \(p\mapsto\HC(p)\) and \(\Phi\) are continuous.
Therefore,
\[
\mathcal R
=
\Phi^{-1}((-\infty,0))
\]
is relatively open in \((\Delta_V^\circ)^2\). Since
\((\Delta_V^\circ)^2\) is relatively open in
\(\mathcal A_V\times\mathcal A_V\), it follows that \(\mathcal R\)
is relatively open in \(\mathcal A_V\times\mathcal A_V\).
The affine space \(\mathcal A_V\) has dimension \(D-1\). Therefore,
\(\mathcal A_V\times\mathcal A_V\) has dimension \(2(D-1)\).
Since \(\mathcal R\) is nonempty and relatively open, it contains a
relative Euclidean ball. It consequently has positive induced
\(2(D-1)\)-dimensional Lebesgue measure, equivalently positive
\(2(D-1)\)-dimensional Hausdorff measure. This completes
part~{\rm(i)}.
For the finite-population claim in part~{\rm(iii)}, fix an
\(\varepsilon\) satisfying
\eqref{eq:universal-strict-reversal}. Let \(\rho\in(0,\tfrac12]\), and
let \(N\) be an even positive integer with \(\rho N\in\mathbb N\);
define
\[
p_i
:=
\begin{cases}
q_\varepsilon,
&1\leq i\leq N/2,\\
p_\varepsilon,
&N/2<i\leq N.
\end{cases}
\]
If \(N\) pairwise distinct positions are desired, replace the two
repeated values by \(N\) distinct points drawn from small disjoint
neighborhoods of \(p_\varepsilon\) and \(q_\varepsilon\) inside the
open region of part (i); all strict inequalities below persist.
Every \(q_\varepsilon\)-position has strictly larger raw entropy than
every \(p_\varepsilon\)-position, while every
\(p_\varepsilon\)-position has strictly larger content entropy than
every \(q_\varepsilon\)-position. Because
\[
\rho N\leq\frac{N}{2},
\]
the \(\rho N\) largest raw-entropy scores all come from the first
group, while the \(\rho N\) largest content-entropy scores all come
from the second group. The two selected index sets are therefore
disjoint. Ties occur only within the two groups and are resolved by
the stated index rule. Both selected sets have \(\rho N>0\) elements,
so their Jaccard similarity is zero. (The instantiation
\(\rho=\tfrac15\), \(N\) divisible by \(10\), is the top-\(20\%\) case.)
It remains to prove the population formulas. Let \(Z\) be a
real-valued random variable, let \(u\in(0,1)\), and define
\[
q:=F_Z^{-1}(u).
\]
Writing
\[
F_Z(q-)
:=
\lim_{t\uparrow q}F_Z(t)
=
\mathbb P(Z<q),
\]
the definition of the generalized inverse and the right-continuity
of \(F_Z\) imply
\[
F_Z(q-)\leq u\leq F_Z(q).
\]
If \(\mathbb P(Z=q)=0\), then
\[
F_Z(q)-F_Z(q-)=\mathbb P(Z=q)=0.
\]
It follows that
\[
F_Z(q-)=F_Z(q)=u,
\qquad
\mathbb P(Z\geq q)=1-F_Z(q-)=1-u.
\]
The relevant quantiles are finite because
\[
0\leq X\leq\log k,
\qquad
0\leq Y\leq\log D.
\]
Applying the preceding fact with
\[
(Z,u)=(X,1-\rho)
\qquad\text{and}\qquad
(Z,u)=(Y,1-\rho)
\]
gives
\[
\mathbb P(A_C)=\mathbb P(A_H)=\rho.
\]
Since \(Y=X+W\),
\[
A_C\cap A_H
=
\{X\geq q_C,\ X+W\geq q_H\}.
\]
By the definition of \(\nu=\mathcal L(X,W)\),
\[
\omega_\rho
=
\nu\left(
\left\{
(x,w)\in\mathbb R^2:
x\geq q_C,\quad x+w\geq q_H
\right\}
\right).
\]
Moreover,
\[
\mathbb P(A_C\cup A_H)
=
\mathbb P(A_C)+\mathbb P(A_H)
-\mathbb P(A_C\cap A_H)
=
2\rho-\omega_\rho.
\]
Therefore,
\[
\mathbb P(A_C\mid A_H)
=
\mathbb P(A_H\mid A_C)
=
\frac{\omega_\rho}{\rho},
\]
and
\[
J_\rho
=
\frac{\omega_\rho}{2\rho-\omega_\rho}.
\]
Finally,
\[
\omega_\rho
\leq
\min\{\mathbb P(A_C),\mathbb P(A_H)\}
=
\rho,
\]
while inclusion-exclusion gives
\[
\omega_\rho
\geq
\mathbb P(A_C)+\mathbb P(A_H)-1
=
2\rho-1.
\]
Combining this with \(\omega_\rho\geq0\) yields
\[
\max\{0,2\rho-1\}
\leq
\omega_\rho
\leq
\rho.
\]
In particular,
\[
2\rho-\omega_\rho\geq\rho>0,
\]
so every conditional-probability and Jaccard denominator is strictly
positive. This completes the proof.
\end{proof}

\subsection{Theorem~\ref{prop:exploration}: exploration bound
  with exact leakage}

\begin{proof}
Because \(Z_{<t}\) is a deterministic function of \(Y_{<t}\),
\begin{align}
I(Z_t;Y_{<t}\mid Z_{<t})
&=
H(Z_t\mid Z_{<t})
-
H(Z_t\mid Y_{<t},Z_{<t})
\notag\\
&=
H(Z_t\mid Z_{<t})
-
H(Z_t\mid Y_{<t}).
\label{eq:exploration-one-step-mi}
\end{align}
Fix a positive-probability history \(h\in V^{t-1}\). Conditionally on
\(Y_{<t}=h\), the variable \(Z_t\) assigns probability \(s_t(h)\) to
\(\dagger\) and probability
\[
(1-s_t(h))p_t^{\contC}(v\mid h)
\]
to each \(v\in\contC\). The entropy grouping identity, including the
boundary cases \(s_t(h)\in\{0,1\}\), therefore gives
\[
H(Z_t\mid Y_{<t}=h)
=
\Hb(s_t(h))
+
(1-s_t(h))H_t^{\contC}(h).
\]
Averaging over \(Y_{<t}\) yields
\[
H(Z_t\mid Y_{<t})
=
\mathbb E\!\left[
\Hb(s_t)+(1-s_t)H_t^{\contC}
\right].
\]
Rearranging \eqref{eq:exploration-one-step-mi} gives
\[
H(Z_t\mid Z_{<t})
=
\mathbb E\!\left[
\Hb(s_t)+(1-s_t)H_t^{\contC}
\right]
+
I(Z_t;Y_{<t}\mid Z_{<t}).
\]
Summing over \(t\) and applying the entropy chain rule,
\begin{align*}
H(Z_{1:T})
&=
\sum_{t=1}^{T}H(Z_t\mid Z_{<t})\\
&=
B_{\mathrm{gc}}+\lek.
\end{align*}
This proves \eqref{eq:exploration-censored-entropy}. Since conditional
mutual information is nonnegative,
\[
\lek\geq0.
\]
For a positive-probability history \(h\), the conditional law of
\(Y_t\) is the disjoint-support mixture
\[
p_t(\,\cdot\mid h)
=
\left(
s_t(h)p_t^{\scafS}(\,\cdot\mid h),
(1-s_t(h))p_t^{\contC}(\,\cdot\mid h)
\right).
\]
A second application of the entropy grouping identity gives
\[
H(Y_t\mid Y_{<t}=h)
=
\Hb(s_t(h))
+s_t(h)H_t^{\scafS}(h)
+(1-s_t(h))H_t^{\contC}(h).
\]
After averaging over \(Y_{<t}\), summing over \(t\), and applying the
entropy chain rule,
\begin{align}
H(Y_{1:T})
&=
\sum_{t=1}^{T}H(Y_t\mid Y_{<t})
\notag\\
&=
B_{\mathrm{gc}}+B_{\mathrm{sc}}.
\label{eq:exploration-full-entropy}
\end{align}
Because \(Z_{1:T}\) is a deterministic function of \(Y_{1:T}\),
\[
H(Y_{1:T})
=
H(Z_{1:T})
+
H(Y_{1:T}\mid Z_{1:T}).
\]
Substituting \eqref{eq:exploration-censored-entropy} and
\eqref{eq:exploration-full-entropy} gives
\[
B_{\mathrm{gc}}+B_{\mathrm{sc}}
=
B_{\mathrm{gc}}+\lek
+
H(Y_{1:T}\mid Z_{1:T}).
\]
Therefore,
\[
\lek
=
B_{\mathrm{sc}}
-
H(Y_{1:T}\mid Z_{1:T}),
\]
which proves \eqref{eq:exploration-leakage-slack}. Since conditional
entropy is nonnegative,
\[
\lek\leq B_{\mathrm{sc}}.
\]
Moreover,
\[
\lek=B_{\mathrm{sc}}
\quad\Longleftrightarrow\quad
H(Y_{1:T}\mid Z_{1:T})=0.
\]
Because \(Y_{1:T}\) and \(Z_{1:T}\) have finite alphabets, this is
equivalent to almost-sure recoverability of \(Y_{1:T}\) from
\(Z_{1:T}\).
The two chain-rule decompositions of the joint entropy of
\((\alpha,Z_{1:T})\) give
\[
H(\alpha)+H(Z_{1:T}\mid\alpha)
=
H(Z_{1:T})+H(\alpha\mid Z_{1:T}).
\]
All quantities are finite because \(Z_{1:T}\) has a finite alphabet
and \(H(\alpha)<\infty\). Substituting
\eqref{eq:exploration-censored-entropy} and rearranging yields
\[
H(\alpha)
=
B_{\mathrm{gc}}+\lek
+H(\alpha\mid Z_{1:T})
-H(Z_{1:T}\mid\alpha),
\]
which proves \eqref{eq:exploration-answer-identity}. Dropping the
nonnegative term \(H(Z_{1:T}\mid\alpha)\) proves
\[
H(\alpha)
\leq
B_{\mathrm{gc}}+\lek+H(\alpha\mid Z_{1:T}).
\]
Equality holds precisely when
\[
H(Z_{1:T}\mid\alpha)=0.
\]
The zero-conditional-entropy characterization for discrete random
variables proves all stated functional-recoverability equivalences.
Every summand defining \(\lek\) is nonnegative. Hence
\[
\lek=0
\quad\Longleftrightarrow\quad
I(Z_t;Y_{<t}\mid Z_{<t})=0
\quad\text{for every }t.
\]
For finite trajectory alphabets,
\[
I(Z_t;Y_{<t}\mid Z_{<t})=0
\quad\Longleftrightarrow\quad
Z_t\mathrel{\perp\!\!\!\perp}Y_{<t}\mid Z_{<t}.
\]
Since \(Z_{<t}\) is a function of \(Y_{<t}\), this conditional
independence is equivalent to the existence of a kernel \(K_t\) such
that
\[
\mathbb P(Z_t=z\mid Y_{<t})
=
K_t(z\mid Z_{<t})
\]
almost surely for every \(z\in\mathcal Z\). This proves the inertness
characterization.
It remains to establish simultaneous sharpness. Assume \(T\geq2\).
Choose distinct nonterminal scaffold symbols
\(w_0,w_1\in\scafS\) and distinct content symbols
\(a_0,a_1\in\contC\). Define
\[
\mathbb P(Y_1=w_i)=\frac12,
\qquad
\mathbb P(Y_2=a_i\mid Y_1=w_i)=1,
\qquad i\in\{0,1\},
\]
and, when \(T>2\), set
\[
Y_t=a_0
\qquad\text{almost surely for }t=3,\ldots,T.
\]
Let
\[
\alpha:=Y_2.
\]
Then
\[
Z_1=\dagger,
\qquad
Z_2=\alpha,
\qquad
Z_t=a_0\quad(t\geq3).
\]
At time \(1\), the block is deterministically scaffold and the
within-scaffold entropy equals \(\log2\). At every later time, both the
block and the within-block token are conditionally deterministic.
Therefore,
\[
B_{\mathrm{gc}}=0,
\qquad
B_{\mathrm{sc}}=\log2.
\]
Furthermore,
\[
\lek
=
I(Z_2;Y_1\mid Z_1)
=
\log2,
\]
while every other summand in \(\lek\) is zero. The visible sequence
identifies the hidden choice \(w_i\), and \(\alpha\) identifies the
entire visible sequence. Consequently,
\[
H(Y_{1:T}\mid Z_{1:T})=0,
\qquad
H(\alpha\mid Z_{1:T})=0,
\qquad
H(Z_{1:T}\mid\alpha)=0.
\]
Thus
\[
H(\alpha)
=
\lek
=
B_{\mathrm{sc}}
=
\log2,
\]
so the upper leakage bound and the answer bound are simultaneous
equalities.
Finally, consider full-support constraints. Fix versions
\(p_t^0(\cdot\mid h)\) of the sharpness-witness kernels, extended
arbitrarily to null histories (EOS-absorbing where applicable),
and let \(u_V\) be uniform on \(V\). In the fixed-length case, define
\[
p_t^\varepsilon(\,\cdot\mid h)
:=
(1-\varepsilon)p_t^0(\,\cdot\mid h)
+\varepsilon u_V,
\qquad 0<\varepsilon<1.
\]
Every resulting kernel has full support.
In the absorbing-EOS case, use the same definition when \(h\) does
not contain \(\mathrm{EOS}\), and set
\[
p_t^\varepsilon(\,\cdot\mid h)
:=
\delta_{\mathrm{EOS}}
\]
when \(h\) contains \(\mathrm{EOS}\). Thus every pre-termination kernel
has full support and EOS remains absorbing. In both cases, define
\[
\alpha^\varepsilon:=Y_2^\varepsilon.
\]
Because the horizon and vocabulary are finite, each joint probability
is a finite product of kernel entries, each converging pointwise as
\(\varepsilon\downarrow0\); hence the induced joint laws
of
\[
(\alpha^\varepsilon,Y_{1:T}^\varepsilon,Z_{1:T}^\varepsilon)
\]
converge in total variation to the sharpness witness as
\(\varepsilon\downarrow0\). Shannon entropy and mutual information are
continuous on finite probability simplices. Hence
\[
B_{\mathrm{sc}}^\varepsilon-\lek^\varepsilon
=
H(Y_{1:T}^\varepsilon\mid Z_{1:T}^\varepsilon)
\longrightarrow0
\]
and
\begin{align*}
&B_{\mathrm{gc}}^\varepsilon+\lek^\varepsilon
+H(\alpha^\varepsilon\mid Z_{1:T}^\varepsilon)
-H(\alpha^\varepsilon)\\
&\hspace{4em}
=
H(Z_{1:T}^\varepsilon\mid\alpha^\varepsilon)
\longrightarrow0.
\end{align*}
Thus both equalities can be approached arbitrarily closely under the
stated full-support requirement while preserving EOS absorption.
\end{proof}

\begin{remark}[Random prompts]
\label{rem:exploration-random-prompts}
Let \(X\) be a random prompt taking values in a standard Borel space,
and assume \(H(\alpha)<\infty\). All conditional quantities below are
computed from regular conditional distributions given \((X,Y_{<t})\),
which exist because \(V\) is finite and \(X\) is standard Borel. Define
\[
s_t
:=
\mathbb P(Y_t\in\scafS\mid X,Y_{<t}),
\]
and define \(H_t^{\scafS}\) and \(H_t^{\contC}\) from the corresponding
conditional within-block laws given \((X,Y_{<t})\), using the same
zero-mass conventions as above. Put
\[
B_{\mathrm{gc}}^{X}
:=
\sum_{t=1}^{T}
\mathbb E\!\left[
\Hb(s_t)+(1-s_t)H_t^{\contC}
\right],
\]
\[
B_{\mathrm{sc}}^{X}
:=
\sum_{t=1}^{T}
\mathbb E\!\left[s_tH_t^{\scafS}\right],
\]
and
\[
\lek_X
:=
\sum_{t=1}^{T}
I(Z_t;Y_{<t}\mid X,Z_{<t}).
\]
Then
\[
H(Z_{1:T}\mid X)
=
B_{\mathrm{gc}}^{X}+\lek_X,
\]
\[
\lek_X
=
B_{\mathrm{sc}}^{X}
-
H(Y_{1:T}\mid X,Z_{1:T}),
\]
and
\[
H(\alpha\mid X)
=
B_{\mathrm{gc}}^{X}+\lek_X
+H(\alpha\mid X,Z_{1:T})
-H(Z_{1:T}\mid X,\alpha).
\]
Consequently,
\[
H(\alpha\mid X)
\leq
B_{\mathrm{gc}}^{X}+\lek_X
+H(\alpha\mid X,Z_{1:T}).
\]
Moreover,
\[
\lek_X=0
\quad\Longleftrightarrow\quad
Z_t\mathrel{\perp\!\!\!\perp}Y_{<t}
\mid (X,Z_{<t})
\quad\text{for every }t.
\]
For unconditional answer entropy,
\begin{align*}
H(\alpha)
&=
B_{\mathrm{gc}}^{X}+\lek_X
+H(\alpha\mid X,Z_{1:T})\\
&\quad
-H(Z_{1:T}\mid X,\alpha)
+I(\alpha;X).
\end{align*}
Thus conditioning all quantities on \(X\) gives a statement about
\(H(\alpha\mid X)\). A bound for \(H(\alpha)\) additionally requires
the prompt-information term \(I(\alpha;X)\).
\end{remark}

\subsection{Definition~\ref{def:attr} is exact, and where it is not}
Writing $\Phi(s,x,y)=\Hb(s)+sx+(1-s)y$, the two telescopings
$\Delta H=[\Phi(s_1,\HS_1,\HC_0)-\Phi_0]+[\Phi_1-\Phi(s_1,\HS_1,\HC_0)]$
and the reverse order give content parts $(1-s_1)\Delta\HC$ and
$(1-s_0)\Delta\HC$; their average is
$(1-\bar s)\Delta\HC$ with $\bar s=\tfrac{s_0+s_1}{2}$, and the
scaffold part is defined as the exact remainder, so the two parts sum
to $\Delta H$ identically. This is the
two-player Shapley value of the coalition game that assigns to
$\{$content$\}$ the entropy move realized by updating $\HC$ alone.
\textbf{Scope.} Exactness is per matched context, and pointwise
attributions average exactly over any \emph{fixed} context set (as in
the SYN-2 constructions, where the population is literally reused, and
in the protocol's teacher-forced attribution set). Applying the formula
to separately averaged $(\bar s,\bar\HS,\bar\HC)$ is NOT exact, since
$\mathbb{E}\,\Phi\ne\Phi(\mathbb{E}\,\cdot)$; and on-policy evals at
different checkpoints sample different contexts, an occupancy shift no
per-position formula can attribute. The analysis therefore never
applies the formula to unaligned means and reports linear channel
aggregates for on-policy series.

\subsection{Proofs for \S\ref{sec:firstpass}: first passage and routing
information}

\begin{proof}[Proof of the recursion \eqref{eq:firstpass}]
Condition on the next token. If $Y_t\in\contC$ (probability $1-s_t$),
then $\tau_t=t$ and $Y_{\tau_t}\sim\pC_t$. If $Y_t=w\in\scafS$
(probability $s_t\pS_t(w)$), then $\tau_t=\tau_{t+1}$ for the extended
prefix $(h_t,w)$, whose eventual-content law is $q^{(w)}_{t+1}$ by
definition. Total probability gives
$q_t=(1-s_t)\pC_t+s_t\sum_w\pS_t(w)\,q^{(w)}_{t+1}$. (For the
recursion to be well posed, $\tau_t<\infty$ a.s.; with the EOS-padding
convention of Theorem~\ref{prop:exploration}, trajectories that
never emit content contribute an absorbing symbol treated as its own
content outcome, and the estimator reports the mass that hits
the truncation budget instead of hiding it.)
\end{proof}

\begin{proof}[Proof of the $\kappa_t$ identity]
Conditionally on $h_t$ and $Y_t\in\scafS$, the pair
$(Y_t,Y_{\tau_{t+1}})$ has joint law $\pS_t(w)\,q^{(w)}_{t+1}(v)$.
Mutual information of a mixture is the Jensen gap of entropy:
$I=H(\sum_w\pS_t(w)q^{(w)}_{t+1})-\sum_w\pS_t(w)H(q^{(w)}_{t+1})\ge0$,
with equality iff all mixture components with positive weight coincide,
which is exactly the statement that the eventual next content token is
independent of which scaffold token was chosen: one-step scaffold
inertness at $h_t$. If this holds at every prefix (and likewise for the
block flag), the conditional law of $Z_{t'}$ given $Y_{<t'}$ is
measurable in $Z_{<t'}$, so every conditional mutual information in
$\lek$ vanishes (Theorem~\ref{prop:exploration}); conversely a
prefix with $\kappa_t>0$ witnesses non-inertness. Estimation: the
estimator replaces $q^{(w)}_{t+1}$ by empirical first-passage
samples ($k$ rollouts per substituted $w$); plug-in entropy is biased
downward by $O((\text{support})/k)$, the bias partially cancels in the
Jensen gap, and problem-clustered bootstrap intervals are reported; we
therefore present measured $\kappa$ as an estimate with intervals, not
as an exact quantity.
\end{proof}

\begin{remark}[$\kappa_t$ witnesses $\lek$ but does not determine it]
\label{rem:kappa-not-lambda}
The implication in the preceding proof runs one way only, and the
converse is false. Take $\scafS=\{w_0,w_1\}$, $\contC=\{a,b,c\}$ and
$T=3$: let $Y_1$ be a fair coin on $\{w_0,w_1\}$, let $Y_2=a$ under both
branches, and let $Y_3=b$ after $w_0$ and $Y_3=c$ after $w_1$. The first
content token after step $1$ is $a$ with probability one under either
connective, so $\kappa_1=0$, and the timing-aware refinement is zero as
well because the waiting time is also identical. Yet
$I(Z_3;Y_{<3}\mid Z_{<3})=\log 2$, so $\lek=\log 2$. Routing that acts
only beyond the first content token is invisible to $\kappa_t$.
Consequently $\kappa_t>0$ certifies $\lek>0$, and a measured
$\hat\kappa$ bounds scaffold inertness away, but no numerical relation
between $\hat\kappa$ and $\lek$ is available from these results. An
estimator of $\lek$ would target its own summands
$I(Z_t;Y_{<t}\mid Z_{<t})$, or a routing statistic whose outcome is the
whole censored suffix; we do not attempt either here.
\end{remark}

\subsection{Proofs for \S\ref{sec:instruments}: the instrument calculus}

\begin{proof}[Proof of Theorem~\ref{thm:instruments}]
\emph{(i).} The midpoint $m=\tfrac{p+q}2$ has gate mass $\bar s$ and
within-block channels equal to the skew mixtures
$\lambda_S\pS+(1-\lambda_S)q^{\scafS}$ and
$\lambda_C\pC+(1-\lambda_C)q^{\contC}$ (collect coefficients blockwise:
the scaffold block of $m$ is $\tfrac{s\pS+t\,q^{\scafS}}2$, whose mass is
$\bar s$ and whose normalization is the $\lambda_S$-mixture). Apply the
three-channel KL chain rule (Proposition~\ref{thm:chain}(ii)) to
$\KL(p\|m)$ and $\KL(q\|m)$ and average: the binary parts assemble
$\JS_b(s,t)$; the scaffold parts give
$\tfrac{s}2\KL(\pS\|m^{\scafS})+\tfrac{t}2\KL(q^{\scafS}\|m^{\scafS})
=\bar s\,\JS_{\lambda_S}(\pS,q^{\scafS})$, and likewise for content.
With $|\scafS|=1$ the scaffold term vanishes and the singleton law is recovered.
\emph{(ii).} At $s=t$, a coordinate $v\in\scafS$ contributes
$t\,q^{\scafS}_v f\big(\tfrac{s\pS_v}{t\,q^{\scafS}_v}\big)
=s\,q^{\scafS}_vf(\pS_v/q^{\scafS}_v)$ to $D_f$; summing blockwise gives
the mixture law. R\'enyi follows from
$\sum_v p_v^{\alpha}q_v^{1-\alpha}
=s\sum_{\scafS}(\pS)^{\alpha}(q^{\scafS})^{1-\alpha}
+(1-s)\sum_{\contC}(\pC)^{\alpha}(q^{\contC})^{1-\alpha}$.
Bhattacharyya splits blockwise by the same computation; substituting
colatitudes $\arccos\sqrt s$, $\arccos\sqrt t$ and the two block angles
turns it into the displayed two-dihedral law.
\emph{(iii).} Under scaffold locking, (ii) gives
$D_f(p,q)=(1-s)D_f(\pC,q^{\contC})$ exactly ($\gamma=1$; Hellinger
$=\sqrt{1-s}\,\mathrm{Hel}(\pC,q^{\contC})$, $\gamma=\nicefrac12$);
cosine follows the companion's expansion with $\|\pS\|_2^2$ in the
prefactor ($\gamma=2$); the channel-resolved Aitchison distances do not
involve $s$ at all ($\gamma=0$), by Lemma~\ref{lem:pyth}. All claims were
verified numerically, including the $|\scafS|=1$ reductions.
\end{proof}

\begin{proof}[Proof of Corollary~\ref{cor:klpenalty}]
Immediate from Proposition~\ref{thm:chain}(ii): at matched gate and locked
scaffold the binary and scaffold terms vanish and the content term
carries weight $1-s$. The general display is the chain rule itself.
\end{proof}

\begin{proof}[In the proof of Theorem~\ref{thm:downstream}]
\emph{(i)}the only
property used is $\mathbf 1^{\top}\dvec=0$, unaffected by the sink being
a set. \emph{(ii).} Coordinatewise: on $\scafS$,
$\Delta s\,m^{\scafS}_i+\bar s\,\Delta^{\scafS}_i
=\tfrac{(s-t)(\pS_i+q^{\scafS}_i)+(s+t)(\pS_i-q^{\scafS}_i)}2
=s\pS_i-t q^{\scafS}_i=\dvec_i$; on $\contC$,
$-\Delta s\,m^{\contC}_i+(1-\bar s)\Delta^{\contC}_i
=(1-s)\pC_i-(1-t)q^{\contC}_i=\dvec_i$. Expanding
$\dvec^{\top}G\dvec$ bilinearly in the three directions gives six terms;
$q_G(\kappa)=\|E^{\top}\kappa\|^2=\|\bar v_{\scafS}-\bar v_{\contC}\|^2$
with the mixture-weighted block means. \emph{(iii)} is (ii) at $d=1$;
the between-block coefficient $\Delta s(\bar A_{\scafS}-\bar A_{\contC})$
is the advantage-gap structure of Proposition~\ref{prop:dynamics}.
\end{proof}

\subsection{Proposition~\ref{prop:unmatched-sink-regimes}: unmatched-sink regimes}

\begin{proof}
Fix \(n\) and suppress its subscript, writing
\[
(u,v,a,\lambda,\pi,\rho,p,q,R,B,C)
:=
(u_n,v_n,a_n,\lambda_n,\pi_n,\rho_n,p_n,q_n,R_n,B_n,C_n).
\]
Put
\[
\mu
:=
\mu_\lambda(\pi,\rho)
=
\lambda\pi+(1-\lambda)\rho.
\]
Since
\[
u=2a\lambda,
\qquad
v=2a(1-\lambda),
\]
the equal-weight mixture of \(p\) and \(q\) is
\[
r
:=
\frac{p+q}{2}
=
\left(
1-a,
\frac{u\pi+v\rho}{2}
\right)
=
(1-a,a\mu).
\]
Expanding the KL divergences coordinatewise gives
\begin{align}
\operatorname{KL}(p\|r)
={}&
(1-u)\log\frac{1-u}{1-a}
+
\sum_{i\in\mathcal I}
u\pi_i
\log\frac{u\pi_i}{a\mu_i}
\nonumber\\
={}&
(1-u)\log\frac{1-u}{1-a}
+
u\log\frac{u}{a}
+
u\operatorname{KL}(\pi\|\mu),
\label{eq:unmatched-kl-p}
\end{align}
and
\begin{align}
\operatorname{KL}(q\|r)
={}&
(1-v)\log\frac{1-v}{1-a}
+
\sum_{i\in\mathcal I}
v\rho_i
\log\frac{v\rho_i}{a\mu_i}
\nonumber\\
={}&
(1-v)\log\frac{1-v}{1-a}
+
v\log\frac{v}{a}
+
v\operatorname{KL}(\rho\|\mu).
\label{eq:unmatched-kl-q}
\end{align}
Taking one half of the sum of
\eqref{eq:unmatched-kl-p} and \eqref{eq:unmatched-kl-q} yields
\begin{align}
\operatorname{JS}(p,q)
={}&
R
+
\frac12
\left[
u\log\frac{u}{a}
+
v\log\frac{v}{a}
\right]
\nonumber\\
&+
\frac12
\left[
u\operatorname{KL}(\pi\|\mu)
+
v\operatorname{KL}(\rho\|\mu)
\right].
\label{eq:unmatched-js-three-terms}
\end{align}
Because
\[
\frac{u}{2}=a\lambda,
\qquad
\frac{v}{2}=a(1-\lambda),
\]
the content contribution is
\begin{align*}
\frac12
\left[
u\operatorname{KL}(\pi\|\mu)
+
v\operatorname{KL}(\rho\|\mu)
\right]
&=
a
\left[
\lambda\operatorname{KL}(\pi\|\mu)
+
(1-\lambda)\operatorname{KL}(\rho\|\mu)
\right]\\
&=
a\operatorname{JS}_\lambda(\pi,\rho).
\end{align*}
The remaining nonsink-mass contribution is
\begin{align*}
\frac12
\left[
u\log\frac{u}{a}
+
v\log\frac{v}{a}
\right]
&=
a
\left[
\lambda\log(2\lambda)
+
(1-\lambda)\log(2(1-\lambda))
\right]\\
&=
a\bigl[\log 2-h_b(\lambda)\bigr].
\end{align*}
Substitution into \eqref{eq:unmatched-js-three-terms} proves
\eqref{eq:unmatched-js-exact}.
Applying the same calculation to
\[
(1-u,u)
\quad\text{and}\quad
(1-v,v)
\]
gives
\[
B
=
a\bigl[\log 2-h_b(\lambda)\bigr]+R.
\]
Therefore,
\[
\operatorname{JS}(p,q)
=
B+a\operatorname{JS}_\lambda(\pi,\rho)
=
B+C,
\]
which proves \eqref{eq:unmatched-js-channel-decomposition}.
We next bound \(R\). Define
\[
F(x):=(1-x)\log(1-x),
\qquad
d:=\frac{u-v}{2}=a(2\lambda-1).
\]
Since
\[
u=a+d,
\qquad
v=a-d,
\]
equation \eqref{eq:unmatched-js-remainder} becomes
\[
R
=
\frac{F(a+d)+F(a-d)}{2}
-
F(a).
\]
On \([0,1)\),
\[
F''(x)=\frac{1}{1-x}>0.
\]
Hence \(F\) is convex and Jensen's inequality gives \(R\geq0\).
If \(d=0\), then \(R=0\). Suppose \(d\neq0\). Taylor's theorem,
applied separately to \(F(a+d)\) and \(F(a-d)\), gives points
\(\xi_+\) between \(a\) and \(a+d\), and \(\xi_-\) between \(a\)
and \(a-d\), such that
\[
F(a+d)
=
F(a)+dF'(a)+\frac{d^2}{2}F''(\xi_+),
\]
and
\[
F(a-d)
=
F(a)-dF'(a)+\frac{d^2}{2}F''(\xi_-).
\]
Consequently,
\[
R
=
\frac{d^2}{4}
\left[
F''(\xi_+)+F''(\xi_-)
\right].
\]
The interval with endpoints \(a-d=v\) and \(a+d=u\) is contained in
\([0,2a]\). Thus, whenever \(a<1/2\),
\begin{align*}
R
&\leq
\frac{d^2}{2}
\sup_{0\leq x\leq2a}F''(x)\\
&=
\frac{d^2}{2(1-2a)}
=
\frac{a^2(2\lambda-1)^2}{2(1-2a)}.
\end{align*}
This proves \eqref{eq:unmatched-js-remainder-bound}.
Since \(a_n\to0\), eventually \(a_n\leq1/4\), and then
\[
0\leq R_n\leq a_n^2.
\]
Thus \(R_n=O(a_n^2)\) with an absolute constant, proving
\eqref{eq:unmatched-js-first-order}.
We next establish a uniform bound for the skew divergence.
Coordinatewise,
\[
\mu\geq\lambda\pi,
\qquad
\mu\geq(1-\lambda)\rho.
\]
Therefore,
\[
\operatorname{KL}(\pi\|\mu)
\leq-\log\lambda,
\qquad
\operatorname{KL}(\rho\|\mu)
\leq-\log(1-\lambda),
\]
and hence
\begin{equation}
0
\leq
\operatorname{JS}_\lambda(\pi,\rho)
\leq
-\lambda\log\lambda
-(1-\lambda)\log(1-\lambda)
=
h_b(\lambda).
\label{eq:unmatched-skew-js-bound}
\end{equation}
This bound is independent of the content pair.
Since \(h_b(\lambda)\leq\log 2\) (giving the first display for every
\(n\)) and \(R_n\leq a_n^{2}/(2(1-2a_n))\leq a_n^{2}\) whenever
\(a_n\leq1/4\), hence for all sufficiently large \(n\),
\[
0\leq C_n\leq a_n\log 2,
\]
and
\[
0
\leq
B_n
\leq
a_n\log 2+a_n^2.
\]
This proves the uniform \(O(a_n)\) bounds.
Because both weights in skew JS are strictly positive and KL is
definite,
\[
\operatorname{JS}_\lambda(\pi,\rho)=0
\quad\Longleftrightarrow\quad
\pi=\rho.
\]
Thus, for every \(n\),
\[
C_n=0
\quad\Longleftrightarrow\quad
\pi_n=\rho_n.
\]
Similarly, definiteness of ordinary JS gives
\[
B_n=0
\quad\Longleftrightarrow\quad
u_n=v_n
\quad\Longleftrightarrow\quad
\lambda_n=\frac12.
\]
Suppose now that
\[
\min\{\lambda_n,1-\lambda_n\}\longrightarrow0.
\]
Then \(h_b(\lambda_n)\to0\), and
\[
0
\leq
\frac{C_n}{a_n}
\leq
h_b(\lambda_n)
\longrightarrow0
\]
uniformly over all content sequences. Hence \(C_n=o(a_n)\).
Moreover, whenever \(a_n<1/2\), hence eventually,
\[
\left|
\frac{B_n}{a_n}-\log 2
\right|
\leq
h_b(\lambda_n)
+
\frac{a_n}{2(1-2a_n)}
\longrightarrow0.
\]
Thus
\[
B_n=a_n\log 2+o(a_n).
\]
Combining these conclusions with
\(\operatorname{JS}(p_n,q_n)=B_n+C_n\) proves
\eqref{eq:unmatched-js-rate-mismatch}.
Under the separate comparable-rate condition
\[
\varepsilon
\leq
\lambda_n
\leq
1-\varepsilon
\]
eventually, for some \(\varepsilon\in(0,1/2)\), we have
\[
0
\leq
\frac{C_n}{a_n}
=
\operatorname{JS}_{\lambda_n}(\pi_n,\rho_n)
\leq
\log 2.
\]
Therefore,
\[
C_n=\Theta(a_n)
\quad\Longleftrightarrow\quad
\liminf_{n\to\infty}
\operatorname{JS}_{\lambda_n}(\pi_n,\rho_n)>0.
\]
If \(\pi_n\equiv\pi\), \(\rho_n\equiv\rho\), and
\(\pi\neq\rho\), then each coordinate of
\(\mu_\lambda(\pi,\rho)\) is strictly positive (by the open-simplex
convention \(\mathcal S^{D-1}\subset(0,1)^{D-1}\)) and continuous in
\(\lambda\). Hence
\[
\lambda\longmapsto
\operatorname{JS}_\lambda(\pi,\rho)
\]
is continuous on \((0,1)\). It is strictly positive there because
skew JS is definite for every \(\lambda\in(0,1)\). It therefore has
a strictly positive minimum on
\([\varepsilon,1-\varepsilon]\), proving
\(C_n=\Theta(a_n)\).
For the sink channel, define
\[
g(\lambda):=\log 2-h_b(\lambda).
\]
The function \(g\) is continuous, nonnegative, and has the unique
zero \(\lambda=1/2\). Moreover, whenever \(a_n<1/2\), hence
eventually,
\[
\frac{B_n}{a_n}
=
g(\lambda_n)
+
\frac{R_n}{a_n},
\qquad
0
\leq
\frac{R_n}{a_n}
\leq
\frac{a_n}{2(1-2a_n)}
\longrightarrow0.
\]
If
\[
\liminf_{n\to\infty}
\left|
\lambda_n-\frac12
\right|>0,
\]
then continuity and compactness give
\[
\liminf_{n\to\infty}g(\lambda_n)>0,
\]
and hence \(B_n=\Theta(a_n)\).
Conversely, if
\[
\liminf_{n\to\infty}
\left|
\lambda_n-\frac12
\right|=0,
\]
there exists a subsequence \((n_k)\) such that
\[
\lambda_{n_k}\longrightarrow\frac12.
\]
Along that subsequence,
\[
\frac{B_{n_k}}{a_{n_k}}
=
g(\lambda_{n_k})
+
\frac{R_{n_k}}{a_{n_k}}
\longrightarrow0.
\]
Therefore \(B_n\neq\Theta(a_n)\), which proves
\eqref{eq:unmatched-sink-theta}. Combining
\eqref{eq:unmatched-sink-theta} with
\eqref{eq:unmatched-content-theta}, whose right-hand side holds for
fixed \(\pi\neq\rho\) as just shown, yields the final two-channel
equivalence. This completes the proof.
\end{proof}

\subsection{Theorem~\ref{thm:no-free-lunch-corrected}: no free lunch
  under compositional axioms}
\label{app:nflnotation}

\paragraph{Companion notation.} For $p,q$ in the open simplex
$\mathcal S^m$: perturbation $(c\oplus p)=\clo(c_ip_i)_i$; powering
$(\alpha\odot p)=\clo(p_i^\alpha)_i$;
$\clr p=\log p-\overline{\log p}\,\mathbf 1$;
$d_{A,m}(p,q)=\|\clr p-\clr q\|_2$; the CLR hyperplane is
$\mathcal H_m=\{x\in\mathbb R^m:\mathbf1^\top x=0\}$; $g$ is the
geometric mean. With the sink at coordinate $0$, the content
subcomposition is $p^{(-0)}=\clo\bigl((p_i)_{i\geq1}\bigr)$ and the
sink balance is
$b_0(p)=\sqrt{\tfrac{D-1}{D}}\,\log\bigl(p_0/g(p_1,\dots,p_{D-1})\bigr)$, with the geometric mean taken on the \emph{raw} non-sink subvector (normalizing it first shifts $b_0$ by $\log(1-p_0)$ and breaks the identity $b_0(p_s^\pi)=\sqrt{(D-1)/D}\,(\operatorname{logit} s-\log g(\pi))$ used in the proof),
with $b_j$ and $p^{(-j)}$ defined analogously at coordinate $j$. A
dissimilarity $\delta$ is \emph{sink-separable at coordinate $j$} if
$\delta(p,q)^2=\varphi_j\bigl(b_j(p),b_j(q)\bigr)+\Psi_j\bigl(p^{(-j)},q^{(-j)}\bigr)$
for finite real functions $\varphi_j,\Psi_j$, and \emph{Sep-A} if
this holds at every coordinate. The axioms are (S1)
$\oplus$-invariance $\delta(c\oplus p,c\oplus q)=\delta(p,q)$; (S2)
$\odot$-homogeneity $\delta(\alpha\odot p,\alpha\odot
q)=|\alpha|\,\delta(p,q)$; (S3) symmetry and definiteness; (S4)
Sep-A. 

\begin{proof}
\textit{Part \textnormal{(i)}.}
Because \(\delta\not\equiv0\), there are
\(p^\star,q^\star\in\mathcal S^D\) and \(\varepsilon>0\) such that
\(\delta(p^\star,q^\star)=\varepsilon\).  For \(r>0\), let
\[
c_r=\mathcal C(r,1,\ldots,1).
\]
For every \(p\in\mathcal S^D\), closure cancels the normalizing
constant in \(c_r\), and therefore
\[
c_r\oplus p
=
\frac{(rp_0,p_1,\ldots,p_{D-1})}
{rp_0+\sum_{i=1}^{D-1}p_i}
\longrightarrow e_0
\qquad(r\to\infty).
\]
Perturbation invariance gives
\[
\delta(c_r\oplus p^\star,c_r\oplus q^\star)=\varepsilon
\qquad(r>0).
\]
If \(\bar\delta\) were a continuous extension, then for every
\(x\in\overline{\mathcal S^D}\) an interior sequence \(z_m\to x\)
would give
\(\bar\delta(x,x)=\lim_m\delta(z_m,z_m)=0\).  In particular,
\(\bar\delta(e_0,e_0)=0\).  Consequently, the
left-hand side would converge to
\(\bar\delta(e_0,e_0)=0\), which is impossible.
For the second assertion, suppose that a jointly continuous
extension exists.  For every \(\pi\in\mathcal S^{D-1}\),
\[
\|p_s^\pi-e_0\|_2^2
=(1-s)^2+(1-s)^2\|\pi\|_2^2
\le 2(1-s)^2.
\]
Thus \((p_s^\pi,p_s^\rho)\to(e_0,e_0)\) uniformly over
\((\pi,\rho)\), and continuity of \(\bar\delta\) at
\((e_0,e_0)\) proves the uniform limit.
No universal polynomial rate follows from continuity alone.  Define
\(h:[0,\infty)\to[0,\infty)\) by \(h(0)=0\) and
\[
h(u)=
\begin{cases}
\displaystyle\frac1{\log(e/u)},&0<u\le e^{-1},\\[1.2ex]
\displaystyle\frac12+\frac e4\bigl(u-e^{-1}\bigr),&u>e^{-1}.
\end{cases}
\]
The two branches have the same value and derivative at \(e^{-1}\).
On \((0,e^{-1}]\),
\[
h''(u)=
\frac{2-\log(e/u)}{u^2\log(e/u)^3}\le0.
\]
Thus \(h\) is increasing and concave on \([0,\infty)\), with
\(h(0)=0\).  If \(a,b>0\), concavity with
\(\theta=a/(a+b)\) gives
\(h(a)\ge\theta h(a+b)\) and
\(h(b)\ge(1-\theta)h(a+b)\); hence \(h\) is subadditive.
Therefore
\[
\delta_0(p,q)=h(\|p-q\|_2)
\]
is a metric and extends continuously to the closed simplex.  Since
\(\delta_0(p_s^\pi,p_s^\rho)=h\bigl((1-s)\|\pi-\rho\|_2\bigr)\) and
\(h(u)/u^\gamma=1/\bigl(u^{\gamma}\log(e/u)\bigr)\to\infty\) as
\(u\downarrow0\) for every \(\gamma>0\), for
every \(\gamma>0\) and every \(\pi\ne\rho\),
\[
\frac{\delta_0(p_s^\pi,p_s^\rho)}{(1-s)^\gamma}
\longrightarrow\infty
\qquad(s\uparrow1).
\]
\medskip
\textit{Part \textnormal{(ii)(a)}.}
For \(p_s^\pi=(s,(1-s)\pi)\),
\[
b_0(p_s^\pi)
=
\sqrt{\frac{D-1}{D}}
\left(\operatorname{logit}(s)-\log g(\pi)\right).
\]
Moreover, \((p_s^\pi)^{(-0)}=\pi\).
Since \(g(\sigma\pi)=g(\pi)\), the ordered balance pair for
\((p_s^\pi,p_s^{\sigma\pi})\) is the same as that for
\((p_s^\pi,p_s^\pi)\).  Subtracting the two separability
identities yields
\begin{align*}
&\delta(p_s^\pi,p_s^{\sigma\pi})^2
-\delta(p_s^\pi,p_s^\pi)^2\\
&\hspace{2cm}
=
\Psi_0(\pi,\sigma\pi)-\Psi_0(\pi,\pi).
\end{align*}
Because \(\delta(p,p)=0\),
\[
\delta(p_s^\pi,p_s^{\sigma\pi})^2
=
\Psi_0(\pi,\sigma\pi)-\Psi_0(\pi,\pi),
\]
which is independent of \(s\).  Since \(\delta\ge0\), the
unsquared curve is constant as well.  The positive-then-vanishing
obstruction follows immediately.
Perturbation invariance gives a stronger conclusion without
separability.  For \(s,t\in(0,1)\), put
\[
r=\frac{s(1-t)}{t(1-s)},
\qquad
c=\mathcal C(r,1,\ldots,1).
\]
A direct calculation gives
\[
c\oplus p_t^\pi=p_s^\pi,
\qquad
c\oplus p_t^\rho=p_s^\rho,
\]
and hence
\[
\delta(p_s^\pi,p_s^\rho)
=
\delta(p_t^\pi,p_t^\rho).
\]
For an \(f\)-divergence,
\[
D_f(p_s^\pi\Vert p_s^\rho)
=(1-s)D_f(\pi\Vert\rho).
\]
To verify the required positive permutation witness for every
finite non-affine convex generator, subtract a supporting affine
function \(a(t-1)\) at \(1\).  This does not change the divergence,
because
\[
\sum_i q_i a\!\left(\frac{p_i}{q_i}-1\right)=0,
\]
and gives the convex function
\[
\widetilde f(t)=f(t)-a(t-1)\ge0,
\qquad
\widetilde f(1)=0.
\]
Since \(f\) is non-affine, \(\widetilde f(r)>0\) for some
\(r\ne1\).  Choose a strictly positive \(\pi\) with two
coordinates \(i,j\) satisfying \(\pi_i/\pi_j=r\), and let
\(\sigma\) interchange those coordinates.  All unchanged coordinates
contribute \(\widetilde f(1)=0\), while
\[
D_f(\pi\Vert\sigma\pi)
=D_{\widetilde f}(\pi\Vert\sigma\pi)
=\pi_j\widetilde f(r)+\pi_i\widetilde f(r^{-1})>0.
\]
The matched-sink divergence converges to zero.  For every
\(\alpha>0\), \((D_f)^\alpha\) has the same
positive-then-vanishing property.
For Euclidean distance,
\[
\|p_s^\pi-p_s^{\sigma\pi}\|_2
=(1-s)\|\pi-\sigma\pi\|_2.
\]
If \(\sigma\pi\ne\pi\), Fisher--Rao distance and cosine
dissimilarity are also positive for every \(s<1\) and converge to
zero as \(s\uparrow1\): the former follows from
\[
\operatorname{BC}(p_s^\pi,p_s^{\sigma\pi})
=
s+(1-s)\operatorname{BC}(\pi,\sigma\pi)\uparrow1,
\]
with \(\operatorname{BC}(\pi,\sigma\pi)<1\), and the latter from
the exact formula
\[
1-\frac{s^2+(1-s)^2\langle\pi,\sigma\pi\rangle}
{\sqrt{s^2+(1-s)^2\|\pi\|_2^2}\,
 \sqrt{s^2+(1-s)^2\|\sigma\pi\|_2^2}},
\]
which is positive for \(s<1\) and tends to zero as
\(s\uparrow1\).  This completes part~\textnormal{(ii)(a)}.
\medskip
\textit{Part \textnormal{(ii)(b)}: exact split expansion.}
Fix \(k\ge2\), let \(E=E_{D,k}^{(j)}\), and put
\(n=D+k-1\).  Set
\[
x=\operatorname{clr}_D(p)-\operatorname{clr}_D(q)
\in\mathcal H_D,
\qquad
\mathbf1_D^\top x=0.
\]
The raw log-ratio vector \(\log p-\log q\) differs from \(x\) by
a constant vector.  After splitting, that constant is repeated in
every output coordinate and is again removed by centering.
Consequently, the split Aitchison square may be computed by taking
one copy of \(x_i\) for \(i\ne j\) and \(k\) copies of \(x_j\):
\begin{align*}
d_{A,n}(Ep,Eq)^2
&=
\sum_{i\ne j}x_i^2+kx_j^2
-
\frac{\left(\sum_{i\ne j}x_i+kx_j\right)^2}{n}\\
&=
\|x\|_2^2+
\frac{D(k-1)}{n}x_j^2,
\end{align*}
where the last equality uses
\(\sum_{i\ne j}x_i=-x_j\).
Let
\[
w_j=e_j-\frac1D\mathbf1_D.
\]
Since \(x\in\mathcal H_D\),
\[
x_j=\langle x,w_j\rangle,
\qquad
\|w_j\|_2^2=\frac{D-1}{D}.
\]
Cauchy--Schwarz therefore gives
\[
x_j^2\le\frac{D-1}{D}\|x\|_2^2,
\]
with equality if and only if \(x\) is proportional to \(w_j\).
It follows that
\[
1
\le
\frac{d_{A,n}(Ep,Eq)^2}{d_{A,D}(p,q)^2}
\le
1+\frac{D(k-1)}n\frac{D-1}{D}
=
\frac{kD}{D+k-1}.
\]
Every \(x\in\mathcal H_D\) is realizable by taking
\[
q=D^{-1}\mathbf1_D,
\qquad
p=\mathcal C(e^{x_0},\ldots,e^{x_{D-1}}).
\]
Thus the upper bound is attained by \(x=w_j\).  Since \(D\ge3\),
the lower bound is attained by \(x=e_a-e_b\) with
\(a,b\ne j\).  Since the denominator is positive definite on
\(\mathcal H_D\), the maximal squared Rayleigh quotient is the top
generalized eigenvalue of the two forms restricted to
\(\mathcal H_D\).  Taking square roots proves the asserted supremum.
\textit{The scale loophole.}
For each \(m\ge3\) there is \(\kappa_m>0\) such
that \(\delta_m=\kappa_m d_{A,m}\).  Let
\(M:\mathcal S^n\to\mathcal S^D\) merge the \(k\) split cells, so
that \((My)_j\) is their sum and every unsplit coordinate is left
unchanged.  Both \(E\) and \(M\) are admissible Markov maps and
\(ME=I\).  If the
family satisfied DPI, then for every \(p,q\),
\[
\delta_D(p,q)
=
\delta_D(MEp,MEq)
\le
\delta_n(Ep,Eq)
\le
\delta_D(p,q).
\]
Hence equality would hold throughout.  Choosing
\(x=e_a-e_b\), \(a,b\ne j\), for which the Aitchison ratio is
one, forces \(\kappa_n=\kappa_D\).  Choosing \(x=w_j\) would then
force \(\sqrt{kD/n}=1\), a contradiction.
\textit{A fixed-dimensional positive counterexample.}
Let
\[
p=\frac1{25}(16,1,8),
\qquad
q=\frac1{25}(8,1,16),
\]
and
\[
T=
\begin{pmatrix}
\frac{299}{600}&\frac1{300}&\frac1{300}\\
\frac{299}{600}&\frac1{300}&\frac1{300}\\
\frac1{300}&\frac{149}{150}&\frac{149}{150}
\end{pmatrix}.
\]
Every entry of \(T\) is positive and each column sums to one.  A
direct multiplication gives
\[
Tp=\frac1{7500}(2401,2401,2698),
\qquad
Tq=\frac1{7500}(1213,1213,5074).
\]
The input log-ratio vector is
\((\log2,0,-\log2)\), which is centered, so
\[
d_A(p,q)^2=2(\log2)^2.
\]
For a three-vector of the form \((a,a,b)\), its centered squared
norm is \(\frac23(a-b)^2\).  Hence
\[
d_A(Tp,Tq)^2
=
\frac23
\left[
\log\left(\frac{6091337}{1636337}\right)
\right]^2.
\]
Moreover,
\[
8\cdot6091337=48730696
>
44181099=27\cdot1636337,
\]
so \(6091337/1636337>27/8\).  Also
\[
\left(\frac{27}{8}\right)^4
=
\frac{531441}{4096}
>
\frac{524288}{4096}
=2^7,
\qquad
\frac74>\sqrt3.
\]
Therefore
\[
\frac{6091337}{1636337}
>
\frac{27}{8}
>
2^{7/4}
>
2^{\sqrt3},
\]
and thus
\[
d_A(Tp,Tq)^2
>
\frac23(\sqrt3\log2)^2
=
2(\log2)^2
=d_A(p,q)^2.
\]
This proves fixed-dimensional expansion without relying on rounded
numerical values.
\textit{The Markov-monotone side.}
For any \(f\)-divergence and any equal split,
\begin{align*}
D_f(Ep\Vert Eq)
&=
\sum_{i\ne j}q_i f(p_i/q_i)
+k\frac{q_j}{k}
f\!\left(\frac{p_j/k}{q_j/k}\right)\\
&=D_f(p\Vert q).
\end{align*}
If \(T\) is column-stochastic with no zero row, define
\[
\lambda_{ai}=\frac{T_{ai}q_i}{(Tq)_a}.
\]
For each \(a\), these weights are nonnegative and sum to one, and
\((Tp)_a/(Tq)_a=\sum_i\lambda_{ai}(p_i/q_i)\).  Jensen's
inequality, followed by summation over \(a\), gives
\begin{align*}
D_f(Tp\Vert Tq)
&\le
\sum_{a,i}T_{ai}q_i f(p_i/q_i)\\
&=D_f(p\Vert q).
\end{align*}
Writing
\(\operatorname{BC}(p,q)=\sum_i\sqrt{p_iq_i}\), an equal split
also gives
\[
\operatorname{BC}(Ep,Eq)=\operatorname{BC}(p,q),
\]
and hence exact invariance of
\(\operatorname{FR}(p,q)=2\arccos\operatorname{BC}(p,q)\).
For a general \(T\), Cauchy--Schwarz gives, row by row,
\[
\sqrt{(Tp)_a(Tq)_a}
\ge
\sum_iT_{ai}\sqrt{p_iq_i}.
\]
Summing over \(a\) yields
\[
\operatorname{BC}(Tp,Tq)
\ge
\operatorname{BC}(p,q).
\]
Since \(2\arccos(\cdot)\) is decreasing on \([0,1]\),
\[
\operatorname{FR}(Tp,Tq)
\le
\operatorname{FR}(p,q).
\]
Part~\textnormal{(ii)(a)} shows that these nondegenerate attenuating
families are not Sep-A.  No assertion that every Markov-monotone
family is matched-sink noninvariant is made or needed.  Generic
\(f\)-divergences need not be symmetric; they are used here as
divergence functionals and are not claimed to satisfy
\textnormal{(S3)}.
\medskip
\textit{Part \textnormal{(iii)}.}
Let \(N(z)=\operatorname{osc}(z)\).  Then
\[
N(-z)=N(z),
\qquad
N(\alpha z)=|\alpha|N(z),
\qquad
N(z+w)\le N(z)+N(w).
\]
Moreover, \(N(\log p-\log q)=0\) implies that \(p_i/q_i\) is
constant in \(i\); since \(p\) and \(q\) both sum to one, this
implies \(p=q\).  Finally,
\begin{align*}
d_H(p,r)
&=N\!\left((\log p-\log q)+(\log q-\log r)\right)\\
&\le d_H(p,q)+d_H(q,r).
\end{align*}
Thus \(d_H\) is a metric on the open simplex.
For perturbation, if \(Z_p(c)=\sum_i c_ip_i\), then
\[
\log\frac{(c\oplus p)_i}{(c\oplus q)_i}
=
\log\frac{p_i}{q_i}
+
\log\frac{Z_q(c)}{Z_p(c)}.
\]
The last term is independent of \(i\), so oscillation is unchanged.
Similarly, if \(S_p(\alpha)=\sum_i p_i^\alpha\), then
\[
\log\frac{(\alpha\odot p)_i}{(\alpha\odot q)_i}
=
\alpha\log\frac{p_i}{q_i}
+
\log\frac{S_q(\alpha)}{S_p(\alpha)},
\]
which proves
\[
d_{H,D}(\alpha\odot p,\alpha\odot q)
=|\alpha|d_{H,D}(p,q).
\]
The case \(\alpha=0\) is included because both powered
compositions are uniform.  Permutation invariance follows because
a permutation merely reorders the log ratios.
Let \(M\ge2\), and let \(E\in[0,\infty)^{M\times D}\) be a congruent Markov
embedding: its columns are stochastic, their supports are pairwise
disjoint, and it has no zero row.  For every output coordinate
\(a\), there is a unique input coordinate \(\iota(a)\) with
\(E_{a,\iota(a)}>0\), and
\[
\frac{(Ep)_a}{(Eq)_a}
=
\frac{p_{\iota(a)}}{q_{\iota(a)}}.
\]
Every input ratio occurs at least once, so the maximum and minimum
ratios are unchanged.  Therefore
\[
d_{H,M}(Ep,Eq)=d_{H,D}(p,q).
\]
Now let \(T\in[0,\infty)^{M\times D}\) be admissible.  Put
\(r_i=p_i/q_i\) and
\[
\lambda_{ai}=\frac{T_{ai}q_i}{(Tq)_a}.
\]
Then
\[
\frac{(Tp)_a}{(Tq)_a}=\sum_i\lambda_{ai}r_i
\]
is a convex combination of \(r_1,\ldots,r_D\).  Hence
\[
\min_i r_i
\le
\frac{(Tp)_a}{(Tq)_a}
\le
\max_i r_i,
\]
which proves \(d_{H,M}(Tp,Tq)\le d_{H,D}(p,q)\).
If \(T\) is entrywise positive, its projective diameter is
\[
\Delta(T)
=
\log
\max_{a,b,i,j}
\frac{T_{ai}T_{bj}}{T_{aj}T_{bi}}
<\infty.
\]
The Birkhoff--Hopf contraction theorem gives
\[
d_{H,M}(Tp,Tq)
\le
\tanh\!\left(\frac{\Delta(T)}4\right)d_{H,D}(p,q).
\]
Thus the coefficient
\(\tau(T)=\tanh(\Delta(T)/4)\) is strictly smaller than one for
each fixed positive \(T\).  It cannot be chosen uniformly over all
positive stochastic matrices: for
\[
T_\varepsilon
=(1-\varepsilon)I
+\varepsilon D^{-1}\mathbf1_D\mathbf1_D^\top,
\qquad 0<\varepsilon<1,
\qquad \varepsilon\downarrow0,
\]
the maps approach the identity.  Hence, for every fixed \(p\ne q\),
\[
\frac{d_{H,D}(T_\varepsilon p,T_\varepsilon q)}{d_{H,D}(p,q)}
\longrightarrow1,
\]
which rules out a common coefficient strictly below one.
To disprove sink separability, fix
\(q=D^{-1}\mathbf1_D\) and define
\[
p_{A,w}
=
\mathcal C\!\left(
e^A,e^w,e^{-w},\underbrace{1,\ldots,1}_{D-3}
\right),
\]
where the final block is empty when \(D=3\).
Then
\[
b_0(p_{A,w})-b_0(q)
=
\sqrt{\frac{D-1}{D}}A,
\]
while
\[
p_{A,w}^{(-0)}
=
\mathcal C(e^w,e^{-w},1,\ldots,1)
\]
is independent of \(A\).  If \(d_H\) were sink-separable at
coordinate \(0\), the function
\[
F(A,w)=d_H(p_{A,w},q)^2
\]
would have the form \(f(A)+g(w)\), and hence would have zero mixed
rectangle difference.  Normalization adds only a common constant
to the log-ratio vector, so
\[
F(A,w)
=
\operatorname{osc}(A,w,-w,0,\ldots,0)^2.
\]
At \(A,w\in\{0,1\}\),
\[
F(1,1)=4,
\quad F(0,1)=4,
\quad F(1,0)=1,
\quad F(0,0)=0.
\]
Thus
\[
F(1,1)-F(0,1)-F(1,0)+F(0,0)=-1\ne0,
\]
contradicting sink separability.
Finally, in CLR coordinates \(d_H\) is generated on
\(\mathcal H_D\) by \(N(x)=\operatorname{osc}(x)\).  The displayed
homogeneity and triangle inequality hold on \(\mathcal H_D\), and
if \(N(x)=0\), then \(x=c\mathbf1_D\); since
\(x\in\mathcal H_D\), this forces \(c=0\).  Thus \(N\) is a norm.
Let
\[
u=(1,-1,0,\ldots,0),
\qquad
v=(1,0,-1,0,\ldots,0).
\]
Then
\[
N(u)=N(v)=2,
\qquad
N(u+v)=3,
\qquad
N(u-v)=2.
\]
Therefore
\[
N(u+v)^2+N(u-v)^2
=13
\ne16
=2N(u)^2+2N(v)^2.
\]
The parallelogram law fails, so the Jordan--von Neumann theorem
implies that \(N\) is not induced by an inner product.
\end{proof}

\begin{remark}[Scope of the Kohlberg--Pratt comparison]
\label{rem:kohlberg-pratt-scope}
Kohlberg and Pratt prove, under the full hypotheses of their
characterization of projective metrics for which every strictly
positive linear map acts as a contraction, that the resulting metric
in a fixed dimension \(D\) has the form
\[
\mathfrak d_D=f_D\circ d_{H,D},
\]
where \(f_D:[0,\infty)\to[0,\infty)\) is continuous and strictly
increasing, with \(f_D(0)=0\).  This conclusion uses all the
hypotheses of their characterization; contraction alone does not
imply that \(\mathfrak d_D\) is a scalar multiple of \(d_{H,D}\).
If \(\mathfrak d_D\) is also powering-homogeneous, then
\[
f_D(\alpha t)=\alpha f_D(t)
\qquad(\alpha,t\ge0).
\]
Indeed, every \(t\ge0\) is realized as a Hilbert distance, since
\[
d_{H,D}\!\left(
\mathcal C(e^t,1,\ldots,1),D^{-1}\mathbf1_D
\right)=t.
\]
Setting \(t=1\) and writing \(u=\alpha\) gives
\[
f_D(u)=c_Du,
\qquad
u\ge0,
\qquad
c_D=f_D(1)>0.
\]
A dimension-independent constant requires an additional
cross-dimensional invariance assumption.
\end{remark}

\section{The scaffold set}
\label{app:scaffold}

\subsection{Scaffold-set definitions}
Every word in the exact lists is matched over tokenizer variants
(leading space/newline, capitalization, fused trailing comma/period),
and only single-token matches are kept, so $\scafS$ is a set of vocabulary
ids specific to each tokenizer, reported and hashable per run.

\textbf{\textsc{fixed} (default), 71 connectives $+$ 11 format
strings.} Reflection and uncertainty management (\emph{wait, hmm,
okay, ok, alright, well, oh, ah, actually, alternatively, instead,
anyway, right}); sequencing (\emph{first, firstly, second, secondly,
third, next, then, now, finally, lastly, meanwhile, overall});
inference (\emph{so, therefore, thus, hence, because, since,
consequently, accordingly}); contrast/addition (\emph{but, however,
although, though, yet, still, also, moreover, furthermore,
additionally, besides, similarly, likewise, nevertheless, nonetheless,
whereas}); self-dialogue markers frequent in R1-style traces
(\emph{let, lets, note, notice, recall, remember, check, verify,
confirm, yes, no, maybe, perhaps, indeed, sure, great, good, hold,
looking, considering, thinking}). Format strings: think delimiters,
chat-role delimiters, \texttt{\#\#\#\#}/\texttt{\#\#\#}/\texttt{\#\#},
\emph{step, answer, solution, final}; plus all pure
whitespace/punctuation tokens and tokenizer special tokens.
\textbf{Mathematical operators ($=,+,-,\times,/,\hat{\ },\%$) are
deliberately excluded}: in reasoning traces they carry computation, and
classifying them as scaffold would bake the conclusion of
\citet{candussio2026demystifying} into the definition instead of testing
it.

\textbf{\textsc{pos}.} \textsc{fixed} plus an 81-word closed-class list
(articles, prepositions, auxiliaries, pronouns, conjunctions), a
deterministic tagger-free proxy for a part-of-speech definition.

\textbf{\textsc{data}.} The most frequent non-numeric, non-lexical
token types of a corpus of the \emph{model's own} CoTs (frequency-ranked;
numerals and operator strings excluded by regex; cap $M=64$ ids), a
model-relative definition in the spirit of perplexity-based importance
\citep{pan2024llmlingua2}.

\subsection{Sensitivity protocol}
Every Stage-1 aggregate is recomputed under \textsc{fixed},
\textsc{pos}, \textsc{data}, and under \textsc{fixed} with $25\%$ of
the connective list dropped at random (3 draws), reporting (a) the
Jaccard overlap of the resulting id sets, (b) the stability of each
principal cell of Table~\ref{tab:frozen}, and (c) the realized scaffold
token share, which audits whether $\scafS$ is a minority class as
intended. Cells: the \textsc{fixed}/\textsc{pos}/\textsc{data}
comparison is measured in App.~\ref{app:additional}
(scaffold-definition sensitivity). The random list-drop ablation uses
a drop fraction of $0.25$ and seeds $\{0,1,2\}$, and the tier split of
Remark~\ref{rem:partition} uses \textsc{format}-only and \textsc{core}-only
scaffold sets, yielding the three-block formatting/strategic/content
refinement. Exploration analyses (Prop.~\ref{prop:exploration}) use
$\scafS$ with answer-bearing words (\emph{yes, no, answer, final,
solution}) excluded, per the answer-sufficiency residual. The identities themselves are
$\scafS$-independent, so sensitivity here concerns the \emph{empirical}
magnitudes only.

\section{Experimental details}
\label{app:expdetails}

\subsection{Target-checked position population (SYN-1, Figure~\ref{fig:wedge})}
$n=50{,}000$ positions per setting, 3 seeds, $m=30$ scaffold
coordinates. A fraction $f=0.20$ of positions are
\emph{boundary-eligible}: gate mass
$s\sim\mathrm{Beta}(6\bar s,6(1-\bar s))$ with $\bar s$ the swept
parameter; interior positions $s\sim\mathrm{Beta}(1.5,60)$ (mean
$\approx0.024$). Scaffold-internal entropy
$\HS=\log(m)\cdot\mathrm{Beta}(2.5,2.5)$. Content uses an explicit
effective-support mixture (an earlier tiny-shape Dirichlet
parameterization underflowed in float64 and silently produced spurious
uniform-content positions; the effective-support formulation below fixes this issue): each
position has $k_{\mathrm{eff}}$ plausible continuations with
$\mathrm{Dirichlet}(\mathbf{1})$ weights, with
$k_{\mathrm{eff}}$-mixtures $(0.72,0.14,0.08,0.06)$ over
$\{1\},\{2\},\{3..8\},\{9..64\}$ for interior positions and
$(0.55,0.25,0.15,0.05)$ over $\{1\},\{2\},\{3..8\},\{6..16\}$ at
boundaries (more decided: after the connective decision the
continuation is constrained). The stated target, a low-entropy majority
with a boundary-concentrated high-entropy minority in the spirit of
\citet{wang2025beyond,zhao2026shorthand}, is \emph{verified in each run}:
at $\bar s{=}0.4$, $65\%$ of positions
fall below $0.7$ nats, the top-20\% threshold is $1.38$ nats, and
boundary slots occupy $58\%$ of the raw top-20\% against a $20\%$ base
rate. These remain author-chosen simulation parameters; nothing here is
a model measurement. Flip rates use $n$ random position pairs restricted
to strict non-ties (the tie fraction is reported); retention uses
largest-first top-20\% sets with index tie-breaking;
``scaffold-driven'' means $\Hb+s\HS>(1-s)\HC$ at the position.

\subsection{Constructed collapse trajectories (SYN-2,
Figure~\ref{fig:collapse})}
One population ($\bar s=0.35$, $f=0.35$, content concentration
$\log\alpha_c\sim\mathcal{N}(\log 0.006,1.2^2)$) is drawn once and
transformed along 12 checkpoints by the nuisance group only:
\emph{phantom}: boundary-slot gate shift $\beta:0\to4.5$ logits with
scaffold powering $a_{\scafS}:1\to10$ and content untouched;
\emph{masked}: $\beta:0\to1.5$, scaffold powering $1\to0.18$
(diversification), content powering $1\to8$. Because the content
draws are literally reused, ``content never moved'' is exact in the
phantom arm, and Definition~\ref{def:attr} (pointwise over the aligned
population, hence exact) attributes $-0.49$ vs $0.00$ nats (phantom)
and $+0.42$ vs $-0.32$ nats (masked). Gate shifts and scaffold powering
are elements of the nuisance group $G_{\scafS}$; content powering is
not, and is used only in the masked arm, whose point is a genuine
content collapse. Content compositions use the same underflow-safe
effective-support representation as SYN-1.

\subsection{In-silico GRPO (SYN-3, Figure~\ref{fig:insilico})}
Vocabulary $10$ scaffold $+30$ content tokens; $4$ instances; episodes
of $T=7$ positions with style slots $\{0,2,3,5,6\}$ and content forks
$\{1,4\}$; each fork of each instance accepts \emph{two} correct
content tokens (drawn without replacement); reward $=\tfrac12$ per
solved fork $+\lambda$ per style slot filled with any scaffold token.
Policy: tabular logits per (instance, position); GRPO-style updates:
group size $8$, advantages group-normalized, base step $0.9$
(fork positions scaled $\times1.0$); optional entropy intervention adds
the exact gradient $\partial H/\partial z_a=-p_a(\log p_a+H)$ at fork
positions with coefficient $0.4$ (the Clip-Cov/KL-Cov class acts
similarly on high-covariance tokens \citep{cui2025entropy}).
Conditions: $\lambda\in\{0,0.35\}\times$ intervention
$\in\{\text{off},\text{on}\}$, 4 seeds each, 220 steps, evals every 10.
Exploration metric, computed exactly from the policy (no sampling):
\emph{worst-case preference-shift pass@8}
$=\min_{o\in\{1,2\}}\big[1-\big(1-\prod_{\text{forks}}p(\text{token}_o)\big)^{8}\big]$,
which is near its maximum iff both correct tokens stayed alive at every
fork. The per-step channel prediction of
Proposition~\ref{prop:dynamics} is compared with realized entropy
differences; median relative error $0.0\%$ without and $3.2\%$ with
the intervention arm (larger steps), reported exactly as such.

\subsection{Stage 1: frozen models}
\textbf{Models.} EleutherAI Pythia $\{70\text{M},160\text{M},410\text{M},
1\text{B},1.4\text{B}\}$ and GPT-2 (teacher-forced);
Qwen2.5-1.5B-Instruct and DeepSeek-R1-Distill-Qwen-1.5B (generation;
optional 7B replication). \textbf{Data.} GSM8K test
\citep{cobbe2021gsm8k} with calculator annotations
\texttt{<<...>>} stripped; MATH-500 \citep{hendrycks2021math};
optional AIME. $n=512$ (teacher) / $256$ (GSM8K gen) / $200$ (MATH
gen) problems, 3 data-subsample seeds. \textbf{Teacher mode} scores
solution positions only, conditioning on the question.
\textbf{Generation} uses the chat template with ``reason step by step
\dots \verb|\boxed{}|'', $T{=}0.6$, top-$p{=}0.95$ (R1 model card);
sanity accuracy is recorded. Channel statistics use RAW
full-vocabulary logits, with a teacher-forced re-forward fallback;
post-processed generation scores (top-$p$ inserts $-\infty$) are never used for
measurement. Headline statistics carry problem-clustered bootstrap
CIs. \textbf{Channels} are streamed from
full-vocabulary logits: with $A_B=\sum_{v\in B}p_v\log p_v$,
$H=-(A_{\scafS}+A_{\contC})$, $\HS=-A_{\scafS}/s+\log s$,
$\HC=-A_{\contC}/(1-s)+\log(1-s)$, two masked sums per position, no
logit storage; per-token scalars
$(s,H,\HS,\HC,\Hb,G,\text{block},\text{seq},\text{pos})$ are retained for Stage 3.
\textbf{Reported} (Table~\ref{tab:frozen}): means, channel shares, the
pointwise variance split
$\operatorname{Var}H=\operatorname{Var}G+\operatorname{Var}\HC
+2\operatorname{Cov}(G,\HC)$, fork-flip rate on random pairs,
top-$\rho$ retention/Jaccard ($\rho\in\{0.1,0.2\}$), scaffold-driven
and realized-scaffold shares of the raw top set.

\subsection{Stage 2: GRPO}
TRL's GRPO trainer \citep{vonwerra2020trl}, Qwen2.5-0.5B-Instruct,
GSM8K train prompts ($n=2048$), exact-match reward on
\verb|\boxed{}|/\texttt{\#\#\#\#}, group $8$, completion cap $512$,
lr $10^{-6}$, no KL penalty, 300 steps, bf16, seeds $\{0,1,2\}$
budget-permitting. Every 20 steps the policy is frozen and measured on
64 held-out problems: channel decomposition along one sampled CoT per
problem plus pass@1/pass@8 at $T{=}1.0$; consecutive evals additionally
record Definition~\ref{def:attr}. The same measurement is applied to
the full checkpoint series, with completed measurements retained after
interruptions.

\subsection{Stage 3: re-audits}
\textbf{Selection} consumes the recorded Stage-1 per-token statistics (no model needed).
\textbf{Compression}: by default on the model's OWN
generated CoTs ($T{=}0.6$) with any \verb|\boxed{...}| span deleted;
reference solutions have the final-answer
line stripped, so the gold answer never appears in the compressed
prompt. Every arm keeps EXACTLY the same number of tokens per
problem (budget $=$ the matched feasible budget across arms). The
final protocol runs ten arms: raw surprisal
$-\log p(y_t)$, raw entropy $H(p_t)$, receiver attention, content
surprisal $-\log \pC(y_t)=-\log p(y_t)+\log(1-s_t)$ and content
entropy $\HC_t$ (scaffold tokens ineligible), random scattered keep,
random contiguous block, head block, recency (last tokens), and the
uncompressed chain; budgets $\{0.25,0.5,0.75\}$ on R1-Distill-1.5B
GSM8K plus $0.5$ cells on R1-Distill MATH-500 and Qwen2.5-1.5B-It
GSM8K; $n{=}512$ problems per seed ($256$ on MATH-500), three seeds,
generation cached once per (model, dataset, seed). Answers are decoded
greedily from ``\dots\texttt{The answer is}'' after the compressed
chain; comparisons use the exact McNemar sign test on pooled
discordant pairs plus a problem-clustered bootstrap. Three audits
extend the protocol: answer survival is reported per positional arm
and budget as the fraction of kept text still containing the gold
answer; the factorial adds raw score with content-only eligibility and
gate-adjusted score with all tokens eligible; and the stripping
condition deletes every exact, comma-free, and numeric-form occurrence
of the gold answer plus the chain's final sentence before compression.
\textbf{Anchors}: sentence spans by regex; importance $=$ mean
attention received from all later tokens, layer/head-averaged, with
rows either raw or renormalized over content columns; Kendall $\tau$
and top-1 flips per problem. \textbf{Eviction}: token-level attention
received, both conventions, keep-set Jaccard at budgets
$\{0.3,0.5\}$. Anchors and eviction defaults: R1-Distill-Qwen-1.5B,
GSM8K, $n=128$, 3 seeds.

\subsection{Answer distributions and numerics}
For the answer-distribution analysis, $k=32$ samples per problem ($T{=}0.7$),
$n=64$ problems, answers clustered by normalized final string; we report
per-question diversity-verdict flip rates between keep- and
drop-the-majority conventions for base vs R1-distilled models.
\textbf{Numerics.} CPU analyses in float64; GPU logits cast to float32
and all channel quantities computed \emph{blockwise in log space}: with
logits $z$, $\log Z_{\scafS}{=}\operatorname{lse}(z|_{\scafS})$,
$\log Z_{\contC}{=}\operatorname{lse}(z|_{\contC})$,
$\log Z{=}\operatorname{lse}(z)$ give
$\log s=\log Z_{\scafS}-\log Z$ and $\log(1{-}s)=\log Z_{\contC}-\log Z$
exactly; $\HS$ and $\HC$ are entropies of the within-block log-softmaxes
$z|_{\scafS}-\log Z_{\scafS}$ and $z|_{\contC}-\log Z_{\contC}$; $\Hb$
uses $x\log x$ with its $0$-limit. No clamping of $s$ occurs and no
boundary convention is needed for the generic position: even where
$1-s$ underflows in float32 (reasoning models place numerically all
mass on scaffold ids at $1$--$3\%$ of positions), $\HC$ is the entropy
of the correct conditional distribution, computed stably from logit
differences. The estimator still flags the residual positions whose
content-block mass underflows beyond what the blockwise computation
can certify; these are excluded from rankings and their fraction
reported in the main text (at most $0.2\%$ teacher-forced, $4.9\%$
generated), so the exclusions of \S\ref{sec:protocol} and the
stability claim here describe the same estimator.  An
earlier draft used a clamped parameterization with an $\HC{:=}0$
convention at underflow; that convention is \emph{not} a continuity
limit ($\HC$ has no limit as $s\to1$:
$p_\varepsilon=(1{-}\varepsilon,\varepsilon/2,\varepsilon/2)$ keeps
$\HC=\log 2$ for every $\varepsilon$), so all generate-mode cells are
re-measured with the exact estimator; the convention-era estimates were
retained solely for audit comparisons.
All GPU experiments were run on a workstation equipped with two NVIDIA
GeForce RTX 4090 GPUs (24 GB of memory each) and 64 logical CPUs.
Runtimes on 2$\times$RTX 4090: Stage 1 $\approx$4--6 h (3 seeds),
Stage 2 $\approx$6--10 h per seed, Stage 3 $\approx$2--3 h.

\section{Additional results}
\label{app:additional}

\begin{figure}[t]
\centering
\includegraphics[width=0.8\textwidth]{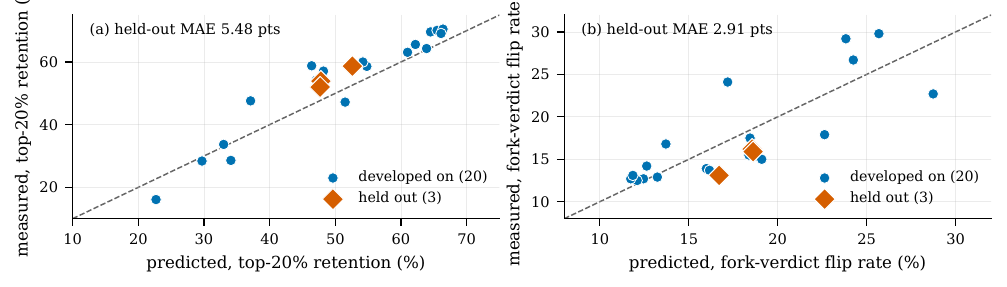}
\caption{\textbf{Forecast against measurement, twenty-three cells, three held out.} Each point is one (model, dataset, mode, scaffold) cell; the forecast uses only $r=\operatorname{corr}(H,\HC)$ through the Gaussian specialization of Theorem~\ref{thm:reversal}(iii); dashed line is $y=x$; diamonds are the cells first measured after registration. (a) Retention carries no fitted parameter. (b) Flip carries the one frozen constant $c=0.5988$.}
\label{fig:forecast}
\end{figure}

\begin{figure}[t]
\centering
\includegraphics[width=0.72\textwidth]{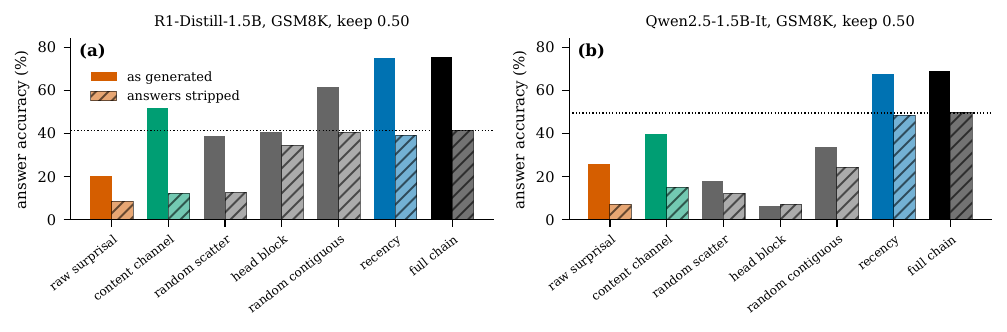}
\caption{\textbf{What stripping the restated answers does to the compression arms} (GSM8K, keep $0.50$, $512$ problems $\times$ three seeds; solid bars as generated, hatched bars after every gold-answer restatement and the final sentence are deleted; dotted line is the stripped full-chain ceiling). (a) On the distilled model the ceiling halves and the recency arm collapses onto the random contiguous block: its tail premium was largely copyable answer text. (b) On the instruct model a real tail premium survives stripping. Numbers, budgets, and clustered intervals in Table~\ref{tab:compress} and \S\ref{sec:decisions}.}
\label{fig:decisions}
\end{figure}

\textbf{Variance occupancy.} The pointwise identity $H=G+\HC$ gives
$\operatorname{Var}(H)=\operatorname{Var}(G)+\operatorname{Var}(\HC)
+2\operatorname{Cov}(G,\HC)$ per corpus; measured cells (mean over 3
seeds, all $\times\operatorname{Var}H$), as $\operatorname{Var}G$ /
$\operatorname{Var}\HC$ / $2\operatorname{Cov}$: Pythia-70M
$0.39/1.10/{-}0.49$; Pythia-160M $0.46/1.17/{-}0.63$; Pythia-410M
$0.47/1.31/{-}0.78$; Pythia-1B $0.47/1.25/{-}0.73$; Pythia-1.4B
$0.62/1.51/{-}1.13$; GPT-2 $0.41/1.17/{-}0.59$; and, for
the generating reasoning models (exact blockwise estimator): Qwen2.5-1.5B-It
$3.89/4.14/{-}7.02$, R1-Distill-1.5B (GSM8K) $1.96/2.34/{-}3.30$,
R1-Distill-1.5B (MATH) $2.37/2.75/{-}4.11$; each triple sums to
$1.00$ exactly, as the identity requires.  For reasoning-tuned models
the raw statistic is thus the \emph{smaller residual of two larger,
strongly anticorrelated channels}
($\operatorname{corr}(G,\HC)=-0.87/-0.77/-0.81$ respectively): per-token
surplus and content entropy each carry $2.0$--$4.1\times$ the variance
of $H$ itself, so a raw-$H$ monitor understates channel movement
several-fold, a measured form of the conflation claim, and the
mechanism behind the low top-20\% retention of Table~\ref{tab:frozen}.
(A convention-era draft reported $17$--$141\times$ here; those
magnitudes were artifacts of the clamped $\HC{:=}0$ estimator at
scaffold-certain positions; the exact estimator of
App.~\ref{app:expdetails} yields the figures above, reanalysis under the
earlier convention reproduces the old values, and the correction is reported explicitly.) In the target-checked
population at $\bar s=0.4$:
$\operatorname{Var}(G)=0.29\operatorname{Var}(H)$,
$\operatorname{Var}(\HC)=0.80\operatorname{Var}(H)$,
$2\operatorname{Cov}=-0.08\operatorname{Var}(H)$
($\operatorname{corr}(G,\HC)=-0.09$): the surplus carries roughly a
third of the raw statistic's variance and is nearly uncorrelated with
content entropy, so it reshuffles the raw ranking without inflating its
spread, which is why similar-looking $H$ and $\HC$ histograms
nonetheless select visibly different top-20\% sets
(Figure~\ref{fig:wedge}b).

Because $G\equiv H-\HC$ constrains the three terms above to sum to
$\operatorname{Var}(H)$, large near-cancelling terms are
\emph{equivalent} to $\operatorname{corr}(G,\HC)\to-1$; the finding is
the magnitudes, not the cancellation. The constraint-free view
decomposes $H=\Hb+s\HS+(1-s)\HC$ into three \emph{additive} terms.
The corresponding analysis reports $\operatorname{Var}(\Hb)$,
$\operatorname{Var}(s\HS)$, $\operatorname{Var}((1-s)\HC)$, and twice
the summed covariances, each as a share of $\operatorname{Var}(H)$,
together with the correlation between the scaffold side $\Hb+s\HS$ and
the content side $(1-s)\HC$. Measured (exact estimator, 3 seeds):
for the generating reasoning models the additive content term carries
$58$--$66\%$ of $\operatorname{Var}(H)$ (Qwen-It $0.66$, R1 GSM8K
$0.61$, R1 MATH $0.58$), the scaffold-side terms
$\operatorname{Var}(\Hb)+\operatorname{Var}(s\HS)$ carry $15$--$20\%$
($0.09{+}0.07$, $0.08{+}0.12$, $0.07{+}0.13$), and the summed
covariances $+17$--$21\%$, with
$\operatorname{corr}(\Hb{+}s\HS,\,(1{-}s)\HC)=+0.15$--$+0.16$; for
teacher-forced base models the content term is $80$--$95\%$ and the
scaffold-side terms $4$--$8\%$.  In this constraint-free coordinate
system no cancellation occurs: the scaffold side contributes an additional
$15$--$20\%$ of variance for tuned models ($2$--$4\times$ the base
share), and the mild \emph{positive} coupling replaces the mechanical
$-0.8$ correlation of the $(G,\HC)$ parameterization.

\textbf{Scaffold-definition sensitivity.} Measured on Pythia-410M
(GSM8K, teacher-forced, 3 seeds; App.~\ref{app:scaffold} protocol), same columns as
Table~\ref{tab:frozen}, now joined by the same
\textsc{pos}/\textsc{data} sweep measured on the two generating
reasoning models of Table~\ref{tab:frozen} (R1-Distill-1.5B and
Qwen2.5-1.5B-It, three seeds each, exact estimator). The \textsc{fixed} rows repeat the repaired
Table~\ref{tab:frozen} cells verbatim so the sweep is comparable
within one estimator:

{\centering\scriptsize\setlength{\tabcolsep}{3.5pt}
\begin{tabular}{lccccccccc}
\toprule
scaffold def. & $\bar s$ & $\bar H$ & gate\% & scaf\% & cont\% & flip\% & ret@20\% & scaf-drv\% & real-$\scafS$\% \\
\midrule
Pythia-410M \textsc{fixed} & $0.18{\scriptstyle\pm 0.0}$ & $2.25{\scriptstyle\pm 0.0}$ & $9.8{\scriptstyle\pm 0.0}$ & $8.1{\scriptstyle\pm 0.0}$ & $82.0{\scriptstyle\pm 0.1}$ & $12.5{\scriptstyle\pm 0.2}$ & $70.1{\scriptstyle\pm 0.1}$ & $0.4{\scriptstyle\pm 0.0}$ & $13.4{\scriptstyle\pm 0.2}$ \\
Pythia-410M \textsc{pos} & $0.38{\scriptstyle\pm 0.0}$ & $2.25{\scriptstyle\pm 0.0}$ & $13.1{\scriptstyle\pm 0.0}$ & $26.1{\scriptstyle\pm 0.1}$ & $60.7{\scriptstyle\pm 0.1}$ & $24.1{\scriptstyle\pm 0.2}$ & $47.2{\scriptstyle\pm 0.3}$ & $28.9{\scriptstyle\pm 0.6}$ & $29.9{\scriptstyle\pm 0.4}$ \\
Pythia-410M \textsc{data} & $0.25{\scriptstyle\pm 0.0}$ & $2.25{\scriptstyle\pm 0.0}$ & $13.0{\scriptstyle\pm 0.0}$ & $10.7{\scriptstyle\pm 0.1}$ & $76.4{\scriptstyle\pm 0.1}$ & $16.8{\scriptstyle\pm 0.1}$ & $63.1{\scriptstyle\pm 0.2}$ & $1.5{\scriptstyle\pm 0.2}$ & $26.0{\scriptstyle\pm 0.6}$ \\
\midrule
Qwen-It gen \textsc{fixed} & $0.26{\scriptstyle\pm 0.0}$ & $0.26{\scriptstyle\pm 0.0}$ & $20.0{\scriptstyle\pm 0.1}$ & $11.6{\scriptstyle\pm 0.1}$ & $68.5{\scriptstyle\pm 0.1}$ & $17.9{\scriptstyle\pm 0.2}$ & $47.6{\scriptstyle\pm 0.3}$ & $32.9{\scriptstyle\pm 0.2}$ & $22.8{\scriptstyle\pm 0.1}$ \\
Qwen-It gen \textsc{pos} & $0.40{\scriptstyle\pm 0.0}$ & $0.26{\scriptstyle\pm 0.0}$ & $25.8{\scriptstyle\pm 0.1}$ & $32.8{\scriptstyle\pm 0.3}$ & $41.5{\scriptstyle\pm 0.4}$ & $29.8{\scriptstyle\pm 0.3}$ & $28.4{\scriptstyle\pm 0.6}$ & $63.0{\scriptstyle\pm 0.4}$ & $45.4{\scriptstyle\pm 0.4}$ \\
Qwen-It gen \textsc{data} & $0.35{\scriptstyle\pm 0.0}$ & $0.25{\scriptstyle\pm 0.0}$ & $29.7{\scriptstyle\pm 0.7}$ & $15.1{\scriptstyle\pm 0.3}$ & $55.1{\scriptstyle\pm 0.9}$ & $29.2{\scriptstyle\pm 0.7}$ & $28.6{\scriptstyle\pm 1.3}$ & $49.6{\scriptstyle\pm 1.0}$ & $30.9{\scriptstyle\pm 0.9}$ \\
R1 gen \textsc{fixed} & $0.28{\scriptstyle\pm 0.0}$ & $0.56{\scriptstyle\pm 0.0}$ & $20.3{\scriptstyle\pm 0.1}$ & $17.5{\scriptstyle\pm 0.1}$ & $62.2{\scriptstyle\pm 0.1}$ & $15.5{\scriptstyle\pm 0.1}$ & $57.1{\scriptstyle\pm 0.3}$ & $34.6{\scriptstyle\pm 0.2}$ & $31.8{\scriptstyle\pm 0.1}$ \\
R1 gen \textsc{pos} & $0.48{\scriptstyle\pm 0.0}$ & $0.56{\scriptstyle\pm 0.0}$ & $22.6{\scriptstyle\pm 0.2}$ & $46.2{\scriptstyle\pm 0.1}$ & $31.3{\scriptstyle\pm 0.1}$ & $26.7{\scriptstyle\pm 0.2}$ & $33.7{\scriptstyle\pm 0.4}$ & $73.9{\scriptstyle\pm 0.1}$ & $63.8{\scriptstyle\pm 0.1}$ \\
R1 gen \textsc{data} & $0.29{\scriptstyle\pm 0.0}$ & $0.55{\scriptstyle\pm 0.0}$ & $26.2{\scriptstyle\pm 0.2}$ & $16.7{\scriptstyle\pm 1.0}$ & $57.1{\scriptstyle\pm 0.9}$ & $17.5{\scriptstyle\pm 0.5}$ & $55.6{\scriptstyle\pm 1.0}$ & $44.1{\scriptstyle\pm 2.1}$ & $39.6{\scriptstyle\pm 1.2}$ \\
\bottomrule
\end{tabular}\par}

\smallskip
\noindent As the paper states, the identities are
$\scafS$-independent while the empirical magnitudes are not: widening
$\scafS$ to the closed-class superset (\textsc{pos}) doubles $\bar s$
($0.18\to0.38$) and the flip rate ($12.5\to24.1\%$) and raises the
scaffold-driven share of the raw top-20\% to $28.9\%$ even for a base
model, while the model-relative \textsc{data} definition moves the
cells far less.  $\bar H$ is unchanged by construction.  All principal
comparisons in the main text therefore hold the definition
(\textsc{fixed}) constant across models; the base-vs-reasoning
contrast of Table~\ref{tab:frozen} is a statement at fixed
convention, not a convention-free constant.

\textbf{Base-model rows omitted from Table~\ref{tab:frozen}} (same columns, \textsc{fixed} scaffold, teacher-forced GSM8K, mean$\pm$sem over 3 seeds):

{\centering\scriptsize\setlength{\tabcolsep}{3.5pt}
\begin{tabular}{lccccccccc}
\toprule
model & $\bar s$ & $\bar H$ & gate\% & scaf\% & cont\% & flip\% & ret@20\% & scaf-drv\% & real-$\scafS$\% \\
\midrule
Pythia-70M & $0.16$ & $3.08$ & $7.9{\scriptstyle\pm 0.0}$ & $5.9{\scriptstyle\pm 0.0}$ & $86.2{\scriptstyle\pm 0.0}$ & $12.7{\scriptstyle\pm 0.1}$ & $70.5{\scriptstyle\pm 0.3}$ & $0.0{\scriptstyle\pm 0.0}$ & $9.1{\scriptstyle\pm 0.2}$ \\
Pythia-160M & $0.17$ & $2.85$ & $7.9{\scriptstyle\pm 0.0}$ & $6.2{\scriptstyle\pm 0.0}$ & $85.9{\scriptstyle\pm 0.0}$ & $14.2{\scriptstyle\pm 0.1}$ & $64.3{\scriptstyle\pm 0.2}$ & $0.0{\scriptstyle\pm 0.0}$ & $8.1{\scriptstyle\pm 0.1}$ \\
Pythia-1B & $0.18$ & $2.06$ & $10.1{\scriptstyle\pm 0.0}$ & $8.5{\scriptstyle\pm 0.0}$ & $81.4{\scriptstyle\pm 0.0}$ & $12.7{\scriptstyle\pm 0.2}$ & $69.6{\scriptstyle\pm 0.1}$ & $0.6{\scriptstyle\pm 0.0}$ & $15.4{\scriptstyle\pm 0.2}$ \\
\bottomrule
\end{tabular}\par}

\smallskip
\textbf{Stage-2b intervention arms} (Qwen2.5-0.5B, GSM8K, $300$ steps, matched update counts; the table \S\ref{sec:training} summarizes):

{\centering\scriptsize\setlength{\tabcolsep}{4pt}
\begin{tabular}{llcccccc}
\toprule
select-by & ent-target & pass@1\% & pass@8\% & $\bar H$ & $\bar\HC$ & $\bar s$ & seeds \\
\midrule
none & content & $8.9{\scriptstyle\pm 1.9}$ & $38.0{\scriptstyle\pm 4.5}$ & $0.65{\scriptstyle\pm 0.03}$ & $1.14{\scriptstyle\pm 0.01}$ & $0.23{\scriptstyle\pm 0.00}$ & 3 \\
none & none & $11.5{\scriptstyle\pm 1.9}$ & $39.6{\scriptstyle\pm 5.1}$ & $0.58{\scriptstyle\pm 0.02}$ & $1.06{\scriptstyle\pm 0.04}$ & $0.24{\scriptstyle\pm 0.01}$ & 3 \\
none & raw & $9.4{\scriptstyle\pm 0.9}$ & $43.2{\scriptstyle\pm 8.1}$ & $0.61{\scriptstyle\pm 0.03}$ & $1.16{\scriptstyle\pm 0.03}$ & $0.24{\scriptstyle\pm 0.01}$ & 3 \\
\bottomrule
\end{tabular}\par}

\smallskip
\textbf{Answer-distribution object.} The instance is specified and implemented, but is not measured in this submission:
majority mass $m$, $H$ vs
$H(\pi)$, and the keep/drop-the-majority verdict-flip rate, connecting
the pass@$k$-inversion literature \citep{yue2025limit} to the
convention question.

\section{MoE routers as compositions (cross-object generality II)}
\label{app:router}

Router softmax rows over $E$ experts are compositions with documented
dominant/shared-expert structure (routing collapse, hot/cold experts).
Designating the shared or empirically dominant expert(s) as the block
$\scafS$, Proposition~\ref{thm:chain} splits router entropy, the
load-balancing signal, into a gate channel (shared-vs-routed mass), a
within-block channel, and a routed-expert channel, and
Proposition~\ref{prop:invariance} identifies which
expert-specialization statistics are convention-independent.
The implemented analysis hooks every linear gate or router module
(OLMoE-1B-7B; Qwen1.5-MoE), streams router log-softmaxes over GSM8K
CoTs, and reports per-layer $(s,\Hb,\HS,\HC)$ plus the share of
adjacent-layer ``routing collapse'' transitions on which $H$ and $\HC$
disagree in sign. This instance is specified and implemented, and is not measured in this submission. The point of the section is
structural: one identity, three measurement literatures (attention
rows in the companion; CoT tokens and answers here; routers), one
convention audit.

\section{Relation to the companion manuscript}
\label{app:portmap}

For reviewers of both papers, stated once and plainly. The companion
studies attention rows, whose sink is a single canonical coordinate, and
proves sink/content separation for similarity, entropy, taxonomies, and
pruning. This paper studies next-token
policies, whose ``sink'' is a chosen \emph{set} with internal structure,
and shares with the companion exactly the static skeleton: the grouping
identity (there $H=\Hb(s)+(1-s)H(\pi)$, here with the extra $s\HS$
term), the partition split of $\dA$, and a reversal theorem. Everything
that makes this paper's principal claims, the leakage bound and its
tight coin example, the routing information $\kappa_t$ and the
first-passage estimand $q_t$, the optimizer-resolved dynamics of
Proposition~\ref{prop:dynamics} and Corollary~\ref{cor:reinforce}, and
the intervention protocol of Stage 2b, has no attention-row
counterpart, because attention rows do not train against rewards and do
not carry answers. We keep the shared machinery deliberately identical
so that a reader of either paper can audit the other, and we flag every
ported statement as classical or ported at the point of use.
Two statements are the companion's general theorems reproduced here
with full proofs because the CoT protocol uses them directly:
Proposition~\ref{prop:unmatched-sink-regimes} (unmatched-sink
regimes, applied to unlocked-gate checkpoint drift) and
Theorem~\ref{thm:no-free-lunch-corrected} (no free lunch, applied to
cross-tokenizer comparisons); neither is claimed as new to this paper,
and the scaffold-\emph{set} content in \S\ref{sec:instruments} is
the exact lumping reduction that makes them apply.

\end{document}